%% file: main.tex
\documentclass[11pt,letterpaper]{article}

\usepackage[margin=1.2in]{geometry}
\usepackage[skip=6pt]{parskip}

\usepackage{microtype}
\usepackage{graphicx}
\usepackage{subcaption}
\usepackage{booktabs}
\usepackage{xcolor}

\definecolor{paperblue}{rgb}{0,0.10,0.45}
\usepackage[
    colorlinks=true,
    linkcolor=paperblue,
    citecolor=paperblue,
    urlcolor=paperblue
]{hyperref}

\usepackage{amsmath}
\usepackage{amssymb}
\usepackage{mathtools}
\usepackage{amsthm}

\usepackage[capitalize,noabbrev]{cleveref}

\theoremstyle{plain}
\newtheorem{theorem}{Theorem}[section]

\newtheorem{lemma}[theorem]{Lemma}
\newtheorem{corollary}[theorem]{Corollary}
\theoremstyle{definition}
\newtheorem{definition}[theorem]{Definition}

\theoremstyle{remark}

\Crefname{definition}{Definition}{Definitions}

\newtheorem{claim}{Claim}

\newtheorem*{definition*}{Definition} 

\usepackage{thm-restate}
\usepackage{enumitem}
\usepackage{tabularx}
\usepackage{tikz}

\input{macros}

\RequirePackage{algorithm}
\RequirePackage{algpseudocodex}
\RequirePackage{natbib}
\RequirePackage{eso-pic}
\RequirePackage{forloop}
\RequirePackage{url}
\RequirePackage{caption}

\title{Language Generation with Replay:\\A Learning-Theoretic View of Model Collapse}
\author{
    Giorgio Racca\\
    University of Copenhagen\\
    \texttt{g.racca@di.ku.dk}
    \and
    Michal Valko\\
    Isara Labs\\
    \texttt{michal@isara.io}
    \and
    Amartya Sanyal\\
    University of Copenhagen\\
    \texttt{amsa@di.ku.dk}
}
\date{}

\begin{document}
\maketitle

\begin{abstract}
As scaling laws push the training of frontier large language models (LLMs) toward ever-growing data requirements, training pipelines are approaching a regime where much of the publicly available online text may be consumed.
At the same time, widespread LLM usage increases the volume of machine-generated content on the web; together, these trends raise the likelihood of generated text re-entering future training corpora, increasing the associated risk of performance degradation often called \emph{model collapse}.
In practice, model developers address this concern through data cleaning, watermarking, synthetic-data policies, or, in some cases, blissful ignorance. %
However, the problem of model collapse in generative models has not been examined from a learning-theoretic perspective: we study it through the theoretical lens of the \emph{language generation in the limit} framework, introducing a \emph{replay} adversary that augments the example stream with the generator's own past outputs.
Our main contribution is a fine-grained learning-theoretic characterization of when replay fundamentally limits generation: while replay is benign for the strongest notion of uniform generation, it provably creates separations for the weaker notions of non-uniform generation and generation in the limit.
Interestingly, our positive results mirror heuristics widely used in practice, such as data cleaning, watermarking, and output filtering, while our separations show when these ideas can fail.%
\end{abstract}

\input{sections/01-intro}
\input{sections/02-related-work.tex}
\input{sections/03-setup-results}

\input{sections/04-uniform-with-replay}

\input{sections/05-non-uniform-with-replay}

\input{sections/06-in-the-limit-with-replay}

\input{sections/07-proper}

\input{sections/08-discussion}

\section*{Acknowledgements}
GR and AS acknowledge the Novo Nordisk Foundation for support via the Startup grant (NNF\allowbreak24OC0087820); AS additionally acknowledges support from VILLUM FONDEN via the Young Investigator program (VIL72069).
The authors also thank Carolin Heinzler for helpful feedback, as well as the anonymous reviewers for their constructive comments.

\begingroup
\setlength{\bibsep}{5.85pt plus 0.3pt}
\bibliographystyle{plainnat}
\bibliography{ref}
\endgroup

\newpage
\appendix
\onecolumn

\input{appendix/A-overview-generation}

\end{document}

%% file: macros.tex
\makeatletter

\def\LatinUpper{A,B,C,D,E,F,G,H,I,J,K,L,M,N,O,P,Q,R,S,T,U,V,W,X,Y,Z}

\newcommand{\genCal}[1]{\expandafter\newcommand\csname c#1\endcsname{{\mathcal #1}}}
\@for\q:=\LatinUpper\do{%
	\expandafter\genCal\q
}

\newcommand{\genBb}[1]{\expandafter\newcommand\csname b#1\endcsname{{\mathbb #1}}}
\@for\q:=\LatinUpper\do{%
	\expandafter\genBb\q
}

\makeatother

\newcommand{\bc}[1]{\left\{{#1}\right\}}
\newcommand{\br}[1]{\left({#1}\right)}

\newcommand{\abs}[1]{\left| {#1} \right|}

\newcommand{\supp}[1]{\mathop{\mathrm{supp}}\br{#1}}

\colorlet{coltgood}{green!45!black}
\colorlet{coltbad}{red!70!black}

\newcommand{\cmark}{\large\textcolor{coltgood}
{\ensuremath{\checkmark}}}
\newcommand{\xmark}{\large\textcolor{coltbad}
{\ensuremath{\times}}}
\newcommand{\thmref}[1]{{\scriptsize\,(\ref{#1})}}

\colorlet{coltboxframe}{white}%
\colorlet{coltboxbg}{black!4}%

\newcommand{\paperbox}[5]{%
\par\addvspace{0.65\baselineskip}%
  \noindent\begingroup%
  \setlength{\fboxsep}{9.5pt}%
  \setlength{\fboxrule}{0.4pt}%
  \makebox[\linewidth][c]{%
    \fcolorbox{#2}{#3}{%
      \begin{minipage}{\dimexpr#1-2\fboxsep-2\fboxrule\relax}%
        #4\ignorespaces#5\unskip%
      \end{minipage}%
    }%
  }%
  \endgroup%
  \par\addvspace{0.65\baselineskip}%
}

\newcommand{\contribbox}[2]{%
  \paperbox{.90\linewidth}{coltboxframe}{coltboxbg}{\textbf{#1}}{\hspace{0.6em}#2}%
}

\newcommand{\gamebox}[2]{%
  \paperbox{.90\linewidth}{coltboxframe}{coltboxbg}{{\centering\textbf{#1}\par\vspace{0.4ex}}}{#2}%
}

\makeatletter
\newcommand{\DrawGrid}[9]{%
\begin{tikzpicture}[x=0.9cm,y=0.9cm]
  \def\xmax{#1}  
  \def\ymax{#2} 
  \def\I{#3}      
  \def\J{#4}

  \draw[thick] (0,0) -- (\xmax+1,0) node[below right] {$\cH$};
  \draw[thick] (0,0) -- (0,\ymax+1) node[above left] {$\cX$};

  \foreach \yy in {#7}{%
    \expandafter\gdef\csname Y@\yy\endcsname{}%
  }
  \foreach \yy in {#8}{%
    \expandafter\gdef\csname E@\yy\endcsname{}%
  }
  \tikzset{
    ycircle/.style={draw, rounded corners=1pt, inner sep=3.5pt},%
    ycirclefill/.style={ycircle, fill=gray!27},%
  }

  \foreach \x in {1,...,\xmax} {
    \draw (\x,0) -- (\x,-0.15);
    \node[below] at (\x,-0.15) {\x};
  }
  \foreach \y in {1,...,\ymax} {
    \draw (0,\y) -- (-0.15,\y);
    \ifcsname E@\y\endcsname
      \node[left, ycirclefill] at (-0.15,\y) {\y};
    \else
      \ifcsname Y@\y\endcsname
        \node[left, ycircle] at (-0.15,\y) {\y};
      \else
        \node[left] at (-0.15,\y) {\y};
      \fi
    \fi
  }
  \node[below] at (\xmax+0.65,-0.2) {$\cdots$};
  \node[left]  at (-0.2,\ymax+0.8) {$\vdots$};

  \draw[dashed] (0, \J+0.5) -- (\I+0.5, \J+0.5);
  \draw[dashed] (\I+0.5, 0) -- (\I+0.5, \J+0.5);

  \foreach \a/\b in {#5}{
    \expandafter\gdef\csname Z@\a/\b\endcsname{}
  }

  \foreach \x in {1,...,\xmax} {
    \foreach \y in {1,...,\ymax} {
      \ifcsname Z@\x/\y\endcsname
        \node[#6] at (\x,\y) {\scalebox{1.6}{\(\circ\)}};
      \else
        \node[#6] at (\x,\y) {\scalebox{1.6}{\(\bullet\)}};
      \fi
    }
  }

  \node[anchor=north, font=\small] at ({(\xmax+1.2)/2}, -0.85) {#9};
\end{tikzpicture}%
}
\makeatother

%% file: sections/01-intro.tex
\section{Introduction}

Large language models (LLMs) are increasingly trained on web-scale
corpora containing significant volumes of machine-generated text. As
this fraction grows, a central concern is \emph{model collapse}~\citep{shumailov2024ai}:
the degradation of future models due to training on the outputs of their
predecessors, effectively inflating the token count without adding new knowledge.
While empirical evidence for such harmful feedback is accumulating,
a principled theoretical understanding of when such feedback \emph{fundamentally}
limits the ability to generate language remains lacking.

We address this by building on the framework of \emph{language
generation in the limit}~\citep{kleinberg2024language}. 
Inspired by the classical literature on language
identification~\citep{gold1967language,angluin1980inductive}, this
framework abstracts away specific language model architectures 
and training algorithms and studies generation as an interactive game. First, an
adversary secretly selects a target language from a known class and then reveals an adversarially ordered stream of
valid examples from that language; the generator is required to eventually produce an
infinite sequence of previously unseen elements from the target language.\looseness=-1

In this work, we propose a replay variant of the generation game, 
\emph{language generation with replay}, that provides a minimal abstraction
of the feedback loop underlying model collapse. In addition to valid examples from the
target language, the adversary may inject the generator's own previous outputs into the example
stream. This \emph{replay} mechanism models synthetic content re-entering the data stream,
a phenomenon that has been shown to be responsible for model collapse~\citep{shumailov2024ai}. \looseness=-1

Recent work~\citep{dmitriev2026learning} highlights the theoretical challenges posed by
replay in online learning: a replay adversary can systematically mislead classical
online learning algorithms, leading to strong separations between
classical online learnability and online learnability under
replay. We ask whether replay has an analogous
effect on language generation. \looseness=-1
\contribbox{Question.}{
  Does the presence of replay, where a generator is trained on its own past outputs, make language generation fundamentally harder?
}

We answer this question with a fine-grained characterization across the main notions of \emph{generatability}.
The main takeaway is that the effect of replay depends in a non-trivial way on the specific generation guarantee desired and on the complexity of the hypothesis class.
\Cref{tab:replay-separation-classes} summarizes our main results.
When replay does not affect generatability, we provide algorithms matching the guarantees of the standard setting; when it does, we
construct hard instances that witness the separation. \looseness=-1

\begin{table}[H]
  \centering
  \caption{When is generatability unaffected by replay?}
  \label{tab:replay-separation-classes}
  {\begin{tabular}{@{}lccc@{}}
    \toprule
    \textbf{Generation notion} & \textbf{Finite} \(\cH\) & \textbf{Countable} \(\cH\) & \textbf{General} \(\cH\)\\
    \midrule
    Uniform
    & \cmark\thmref{thm:uniform_with_replay} & \cmark\thmref{thm:uniform_with_replay} & \cmark\thmref{thm:uniform_with_replay} \\
    \addlinespace[0.4em]
    Non-uniform
    & \cmark\thmref{thm:uniform_with_replay} & \xmark\thmref{thm:hardness-non-unif} & \xmark\thmref{thm:hardness-non-unif} \\
    \addlinespace[0.4em]
    In the limit
    & \cmark\thmref{thm:limit_replay_mq_only} & \cmark\thmref{thm:limit_replay_mq_only} & \xmark\thmref{thm:separation-lim-with-replay} \\
    \addlinespace[0.4em]
    Proper in the limit
    & \xmark\thmref{thm:hardness-proper-replay} & \xmark\thmref{thm:hardness-proper-replay} & \xmark\thmref{thm:hardness-proper-replay} \\
    \bottomrule
  \end{tabular}\par}
  \vspace{0.35em}
  {\footnotesize
    {\cmark}: same guarantees as the standard setting.~{\xmark}: strict separation from the standard setting.\\
    Parenthesized numbers indicate the theorem establishing the entry.
  \par}
   \vskip -0.1in
\end{table}

The remainder of the paper is organized as follows.
\Cref{sec:related-work} discusses related work, while \Cref{sec:setup} introduces the replay model and states our contributions.
\Cref{sec:uniform} shows that uniform generation is equivalent in the standard and the replay model.
\Cref{sec:non-unif} and \Cref{sec:limit} show that replay creates strict separations for non-uniform generation and generation in the limit, respectively, while also identifying regimes where these separations disappear.
\Cref{sec:proper} shows that replay is even more restrictive for proper generation, where a separation already holds for finite classes.
Finally, \Cref{sec:discussion} discusses implications and open questions.

%% file: sections/02-related-work.tex
\section{Related Work}
\label{sec:related-work}

\paragraph{Generation in the limit.}
\citet{kleinberg2024language} showed that \emph{all} countable classes are generatable in the limit, whereas \emph{identification} in the limit is only possible under restrictive assumptions~\citep{gold1967language,angluin1980inductive}.
This surprising separation sparked a flurry of follow-up work.
One important line of research focuses on contrasting the notion of generatability with that of learnability~\citep{li2024generation,bai2025language,hanneke2025union}; in particular, \citet{li2024generation} provided a characterization of uniform and non-uniform generatability via a novel combinatorial dimension.
Another prominent line of follow-up work investigates the trade-off between avoiding hallucinations and maintaining output variety, as measured by ``breadth''~\citep{charikar2024exploring,kalavasis2025limits,kalavasis2412characterizations}, ``density''~\citep{kleinberg2025density}, or being ``representative''~\citep{peale2025representative}.

\paragraph{Robust generation in the limit.}
The line of research most closely related to ours extends the framework of language generation in the limit to allow \emph{contaminations} in the example stream, in the form of incorrect examples and omissions.
Earlier work on generation from noisy examples focuses on the setting where the adversary is allowed to insert a \emph{finite} number of \emph{arbitrarily noisy} examples \citep{raman2025generation, bai2025language}.
Our setting is, in some sense, stronger, as we allow for an \emph{infinite} number of noisy examples, and at the same time weaker, as we restrict the noisy examples to be among \emph{previous outputs} of the generator.
Recently, \citet{mehrotra2025language} studied the robustness of dense and non-dense generation in the limit under different regimes of \emph{infinite} contamination.
Our work differs from theirs in that the contamination rate is endogenously determined by the generator.

\paragraph{Replay adversary in online learning.}
In the online learning setting,~\citet{dmitriev2026learning} considered a similar \emph{replay} adversary that can use previously output hypotheses to provide noisy labels.
They showed that replay induces a separation from standard online learning and introduced a combinatorial dimension characterizing learnability in the replay setting.
Their work can also be viewed as an attempt to formalize model collapse through the lens of learning theory.
However, they focus on a supervised setting, while our analysis is tailored to the task of generation.

\paragraph{Model collapse.}
A growing body of research, often grouped under the umbrella term \emph{model collapse}~\citep{shumailov2024ai}, examines the risks of recursively training models on the outputs of earlier generations, showing that this feedback loop can cause distribution tails to be forgotten.
Although the extent and inevitability of this effect remain debated, the evidence overall suggests that, without an adequate supply of high-quality data, model performance gradually deteriorates~\citep{shumailov2023curse, martinez2023towards, briesch2023large, alemohammad2023self, zhang2024regurgitative, bohacek2023nepotistically}.
Existing theoretical work studies model collapse from different perspectives, including likelihood-based analyses~\citep{bertrand2023stability,suresh2024rate,barzilai2026models}, discrete-distribution abstractions~\citep{seddik2024bad,kanabar2025model}, regression and scaling-law analyses~\citep{dohmatob2024model,dohmatob2024tale,dohmatob2024strong}, diffusion-model and non-parametric settings~\citep{fu2024towards}, closed-loop dynamical systems~\citep{marchi2024heat}, and mitigation mechanisms~\citep{gerstgrasser2024model,dey2024universality,feng2025beyond,fu2025theoretical}.\looseness=-1

%% file: sections/03-setup-results.tex
\section{Setup and Results}
\label{sec:setup}

This section introduces the \emph{language generation with replay} framework, formalizes the key notions of \emph{generatability with replay}, and informally states our main results for each such notion. \looseness=-1

\subsection{Problem Setup}
\label{subsec:setup}

Let \(\cX\) be a countable domain and let \(\cH \subseteq
\{0,1\}^{\cX}\) be a binary hypothesis class. For \(h\in\cH\), write
\(\supp{h}\coloneqq \{x\in\cX : h(x)=1\}\). As a concrete instantiation, 
the domain \(\cX\) may be taken to be the set of all tokens, or of all finite strings over
a finite alphabet, with each hypothesis \(h\) representing the language
consisting of the strings in \(\supp{h}\). Throughout, we assume
\(\supp{h}\) is infinite for every \(h\in\cH\), since the goal is to
output infinitely many distinct valid elements. This is often called
the \textit{uniformly unbounded support}~(UUS) property~\citep{li2024generation}. \looseness=-1

Our starting point is the language generation game introduced by \citet{kleinberg2024language}.
The game proceeds over infinitely many rounds between an adversary and a (possibly computationally unbounded) \emph{generator}, defined as a function \(\cG\) that maps each finite sequence \(x_{1:t} \coloneqq \br{x_1,\ldots,x_t} \in \cX^t\) to an output \(o_t \coloneqq \cG(x_{1:t}) \in \cX\).
At the start of the game, the adversary fixes a hidden
target \(h^\star\in\cH\). In each round \(t\), the adversary reveals an example
\(x_t\in\supp{h^\star}\), and the generator outputs \(o_t=\cG(x_{1:t})\). The
generator \emph{succeeds} if there exists a time \(t^\star \in \bN\) such that
for all \(t\ge t^\star\),
\(
o_t \in \supp{h^\star}\setminus\bc{x_1,\dots,x_t}
\).
That is, the generator must eventually only output elements that are both \emph{valid} and \emph{novel}.
No restriction is imposed on outputs before \(t^\star\).\looseness=-1

Different notions of generatability arise by varying what the success time \(t^\star\) is allowed to depend upon.
Under \emph{uniform} generatability~\citep{kleinberg2024language}, \(t^\star\) must be fixed across all \(h \in \cH\); under \emph{non-uniform} generatability~\citep{li2024generation}, it may depend on the particular target hypothesis \(h^\star\); and under generatability \emph{in the limit}~\citep{kleinberg2024language}, it may further depend on the specific sequence \(\br{x_t}_{t \geq 1}\).
For reference, relevant definitions and results from the standard setting are summarized in Appendix~\ref{app:overview-generation}; in particular, \Cref{tab:notions} contrasts the above three notions of generatability.

In this work, we introduce a minimal modification to the standard language generation game to capture the recursive feedback dynamics at the core of model collapse.
While in the standard setting the adversary must reveal \(x_t\in\supp{h^\star}\) in every round, in our replay setting the adversary may also reveal previous generator outputs, potentially including invalid/hallucinated ones.
We refer to this variant of the standard language generation game as \emph{language generation with replay}. \looseness=-1

\gamebox{Language Generation with Replay}{%
\setlength{\baselineskip}{15pt}%
\textbf{Setup.} Hypothesis class $\cH\subseteq\bc{0,1}^{\cX}$ satisfying the UUS property.\\
\textbf{Game.} The adversary secretly picks $h^\star\in\cH$.\\
For $t=1,2,\dots$
\begin{enumerate}[label=(\roman*), nosep]
    \item The adversary reveals an example \(x_t\) such that \(x_t\in\supp{h^\star} \cup \bc{o_s\colon s<t}\).
    \item The generator outputs $o_t\in\cX$.
\end{enumerate}
\textbf{Success.} There exists a finite time \(t^\star \in \bN\) such that \(o_t\in\supp{h^\star}\setminus\bc{x_1,\dots,x_t}\) for all \(t\ge t^\star\).%
}

\subsection{Main Definitions and Results}
\label{subsec:main-results}

The following definition formalizes the notion of adversarial sequences of examples \emph{with replay} and forms the basis for subsequent notions of generation with replay.

\begin{definition}[Sequence with replay for a hypothesis and a generator]
\label{def:replay-seq}
Fix a hypothesis \(h\) and a generator \(\cG\). An infinite sequence
\(\br{x_t}_{t\ge1}\) is a \emph{replay sequence for \(h\) and \(\cG\)}
if, for every \(t\ge1\),
\[
x_t \in \supp{h}\quad \text{or}\quad x_t \in \bc{ \cG\br{x_{1:s}} : s<t}.
\]
\end{definition} 

\subsubsection{Uniform Generation with Replay}
We begin with the
most restrictive setting, where the generator must succeed after
seeing a fixed number of samples \(d^\star\), independent of the
target \(h\).

\begin{definition}[Uniform generatability with replay]
\label{def:unif-gen-replay}
A class \(\cH\) is \emph{uniformly generatable with replay} if there
exist a generator \(\cG\) and \(d^\star\in\bN\) such that for every
\(h\in\cH\) and every replay sequence \(\br{x_t}_{t\ge1}\) for \(h\)
and \(\cG\), if there exists \(t\) with
\(\abs{\bc{x_1,\dots,x_t}}=d^\star\), then
\(\cG\br{x_{1:s}}\in\supp{h}\setminus\bc{x_1,\dots,x_s}\) for all \(s\ge
t\).
\end{definition}
\noindent Given a generator \(\cG\), we define its \emph{uniform generation with
replay sample complexity} \(d^\star_\cG\) as the smallest such
\(d^\star\), or \(\infty\) if no such value exists.
\contribbox{Contribution 1.}{
  In \Cref{thm:uniform_with_replay}, we prove that \(\cH\) is uniformly generatable with replay if and only if it is uniformly generatable in the standard setting. Moreover, the sample complexity \(d^\star\) remains unchanged.
}

\noindent Our proof proceeds by a black-box reduction from a uniform generator in the standard setting to one in the replay setting, showing how to make uniform generators sufficiently robust to absorb the noise introduced by replay.

\subsubsection{Non-Uniform Generation with Replay}
In this notion of generatability, the sample complexity \(d^\star_h\) is allowed to depend on the target \(h\) but not on the specific sequence of examples \(\br{x_t}_{t\geq 1}\).

\begin{definition}[Non-uniform generatability with replay]
\label{def:non-unif-replay-double} 
    A class \(\cH\) is \emph{non-uniformly
    generatable with replay} if there exists a generator \(\cG\) such
    that for every \(h \in \cH\) there exists \(d^\star_h \in \bN\)
    such that, for any sequence with replay \(\br{x_t}_{t \geq 1}\)
    for \(h\) and \(\cG\), if there exists \(t^\star_h \in \bN\) such
    that \(\abs{\bc{x_1, \dots, x_{t^\star_h}}} = d^\star_h\),
    then \(\cG(x_{1:s}) \in \supp h \setminus \bc{x_1,\ldots,x_s}\) for all \(s \geq t^\star_h\).
\end{definition}

\contribbox{Contribution 2.}{%
In \Cref{thm:hardness-non-unif}, we construct a
  \emph{countable} hypothesis class that is non-uniformly generatable
  in the standard setting but is \emph{not} non-uniformly
  generatable with replay.}

\noindent In the standard setting, every countable
class is non-uniformly generatable~\citep{li2024generation,charikar2024exploring}. Thus, our result creates a strong
separation in the non-uniform generation setting.

\subsubsection{Generation in the Limit with Replay}
This setting
requires success only on example streams that eventually enumerate the entire
support of the target hypothesis~(possibly interleaved with replay examples), with no
pre-computed bound on the sample complexity.

\begin{definition} [Generatability in the limit with replay]
    An infinite replay sequence \(\br{x_t}_{t\geq 1}\) is an \emph{enumeration
    with replay} if it eventually reveals every \(x \in \supp{h}\).\footnote{
      That is, for every $x\in\supp{h}$ there exists a finite $t\in\bN$ such that $x_t=x$.
    }
    A class \(\cH\) is \emph{generatable in the limit with replay} if
    there exists a generator \(\cG\) such that, for every \(h \in
    \cH\) and every enumeration with replay \(\br{x_t}_{t\geq 1}\), there exists
    \(t^\star\in\bN\) such that \(\cG(x_{1:s}) \in \supp h \setminus
    \bc{x_1,\ldots,x_s}\) for all \(s \geq t^\star\).
\end{definition}

\noindent Crucially, to meet the requirement of enumerating the full support of \(h^\star\), the adversary can reveal an instance \emph{after} it has been output by the generator.
This increases the hardness of generation, as the generator must carefully select its outputs, knowing that replayed instances cannot be trusted.

\contribbox{Contribution 3.1.}{%
In \Cref{thm:separation-lim-with-replay}, we prove that there exists
an (uncountable) class \(\cH\) that is generatable in the limit
without replay but is \emph{not} generatable in the limit with replay.}

This separation shows that the replay model can fundamentally limit the power of generation over general hypothesis classes.
This naturally raises the question of whether a similar separation holds for \emph{countable} classes.
For this specific case, we provide a positive result.

\contribbox{Contribution 3.2.}{%
  In~\Cref{thm:limit_replay_mq_only}, we provide an algorithm that generates in the limit under replay any countable class using only membership queries.
} 

\noindent By \emph{membership queries} we mean oracle access to the predicate ``\(x\in\supp{h}\)'' for any \(h\in\cH\) and any \(x\in\cX\).\footnote{
  Equivalently, we assume that \(h(x)\) is computable for all \(h\in\cH\) and \(x\in\cX\).
}
We note that this access model is, in a sense, minimal, as any reasonable generator should at least be able to evaluate whether any given \(h\in\cH\) is consistent with the example stream.
\citet{kleinberg2024language} showed that, in the standard setting, every countable class is generatable in the limit using only membership queries. 
Thus, our result shows that, for countable classes, generation in the limit remains equally possible under replay using the same access model.

\subsubsection{Proper Generation with and without Replay}
Finally, we study \emph{proper} generation, where at each round the generator must output a hypothesis \(\hat h_t \in \cH\), rather than an element \(o_t\in\cX\), and the success criterion requires \[\supp{\hat h_t}\subseteq\supp{h^\star}\] for all sufficiently large \(t\). 
We refer to the setting where the generator outputs elements of $\cX$ simply as \emph{generation}, and occasionally as \emph{improper} generation when a contrast with the proper setting is needed.\footnote{
  In the literature, \emph{improper} and \emph{proper} generation are also referred to as \emph{element-based} and \emph{index-based} generation, respectively~\citep{kleinberg2025density}.
  The term \emph{index-based}, however, presumes that \(\cH\) is countable and thus admits an indexing.
}

The replay adversary may now reveal \emph{any} element from the support of any previously output hypothesis.
This formalizes unconstrained downstream reuse of deployed generative models, where each \(\hat{h}_t\) represents a specific version of a model previously deployed and accessible to downstream users for content generation.\looseness=-1

\begin{definition}[Proper generatability in the limit with replay]
\label{def:proper-limit-replay}
A class \(\cH\) is \emph{properly generatable in the limit with replay} if there exists a proper generator \(\cG\) such that, for every \(h\in\cH\) and every sequence \(\br{x_t}_{t\ge1}\) satisfying 
\begin{enumerate}[nosep]
  \item \(x_t\in\supp{h}\) or \(x_t\in\supp{\hat h_s}\) for some \(s<t\), and
  \item \(\br{x_t}_{t\ge1}\) enumerates every element of \(\supp{h}\), 
\end{enumerate}
there exists \(t^\star\) such that \(\supp{\hat h_t}\subseteq\supp{h}\)  for all \(t\ge t^\star\).\looseness=-1
\end{definition}

\contribbox{Contribution 4.1.}{%
In \Cref{thm:hardness-proper-replay}, we show that there exists a
\emph{finite} class \(\cH\) that is properly generatable in the limit
in the standard setting, but \emph{not} in the replay setting.\looseness=-1
}

\noindent\Cref{tab:replay-separation-classes} summarizes the results described so far, highlighting whether replay affects the guarantees of the standard setting for each notion of generatability.

We also show that, even without replay, proper generation in the limit
can be strictly harder than improper generation in a computational sense.

\contribbox{Contribution 4.2.}{%
In~\Cref{thm:lower-bound-proper}, we show that proper generation in the
limit may require stronger computational primitives than membership
queries alone.\looseness=-1
}

\noindent \citet{kleinberg2024language} showed that, in the standard setting, all countable classes are properly generatable in the limit using membership queries and subset queries.\footnote{By \emph{subset queries} we mean queries of the kind ``\(\supp{h_i} \subseteq \supp{h_j}\)?'' for any \(h_i, h_j \in \cH\).}
\Cref{thm:lower-bound-proper}~establishes a computational lower bound for the algorithm of~\citet{kleinberg2024language}, showing that access to some additional computational primitive (such as subset queries) is necessary in general.
This result is of independent interest beyond generation with replay.

%% file: sections/04-uniform-with-replay.tex
\section{Uniform Generation with Replay}
\label{sec:uniform}

We begin with the simplest result.
In the standard setting, a uniformly generatable class \(\cH\) has the nice property that its sample complexity \(d^\star\) is known, at least information-theoretically~\citep{li2024generation}.
That is, after observing any \(d^\star\) distinct examples from any given \(h \in \cH\), a uniform generator \(\cG\) for \(\cH\) is guaranteed to output unseen elements of \(\supp{h}\).
A naive strategy to convert \(\cG\) into a generator \(\widetilde{\cG}\) that generates \(\cH\) uniformly with replay is to ignore all examples matching a previous output and apply \(\cG\) on the remaining examples.
However, if \(\widetilde{\cG}\) were to output arbitrary elements \(o_t \in \cX\), then the sequence \[x_1,\quad o_1, \quad o_2, \quad o_3, \quad \ldots\] could, in principle, form a valid replay sequence with potentially unbounded cardinality.
In this case, \(\widetilde{\cG}\) would not gather any additional information on \(h^\star\) beyond \(x_1 \in \supp{h^\star}\), and thus would not automatically inherit \(\cG\)'s guarantees.\looseness=-1

To address this challenge, we introduce a preliminary ``burn-in'' phase during which we restrict \(\widetilde{\cG}\)'s outputs, before eventually copying \(\cG\).
\Cref{alg:uniform_to_uniform_replay} illustrates how to construct such a generator \(\widetilde{\cG}\) achieving uniform generation under replay in the most sample-efficient way possible.
This result is stated formally in the following theorem.

\begin{theorem}[Equivalence of uniform generation with and without replay] \label{thm:uniform_with_replay}
    A binary hypothesis class \(\cH \subseteq \bc{0,1}^\cX\) satisfying the UUS property is uniformly generatable with replay if and only if it is uniformly generatable.
    In particular, any generator \(\cG\) that generates \(\cH\) uniformly can be converted into a generator \(\widetilde{\cG}\) that generates \(\cH\) uniformly with replay, without increasing the sample complexity.
\end{theorem}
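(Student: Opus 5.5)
The plan is to prove the two directions separately. The direction ``uniformly generatable with replay \(\Rightarrow\) uniformly generatable'' is immediate. Every sequence \((x_t)_{t\ge1}\) with \(x_t\in\supp{h}\) for all \(t\) is in particular a replay sequence for \(h\) and any generator (\Cref{def:replay-seq}). So a generator \(\widetilde{\cG}\) witnessing \Cref{def:unif-gen-replay} with some \(d^\star\) also witnesses uniform generatability in the standard setting with the same \(d^\star\).

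For the nontrivial direction, fix a standard uniform generator \(\cG\) with sample complexity \(d^\star=d^\star_\cG\), and define \(\widetilde{\cG}\) through a burn-in phase as follows:
\[
\widetilde{\cG}(x_{1:t}) \coloneqq
\begin{cases}
x_t & \text{if } \abs{\bc{x_1,\dots,x_t}}<d^\star,\\
\cG(x_{1:t}) & \text{otherwise.}
\end{cases}
\]
During burn-in the generator only echoes examples it has already seen, so replaying one of its outputs never creates a new distinct example. The edge case \(d^\star=0\) needs no burn-in at all. This is consistent, because at \(t=1\) there are no previous outputs, so \(x_1\in\supp{h}\) is forced.

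The key step, and the one I expect to need the most care, is an invariant proved by induction on \(t\). Let \(h\in\cH\) and let \((x_t)\) be any replay sequence for \(h\) and \(\widetilde{\cG}\). Define \(t_0\) as the first time at which \(\abs{\bc{x_1,\dots,x_{t_0}}}=d^\star\), if such a time exists. The claim is that every \(x_t\) lies in \(\supp{h}\) and every output \(o_s\) with \(s\ge t_0\) lies in \(\supp{h}\setminus\bc{x_1,\dots,x_s}\). The argument splits by phase.

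\begin{itemize}
\item \emph{Burn-in outputs.} For \(s<t_0\), the output is \(o_s=x_s\). By the induction hypothesis \(x_s\) is valid, so \(o_s\) is valid. Hence a replayed \(x_t\), which equals some earlier \(o_s\), is valid whenever \(s<t_0\).
\item \emph{Post-burn-in outputs.} For \(s\ge t_0\), the prefix \(x_{1:s}\) is by induction a genuine standard sequence for \(h\) containing \(d^\star\) distinct elements. The standard guarantee of \(\cG\) then gives \(o_s=\cG(x_{1:s})\in\supp{h}\setminus\bc{x_1,\dots,x_s}\), so replays of these outputs are valid as well.
\end{itemize}

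Note that the distinct count stays at most \(d^\star\) when the threshold is reached, since each new example adds at most one element. It is also important to apply \(\cG\)'s guarantee to the whole prefix \(x_{1:s}\), which is a valid standard prefix, rather than to a filtered subsequence; otherwise \(\cG\)'s guarantee would not transfer verbatim.

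With the invariant established, the conclusion follows directly. Whenever some \(t\) has \(\abs{\bc{x_1,\dots,x_t}}=d^\star\), we have \(t\ge t_0\), and the invariant gives \(\widetilde{\cG}(x_{1:s})\in\supp{h}\setminus\bc{x_1,\dots,x_s}\) for all \(s\ge t\). Hence \(d^\star_{\widetilde{\cG}}\le d^\star_\cG\), so the sample complexity does not increase. Combined with the converse direction, the optimal sample complexities coincide.
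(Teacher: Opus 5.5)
Your proof is correct and follows the paper's approach: a burn-in phase that only echoes already-seen (hence valid) examples until \(d^\star\) distinct ones have been observed, after which \(\widetilde{\cG}\) copies \(\cG\) on the full prefix. The paper echoes \(x_1\) instead of \(x_t\), which makes no difference. Your explicit induction, showing that replays of post-burn-in outputs are also valid, spells out a step the paper's proof leaves implicit.
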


\begin{proof}
    Clearly, if a generator generates \(\cH\) uniformly with replay then it also generates \(\cH\) uniformly, since all valid sequences in the standard setting are also valid sequences with replay.
    To show the other implication, suppose \(\cG\) generates \(\cH\) uniformly after seeing \(d^\star\) examples.
    \Cref{alg:uniform_to_uniform_replay} shows how to construct a generator \(\widetilde{\cG}\) from \(\cG\) that generates \(\cH\) uniformly with replay.
    Let \(\br{x_t}_{t \geq 1}\) be a sequence with replay for \(h \in \cH\) and \(\widetilde{\cG}\).
    The generator \(\widetilde{\cG}\) repeatedly outputs the first example \(x_1\) until the following condition is satisfied: \(\abs{\bc{x_1, \ldots, x_t}} \geq d^\star\).
    From that moment onward, \(\widetilde{\cG}\) copies \(\cG\)'s outputs.
    Now, let \(t^\star\) be the first time such that \(\abs{\bc{x_1, \ldots, x_{t^\star}}}\geq d^\star\).
    We necessarily have that \(\bc{x_1, \ldots, x_{t^\star}} \subset \supp{h}\), since \(x_1\) is guaranteed to belong to \(\supp{h}\) and \(\widetilde{\cG}\) has only ever output \(x_1\).
    Therefore, since \(\cG\) uniformly generates \(\cH\) with sample complexity \(d^\star\), we have that \(\cG(x_{1:s}) \in \supp{h} \setminus \bc{x_1, \ldots, x_s}\) for all \(s \geq t^\star\).
    It follows that \(\widetilde{\cG}\) achieves uniform generation with replay with the same sample complexity \(d^\star\). \looseness=-1
\end{proof}

We now turn to a setting where the previous approach fails and introducing replay yields a strict separation from the standard setting.

\begin{algorithm}[tb]
\caption{Uniform-to-uniform-with-replay conversion} 
\label{alg:uniform_to_uniform_replay}
    \begin{algorithmic}[1]
        \Require \(\cG\) uniform generator for \(\cH\) with sample complexity \(d^\star\)
        \For{\(t = 1,2,\ldots\)}
            \State Receive new example \(x_t\)
            \If{\(\abs{\bc{x_1, \ldots, x_t}} \geq d^\star\)}
                \State Output \(\cG(x_{1:t})\)
            \Else
            \State Output \(x_1\)
            \EndIf 
        \EndFor
    \end{algorithmic}
\end{algorithm}

%% file: sections/05-non-uniform-with-replay.tex
\section{Non-Uniform Generation with Replay}
\label{sec:non-unif}

In contrast to the uniform notion of generation, the sample complexity in the non-uniform case depends on the particular, \emph{unknown} target hypothesis (see Definition~\ref{def:non-uniform-gen} in Appendix~\ref{app:overview-generation}).
As a result, a generator cannot commit in advance to observing a fixed number of distinct examples before producing new outputs, as in~\Cref{sec:uniform}.
This precludes a direct adaptation of the reduction-based constructions from uniform generators used in, e.g.,~\citet{li2024generation}.\looseness=-1

In the standard setting, all countable classes are non-uniformly generatable~\citep{li2024generation,charikar2024exploring}.
In contrast, \Cref{thm:hardness-non-unif} shows that this guarantee fails in the replay setting: countability alone no longer suffices.
Nonetheless, every finite hypothesis class remains non-uniformly generatable as an immediate corollary of~\Cref{thm:uniform_with_replay}.
Together, these results account for the row on non-uniform generation in~\Cref{tab:replay-separation-classes}.

\begin{theorem} [Hardness of non-uniform generation with replay] \label{thm:hardness-non-unif}
    There exists a \emph{countable} binary hypothesis class \(\cH\subseteq \bc{0,1}^\cX\) satisfying the UUS property that is \emph{not} non-uniformly generatable with replay.
\end{theorem}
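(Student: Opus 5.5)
The plan is a diagonalization against an arbitrary generator. The replay adversary can inflate the number of distinct examples by feeding back the generator's own outputs, so the distinct-count threshold \(d^\star_h\) of \Cref{def:non-unif-replay-double} can be reached while the generator has seen only boundedly many \emph{genuine} examples. Fix any candidate generator \(\cG\). It suffices to exhibit one \(h\in\cH\) such that for every \(d\in\bN\) there is a replay sequence for \(h\) and \(\cG\) that reaches \(d\) distinct examples and after which \(\cG\) still outputs an element outside \(\supp{h}\) (or a non-novel one).

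First I fix the countable class. Take \(\cX=\bN\) and a countable family with a common infinite "core" \(C\subseteq\bN\) (say the even numbers, all genuine examples being drawn from it). Add a countable supply of "extension" hypotheses with supports \(C\cup A\), where \(A\) ranges over a countable family of infinite sets, chosen so that every finite set of non-core points is contained in some extension support. Finally add the core hypothesis \(h_C\) with \(\supp{h_C}=C\) itself. The class is countable and satisfies UUS.

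Next I argue that \(\cG\) is forced to leave the core. Fix a finite genuine prefix \(x_{1:n}\subseteq C\). The target could be some extension hypothesis whose non-core part \(A\) must eventually be produced. I would choose the \(A\)'s so that outputs inside \(C\) alone cannot witness non-uniform success for all of them; for instance, let the extension hypotheses have core part \(C\cap[k,\infty)\) together with \(A\), so that novel core outputs can be invalid. Concretely, I want \(\cG\), on some continuation consistent with an extension target, to emit a point \(o\notin\supp{h_C}\) after finitely many steps.

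Then I run the staged adversary for target \(h_C\). At stage \(k\) the adversary feeds genuine core elements until \(\cG\) emits a non-core output \(o_k\); this must happen, since otherwise \(\cG\) fails some extension hypothesis consistent with the data, contradicting its non-uniform guarantee for that hypothesis. The adversary then replays \(o_k\), which raises the distinct count by one. Because \(o_k\notin\supp{h_C}\), the output at that time is already an error for target \(h_C\). Iterating over stages gives, for each \(d\), a legal replay sequence for \(h_C\) with at least \(d\) distinct examples after which \(\cG\) errs. Hence no finite \(d^\star_{h_C}\) exists.

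The main obstacle is the forcing step: the generator is allowed to stall by repeating \(x_1\), as in \Cref{alg:uniform_to_uniform_replay}, and it could also output safe core elements forever. The class must therefore be designed so that, for any fixed finite genuine prefix, \(\cG\) cannot know which hypothesis's threshold has been reached. It must also be designed so that staying inside the core eventually violates some hypothesis's non-uniform guarantee. At the same time, the limiting target \(h_C\) must remain a single member of the countable class. I would first try to obtain this by making the extension hypotheses shrink the usable part of the core as their index grows, so that a generator that never leaves the core fails one of them. I would then check that \(d^\star_h\) is defined for every \(h\) other than \(h_C\), which confirms the failure is genuinely non-uniform.
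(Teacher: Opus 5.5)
Your proposal has a genuine gap, and it sits exactly at the step you flag as the main obstacle. The forcing claim is never proved: while genuine core elements are fed, $\cG$ must eventually emit a non-core output, ``since otherwise $\cG$ fails some extension hypothesis consistent with the data''. Neither design you sketch supports it.

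\emph{Cofinite design.} With $\supp{h'}=(C\cap[k,\infty))\cup A$, the class is actually non-uniformly generatable with replay. Take the generator that always outputs the smallest even number exceeding every example and every previous output. It satisfies $o_t\ge 2t$, its output is always novel and in $C$, and so it is valid for the threshold-$k$ hypothesis at every $t\ge k/2$. Since reaching $d$ distinct examples takes at least $d$ rounds, the thresholds $d^\star_{h'}=\lceil k/2\rceil$ and $d^\star_{h_C}=1$ work.

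\emph{Finite-core design.} Suppose instead the extensions meet the core in a finite set, as your ``shrinking'' suggestion seems to intend. Feeding genuine core elements still forces nothing. If unboundedly many distinct ones are fed, no such extension is consistent with the stream; only $h_C$ remains, and its guarantee is compatible with staying in the core. If only finitely many are fed, the distinct count stays bounded. Since the thresholds $d^\star_{h'}$ are the generator's choice, no extension's guarantee need ever activate, and a stalling generator survives, as you note yourself.

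\emph{Stage~1.} Your stage~1 stream contains no replays at all, so appealing to an extension hypothesis there is appealing to the standard setting, where countable classes are non-uniformly generatable. For the class below, consider the standard generator that outputs one more than the largest example until a negative example appears, and one less than the smallest example afterwards. It stays in the core forever on such a stream and violates no guarantee on any replay-free stream. Your adversary therefore never obtains the non-core outputs it wants to replay.

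The missing idea is to inflate the distinct count by replaying the generator's own \emph{core} outputs, so that the genuine part of the stream stays finite. The paper takes $\cX=\bZ$, $\supp{h_\infty}=\bN$ (your core) and $\supp{h_n}=\bc{1,\ldots,n}\cup\bZ_{<0}$. With $d\coloneqq d^\star_{h_\infty}$, the adversary shows $1,\ldots,d$ and then sets $x_{t+1}\coloneqq o_t$ forever.
\begin{itemize}
\item The guarantee for $h_\infty$ forces fresh naturals from time $d$ on.
\item Replaying them makes the distinct count unbounded, while every non-replayed example lies in $\bc{1,\ldots,d}\subseteq\supp{h_d}$.
\item So the same stream is a valid replay sequence for $h_d$, and its guarantee activates too.
\item The outputs are then eventually forced into $\supp{h_\infty}\cap\supp{h_d}=\bc{1,\ldots,d}$ while remaining novel, which is impossible.
\end{itemize}
The same stream also rescues your framing. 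For each $d$, if $\cG$ meets the guarantee of $h_d$, then $\cG$ must err for $h_\infty$ at some time after the $d$-th distinct example, so $h_\infty$ admits no finite threshold.

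Your plan lacks two things. First, count inflation through replayed outputs that are \emph{valid} for the core hypothesis. Second, for each length $d$ of the genuine prefix, a hypothesis that contains that prefix but meets the core only finitely. Together these trigger two guarantees on a single stream, and the guarantees clash.
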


\begin{proof}
Let \(\cX=\bZ\).
For each \(n\in \bN\) define the hypotheses \(h_n\) and \(h_\infty\) by
\[
\supp{h_n}=\bc{1,\ldots,n}\cup \bZ_{<0}, \quad \supp{h_\infty}=\bN.\]
Let \(\cH \coloneqq \{h_\infty\}\cup\{h_n:n\in\bN\}\).
Assume for contradiction that there exists a generator \(\cG\) that non-uniformly generates \(\cH\) with replay. 
Let \(d\coloneqq d^\star_{h_\infty}\) denote the (non-uniform) sample complexity associated with \(h_\infty\).

We define an adversarial sequence \(\br{x_t}_{t\ge1}\) online.
For \(t=1,\ldots,d\), set \(x_t\coloneqq t\).
For each \(t\ge d\), set \(x_{t+1}\coloneqq o_t\), i.e., from time \(d\) onward the adversary always replays the most recent generator's output \(o_t \coloneqq \cG\br{x_{1:t}}\).
By construction, \(\br{x_t}_{t\ge1}\) is a valid replay sequence for \(h_\infty\) and \(\cG\): the first \(d\) points lie in \(\supp{h_\infty}\) and every subsequent point is a replay. \looseness=-1

Since \(\abs{\bc{x_1,\ldots,x_d}}=d=d^\star_{h_\infty}\), generatability in the non-uniform setting implies that for all \(t\ge d\),
\[o_t\in \supp{h_\infty}\setminus\bc{x_1,\ldots,x_t}=\bN\setminus\bc{x_1,\ldots,x_t}.\]
In particular, since \(x_{t+1}=o_t\), it follows that the generator outputs \emph{fresh} natural numbers from time \(d\) onward.
Thus, the set of distinct points in \(\br{x_t}_{t\ge1}\) is unbounded.

Next, observe that the same sequence \(\br{x_t}_{t\ge1}\) is also a valid sequence with replay for \(h_d\) and \(\cG\): we have \(1,\ldots,d\in\supp{h_d}\), and for all later times the adversary supplies replays (which are allowed to lie outside the support of \(h_d\)). 
Because the sequence \(\br{x_t}_{t\ge1}\) contains infinitely many distinct points, there exists a finite \(T\in\bN\) such that \(|\{x_1,\ldots,x_T\}|\ge d^\star_{h_d}\).
Applying the non-uniform guarantee to the target \(h_d\) therefore yields that, for all \(t\ge T\),
\[
o_t\in \supp{h_d}\setminus\bc{x_1,\ldots,x_t}.
\]

Combining this with \(o_t\in\supp{h_\infty}=\bN\) for all \(t\ge d\), we obtain that for all \(t\ge \max\{d,T\}\),
\[
o_t\in \supp{h_\infty}\cap\supp{h_d}=\{1,\ldots,d\}.
\]
Thus, for all sufficiently large \(t\), the output \(o_t\) must lie in the finite set \(\{1,\ldots,d\}\) while also being fresh relative to \(\bc{x_1,\ldots,x_t}\).
This is impossible: after at most \(d\) such fresh outputs, every element of \(\{1,\ldots,d\}\) has already appeared in the input sequence.
The resulting contradiction shows that no such generator \(\cG\) can exist.
\end{proof}

%% file: sections/06-in-the-limit-with-replay.tex
\section{Generation in the Limit with Replay}
\label{sec:limit}

We first construct an algorithm that matches the computational guarantees of the standard setting for all countable classes under replay~(\Cref{thmt@@thmlimitreplaymqonly}).
Then, in~\Cref{thmt@@thmseparationlimitreplay}, we present a hard (necessarily uncountable) hypothesis class demonstrating a separation between generation in the limit with and without replay.
See also the corresponding row of~\Cref{tab:replay-separation-classes}.

\subsection{A Computable Algorithm to Generate in the Limit Any Countable Class under Replay}

\citet{kleinberg2024language} show that all countable hypothesis classes are generatable in the limit \emph{without} replay and give a universal membership-query-only generator.
The next theorem shows that replay does not change this picture.

\begin{restatable}{theorem}{thmlimitreplaymqonly}
    \label{thm:limit_replay_mq_only}
    There exists a generator that, given any \emph{countable} binary hypothesis class \(\cH=\bc{h_1,h_2,\ldots}\) over a countable domain \(\cX\) satisfying the UUS property, generates in the limit with replay every target \(h^\star\in\cH\) using only membership queries.
\end{restatable}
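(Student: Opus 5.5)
The plan is to adapt the critical-hypothesis generator of \citet{kleinberg2024language} so that it works only with information that cannot have been planted by replay. The key observation is about examples that were never output before they appeared. Let \(O_{t}=\bc{o_s : s<t}\), and call \(x_t\) \emph{trusted} if \(x_t\notin O_t\). Every trusted example lies in \(\supp{h^\star}\), because the only other option allowed by the replay rule is a previous output. So the trusted set \(S_t\) is a sound (but possibly incomplete) sample from \(h^\star\).

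At round \(t\), the generator will do the following.
\begin{itemize}
\item Call \(h_i\) consistent if \(S_t\subseteq\supp{h_i}\); this takes finitely many membership queries.
\item Among \(h_1,\dots,h_t\), call a consistent \(h_n\) \emph{critical} if \(\supp{h_n}\subseteq\supp{h_i}\) for every consistent \(i<n\).
\item Let \(n_t\) be the largest critical index.
\item Output an element of \(\supp{h_{n_t}}\setminus\bc{x_1,\dots,x_t}\) that also lies in \(\supp{h_i}\) for all consistent \(i\le n_t\).
\end{itemize}
As in \citet{kleinberg2024language}, subset tests are replaced by a finite membership-query approximation: check containment only on the first \(m_t\) elements of \(\cX\), with \(m_t\to\infty\) slowly. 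The search for the output is then a dovetailed membership-query search, which succeeds because the relevant intersection is infinite.

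The correctness argument proceeds in steps.
\begin{enumerate}
\item Let \(z\) be the least index with \(\supp{h_z}=\supp{h^\star}\). Since \(S_t\subseteq\supp{h^\star}\) always, \(h_z\) is consistent at every round.
\item I will show that every \(h_i\) with \(i<z\) and \(\supp{h^\star}\not\subseteq\supp{h_i}\) is eventually permanently inconsistent. This requires some witness \(w\in\supp{h^\star}\setminus\supp{h_i}\) to arrive as a trusted example.
\item Once step 2 holds, every consistent \(h_i\) with \(i<z\) is a superset of \(h^\star\), so \(h_z\) is critical for all large \(t\), and \(n_t\ge z\).
\item Whenever \(n_t\ge z\), the output lies in \(\supp{h_{n_t}}\subseteq\supp{h_z}=\supp{h^\star}\), modulo the finite-approximation subtlety, which is handled exactly as in the standard proof. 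The output is also novel by construction, which gives success.
\end{enumerate}

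The main obstacle is step 2. The enumeration guarantees that \(w\) appears in the stream, but if the generator output \(w\) first, then \(w\) is untrusted and is never used for elimination. The output rule protects witnesses while \(i<n_s\) and \(h_i\) is consistent: in that case \(o_s\in\supp{h_{n_s}}\subseteq\supp{h_i}\), so \(o_s\) cannot be a witness against \(h_i\). The danger comes from rounds with \(n_s\le i<z\).

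My first attempt is to strengthen the output rule so that \(o_s\) lies in \(\supp{h_j}\) for every consistent \(j\le s\) whose support meets \(\supp{h_{n_s}}\) in an infinite set. This intersection is still infinite, and it can be searched with membership queries by dovetailing.

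I would then argue by induction on \(i<z\):
\begin{itemize}
\item If \(\supp{h^\star}\cap\supp{h_i}\) is infinite, the strengthened rule never spends a witness against \(h_i\) while \(h_i\) is consistent.
\item Otherwise only finitely many elements of \(\supp{h^\star}\) lie in \(\supp{h_i}\). Then \(h_i\) cannot remain critical above \(h_z\) for long, and the infinitely many remaining witnesses cannot all be pre-empted. Each output pre-empts at most one witness, while the enumeration keeps revealing new elements of \(\supp{h^\star}\).
\end{itemize}
Making this counting airtight is the delicate part. If it fails, the fallback is to also trust a replayed element \(x\) whenever every hypothesis consistent at the time \(x\) was output contained \(x\). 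Such elements are sound once \(n_t\ge z\), and the claim to prove would then be that only finitely many unsound ones enter \(S_t\).
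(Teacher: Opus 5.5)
Your step~2 is not merely delicate. For the generator you describe it is false, so this is not a matter of making the counting airtight. At round \(s\) you examine only \(h_1,\dots,h_s\), so a witness against \(h_i\) can be spent before \(h_i\) is ever looked at; afterwards it can only reappear as an untrusted replay. Concretely, let \(\supp{h_1}=\bN\), \(\supp{h_2}=\bN\setminus\bc{1}\), \(\supp{h_3}=\bc{1}\cup 2\bN\), take target \(h_3\) (so \(z=3\)), and let \(x_1=2\). At round~1 only \(h_1\) is examined, and \(o_1=1\) is admissible (it is the least candidate). From then on \(1\) is never trusted and every trusted example is even, so \(h_2\) stays consistent forever. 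Hence \(h_3\), which fails to be contained in \(h_2\) on every prefix \(m\ge 1\), is never critical, and \(n_t=2\) for all \(t\ge 2\). Your rule then only guarantees \(o_t\in\bN\setminus\bc{1}\). With the least-element choice and the stream \(x_2=1\) (a replay), \(x_3=4\), \(x_4=6,\dots\), it outputs \(o_t=3\notin\supp{h_3}\) at every round \(t\ge 2\).

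The same example refutes the first bullet of your induction: \(\supp{h^\star}\cap\supp{h_2}\) is infinite, yet the only witness against the consistent \(h_2\) is spent. The strengthened rule does not prevent this, because it ranges only over \(j\le s\). It also has two further problems. Whether \(\supp{h_j}\cap\supp{h_{n_s}}\) is infinite cannot be decided from finitely many membership queries. And pairwise infinite intersections need not have an infinite common intersection, so the rule can leave no admissible output at all. The fallback is unsound: with \(\supp{h_1}=\bN\), target \(\supp{h_2}=2\bN\), \(x_1=2\) and \(o_1=1\), every hypothesis examined at round~1 contains \(1\). A replay of \(1\) would therefore be trusted and would make the target permanently inconsistent; a single unsound element of \(S_t\) is already fatal.

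The idea you are missing is that the surviving earlier hypotheses need not contain \(\supp{h^\star}\). They only need to contain it \emph{outside the set of past outputs}, because a fresh output is never in \(O_{t-1}\). The paper therefore calls \(h_n\) \((t,m)\)-critical with replay if \(S_t\subseteq\supp{h_n}\) and \(\supp{h_n}[m]\subseteq\supp{h_i}[m]\cup O_{t-1}\) for every consistent \(i<n\). Once \(h_z\) is critical, the largest critical index \(n\) is at least \(z\), so any output drawn from \(\supp{h_n}[m]\setminus O_{t-1}\) lies in \(\supp{h_z}\). In the example above, \(h_3\) is critical in this sense precisely because its only distinguishing element \(1\) already lies in \(O_{t-1}\).

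It then suffices to evict the finitely many \(j<z\) with \(\supp{h_z}\setminus\br{\supp{h_j}\cup O_{z-1}}\neq\emptyset\). The paper does this by explicit witness protection rather than by counting. For every pair \(j<i\) of active hypotheses, the least element of \(\supp{h_i}[m]\setminus\br{\supp{h_j}[m]\cup O_{t-1}}\) is forbidden as an output; there are at most \(t^2\) such elements, so the output search still terminates. From round \(z\) on, the pair \((z,j)\) is examined at every round. So, by induction, \(w_j=\min\br{\supp{h_z}\setminus\br{\supp{h_j}\cup O_{z-1}}}\) is never output while \(h_j\) is consistent. When the enumeration delivers \(w_j\), it is therefore trusted and evicts \(h_j\). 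Once all such \(j\) are gone, \(h_z\) is critical with replay for every \(m\), and validity follows as in your step~4.
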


The proof is constructive.
Building on the algorithm of \citet{kleinberg2024language}, we propose \textsc{WP} (\emph{Witness Protection};~\Cref{alg:computable-wp}), a universal membership-query-only algorithm that generates in the limit with replay any countable hypothesis class $\cH$.
To prove this, we first need some additional notation.
Since the domain $\cX$ is countable, we may assume without loss of generality that $\cX=\bN$.
The algorithm maintains a growing prefix length \(m\) over the elements of the domain \(\cX\).
For \(h\in\cH\) and \(m\in\bN\), write \[\supp{h}[m]\coloneqq \supp{h}\cap \bc{1,\ldots,m};\] restricting hypotheses to this prefix yields a surrogate for set inclusion that is computable with membership queries alone.\footnote{
    That is, for any \(i,j,m\in\bN\) we can establish whether \(\supp{h_i}[m]\subseteq \supp{h_j}[m]\) using only a finite number of membership queries
}
Fix a target hypothesis \(h^\star\in\cH\).
In the replay model, each example \(x_t\) is either an element of \(\supp{h^\star}\) or a replay of a past output.
Let \(O_t\coloneqq\bc{o_1,\ldots,o_t}\) with \(O_0 = \emptyset\), and define the \emph{sure} set 
\[S_t \coloneqq \bc{x_s : 1\le s\le t,\ x_s\notin O_{s-1}},\]
i.e., the examples that cannot be explained as replay and hence must lie in \(\supp{h^\star}\), so that \(S_t\subseteq \supp{h^\star}\).
We relax the notion of criticality from \citet{kleinberg2024language} to ignore previously output elements.
Fix an ordering of the hypotheses of the countable class \(\cH\), i.e., write \(\cH = \bc{h_1,h_2,\ldots}\).
\begin{definition}[\((t,m)\)-critical with replay]\label{def:tm-critical-replay}
    Fix \(t,m\in\bN\). We say that \(h_n\in\cH\) is \emph{\((t,m)\)-critical with replay} if\looseness=-1
    \begin{enumerate}
        \item \(S_t \subseteq \supp{h_n}\); and
        \item for every \(i<n\) with \(S_t\subseteq \supp{h_i}\) we have \(\supp{h_n}[m] \subseteq \supp{h_i}[m] \cup O_{t-1}\).
    \end{enumerate}
\end{definition}
\noindent Condition 1 requires that \(h_n\) is consistent with the \emph{sure} examples, i.e., examples that must belong to \(\supp{h^\star}\).
Condition 2 enforces that on the finite prefix \(\bc{1,\ldots,m}\) of the domain, any earlier hypothesis \(h_i\) that is also consistent with \(S_t\) must contain every element of \(\supp{h_n}[m]\) except possibly those that could be explained as replays.
Both conditions can be checked using finitely many membership queries.

At each step \(t\), the algorithm considers the active set of consistent hypotheses \[V_t \coloneqq \bc{ i\le t : S_t\subseteq \supp{h_i}}.\]
From \(V_t\), it selects the \((t,m)\)-critical hypothesis with the largest index \(n^{(t,m)}\) and attempts to output an element from \[\supp{h_{n^{(t,m)}}}[m]\setminus\br{S_t\cup O_{t-1}\cup W^{(t,m)}},\] where \(W^{(t,m)}\) is the active \emph{witness} set; the algorithm increases \(m\) until a suitable element is found and then outputs it.
To construct \(W^{(t,m)}\), for any prefix \(m\), active candidate set \(V_t\), and pair \(i,j\in V_t\) with \(j<i\), define the witness \(w_{ij}^{(t,m)}\) as the minimal unobserved element distinguishing \(h_i\) and \(h_j\) within the prefix \(\bc{1,\ldots,m}\): \[w_{ij}^{(t,m)}\coloneqq \min \Delta_{ij}^{(t,m)} \quad \text{where} \quad \Delta_{ij}^{(t,m)} \coloneqq \supp{h_i}[m]\setminus\br{\supp{h_j}[m] \cup O_{t-1}},\] when the set \(\Delta_{ij}^{(t,m)}\) is not empty; otherwise, \(w_{ij}^{(t,m)} = \bot\).
The active witness set \(W^{(t,m)}\) is then the collection of all such witnesses: \[W^{(t,m)} \coloneqq \bc{w_{ij}^{(t,m)} \mid i,j\in V_t,\,j<i} \setminus \bc{\bot}.\]
Since the algorithm never outputs an active witness \(w_{ij}^{(t,m)}\), if \(w_{ij}^{(t,m)}\) appears in the example stream, it cannot be a replay and hence joins the sure set \(S_t\), permanently ruling out \(h_j\) from \(V_t\).\looseness=-1  

\begin{algorithm}[!htbp]
\caption{\textsc{Witness Protection (WP)}}
\label{alg:computable-wp}
\begingroup
\setlength{\baselineskip}{1.05\baselineskip}
    \begin{algorithmic}[1]
        \Require \(\cH = \bc{h_1, h_2, \ldots}\) over \(\cX=\bc{1,2,\ldots}\)
        \State \(S_0 \gets \emptyset\); \(O_0 \gets \emptyset\); \(m\gets 0\)
        \For{\(t = 1,2,\ldots\)}
            \State Receive a new example \(x_t\)
            \If{\(x_t \notin O_{t-1}\)}
                \State\(S_t \gets S_{t-1} \cup \bc{x_t}\)
            \Else
                \State \(S_t \gets S_{t-1}\)
            \EndIf
            \State \(V_t \gets \bc{\,i \le t \mid S_t \subseteq \supp{h_i}\,}\)
            \State \(o_t\gets \bot\)
            \If{\(V_t\neq\emptyset\)}
                \State \(m \gets \max\bc{m, x_t}\)
                \Repeat
                    \State \(m \gets m + 1\)
                    \State \(W^{(t,m)} \gets \emptyset\)
                    \For{\(i,j \in V_t \text{ with } j < i\)}
                        \State \(\Delta_{ij}^{(t,m)} \gets \supp{h_i}[m]\setminus\br{\supp{h_j}[m] \cup O_{t-1}}\)
                        \If{\(\Delta_{ij}^{(t,m)} \neq \emptyset\)}
                            \State \(w_{ij}^{(t,m)} \gets \min \Delta_{ij}^{(t,m)}\)
                            \State \(W^{(t,m)} \gets W^{(t,m)} \cup \bc{w_{ij}^{(t,m)}}\) 
                        \EndIf
                    \EndFor
                    \State \(n^{(t,m)} \gets \max \bc{\,i \le t \mid h_i \text{ is } (t,m)\text{-critical with replay}\,}\)
                    \For{\(x \in \supp{h_{n^{(t,m)}}}[m]\)}
                        \If{\(x \notin S_t \cup O_{t-1} \cup W^{(t,m)}\)}
                            \State \(o_t \gets x\); \;\textbf{break}
                        \EndIf
                    \EndFor
                \Until{\(o_t \ne \bot\)}
            \Else
                \State Choose \(o_t \in S_t\) arbitrarily
            \EndIf
            \State Output \(o_t\); \;\(O_t \gets O_{t-1} \cup \bc{o_t}\)
        \EndFor
    \end{algorithmic}
\endgroup
\end{algorithm}

Let \(z\) be the first index with \(h^\star=h_z\) in the given enumeration of \(\cH\).
We prove the correctness of \Cref{alg:computable-wp} via three lemmas: (i) Lemma~\ref{lem:wp-eventual-critical} shows that \(h_z\) eventually becomes \((t,m)\)-critical with replay and stays so; (ii) Lemma~\ref{lem:wp-terminates} shows that each round of \Cref{alg:computable-wp} terminates and the inner repeat-until loop finds an output in finite time; and finally (iii) Lemma~\ref{lem:wp-eventual-valid} shows that there exists a finite stabilization time \(t^\star\) such that, for all steps after that, every output is fresh and valid for \(h_z\).

\begin{lemma}[Eventual criticality]\label{lem:wp-eventual-critical}
There exists \(t^\star<\infty\) such that for all \(t\ge t^\star\) and all \(m\in\bN\), the hypothesis \(h_z\) is \((t,m)\)-critical with replay.
\end{lemma}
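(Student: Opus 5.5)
The plan is to handle the two conditions of \Cref{def:tm-critical-replay} separately and then treat the finitely many indices \(i<z\) one at a time.

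\textbf{Condition 1.} This holds for every \(t\). An element enters \(S_t\) only when it is not a previous output, and in a replay sequence such an element must lie in \(\supp{h^\star}=\supp{h_z}\).

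\textbf{Reformulating Condition 2.} Fix \(i<z\) and set \(D_i\coloneqq\supp{h_z}\setminus\supp{h_i}\). Requiring Condition 2 for \emph{all} \(m\) is equivalent to \(D_i\subseteq O_{t-1}\) whenever \(S_t\subseteq\supp{h_i}\). If \(D_i=\emptyset\) there is nothing to prove. Otherwise it suffices to show that \(h_i\) is eventually eliminated, meaning \(S_t\not\subseteq\supp{h_i}\) for some \(t\). Since \(S_t\) is monotone, elimination is then permanent. The final \(t^\star\) is the maximum of the finitely many elimination times over \(i<z\), together with \(t\ge z\).

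\textbf{Elimination of \(h_i\).} Let \(O_\infty\coloneqq\bigcup_t O_t\), and split into two cases.

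First, suppose some \(y\in D_i\) is never output. Since the stream is an enumeration, \(y\) is revealed at some time \(s\). As \(y\notin O_{s-1}\), it joins \(S_s\). But \(y\notin\supp{h_i}\), so \(h_i\) is eliminated.

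Second, suppose \(D_i\subseteq O_\infty\). Here I would use the witness mechanism to derive a contradiction, assuming \(h_i\) is never eliminated. For \(t\ge z\), both \(i,z\in V_t\). Hence \(V_t\neq\emptyset\), and the counter \(m\) increases by at least one in every round, so \(m\to\infty\). Define \(y_t\coloneqq\min(D_i\setminus O_{t-1})\), which exists because \(D_i\neq\emptyset\) and \(O_{t-1}\) is finite while \(D_i\) is infinite or at least not yet covered. Whenever \(m\ge y_t\) during round \(t\), the witness \(w^{(t,m)}_{zi}\) equals \(y_t\), so the algorithm does not output \(y_t\) in that round. Moreover \(y_{t+1}=y_t\) as long as \(y_t\) is not output.

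Now pick \(t_0\ge z\) such that \(m\ge y_{t_0}\) already at round \(t_0\). This is possible because \(y_t\) is nondecreasing only through outputs, and \(m\) grows without bound. I would check this by an induction: once \(m\) exceeds the current value of \(y_t\), that value is protected in every later round, since \(m\) never decreases. From \(t_0\) on, \(y_{t_0}\) is never output and stays the minimum forever. This contradicts \(y_{t_0}\in D_i\subseteq O_\infty\).

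\textbf{Main obstacle.} The delicate step is the second case. I need to verify that, inside a single round, every value of \(m\) tried in the repeat loop satisfies \(m\ge y_t\), so that the witness is already present at every candidate output step. This follows because \(m\) starts each round at its previous value, which already exceeds \(y_t\) from round \(t_0\) on. I also need to confirm that the witness for the pair \((z,i)\) is indeed \(\min(D_i\cap\{1,\ldots,m\}\setminus O_{t-1})\), with the roles \(j=i<z\) as in the definition of \(\Delta^{(t,m)}_{zi}\).
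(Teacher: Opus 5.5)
\textbf{There is a genuine gap in your second case.} Your reduction commits you to the claim that $D_i\neq\emptyset$ forces $h_i$ to be eliminated eventually, and that claim is false. Suppose $D_i$ is finite and already contained in $O_{z-1}$, i.e., covered by outputs made before round $z$. At those times $z\notin V_t$, so the pair $(z,i)$ generates no witness. Then no element of $D_i$ can ever enter the sure set, and $h_i$ can stay consistent forever.

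Concretely, take $\supp{h_1}=\bN\setminus\{3\}$, $\supp{h_2}=\bN\setminus\{1\}$, $\supp{h_3}=\bN$, target $h_3$ (so $z=3$), and $x_1=2$. In round $1$ we have $V_1=\{1\}$ and $m$ goes to $3$. The only admissible element of $\supp{h_1}[3]\setminus\{2\}$ is $1$, so $o_1=1$. Then $D_2=\{1\}\subseteq O_1$, and every later occurrence of $1$ is classified as a replay. Hence $S_t\subseteq\bN\setminus\{1\}=\supp{h_2}$ for all $t$, and $h_2$ is never eliminated, even though you are in your Case 2. The step that breaks is $y_t=\min(D_i\setminus O_{t-1})$: that set is empty for every $t\ge 2$. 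Your phrase ``at least not yet covered'' is exactly the missing case. The lemma survives only because here Condition 2 for $i$ holds directly, with $D_i\subseteq O_{t-1}$ for all $t\ge z$ and no elimination needed.

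\textbf{The fix is the split the paper uses.} Distinguish $D_i\subseteq O_{z-1}$ from $D_i\not\subseteq O_{z-1}$. In the first case, Condition 2 for $i$ holds for all $t\ge z$ and all $m$, since $O_{t-1}$ only grows. In the second case, let $w=\min(D_i\setminus O_{z-1})$; this is the paper's $w_j=\min\Delta^{(z,\infty)}_{zj}$. Show by induction that $w$ is never output in any round $t\ge z$ while $h_i$ is still consistent:
\begin{itemize}
\item if the final $m$ of the round is at least $w$, then $w^{(t,m)}_{zi}=w$ and it is excluded;
\item if the final $m$ is below $w$, the output is at most $m<w$.
\end{itemize}
So when the enumeration presents $w$, either $h_i$ is already eliminated or $w$ joins the sure set and eliminates it.

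This observation also makes your race between $m$ and $y_t$, and your worry about every $m$ tried in the loop, unnecessary, because $y_t=w$ is constant from round $z$ on. It also supplies what your justification for choosing $t_0$ lacks. That $y_t$ changes only through outputs does not by itself stop it from escaping $m$; what stops it is that every output is at most the current $m$.

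Condition 1, your reformulation of Condition 2 as $D_i\subseteq O_{t-1}$, and your Case 1 are fine. After the fix, $t^\star$ is the maximum of $z$ and the elimination times of those $i<z$ with $D_i\not\subseteq O_{z-1}$.
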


\begin{proof}
Consider step \(t=z\) of \Cref{alg:computable-wp}.
Fix \(j<z\) with \(S_z\subseteq \supp{h_j}\) and define
\[
\Delta_{zj}^{(z,\infty)} \coloneqq \supp{h_z}\setminus\br{\supp{h_j}\cup O_{z-1}}.
\]
If \(\Delta_{zj}^{(z,\infty)} = \emptyset\), then \(h_z\) already satisfies \(\supp{h_z}\subseteq \supp{h_j}\cup O_{z-1}\).
Otherwise, let \(w_j\coloneqq \min \Delta_{zj}^{(z,\infty)}\) (which is well-defined under the identification \(\cX\simeq\bN\)).
Let \[B\coloneqq\bc{\,j<z : S_z\subseteq \supp{h_j}\text{ and }\Delta_{zj}^{(z,\infty)}\neq\emptyset\,}.\]
Note that \(|B|\le z-1<\infty\). 
Also, recall that, since \(h_z\) is the true hypothesis, \(z\in V_t\) for all \(t\).

We claim that, for every \(j\in B\), \(w_j\notin O_{t}\) as long as \(j\in V_t\).
We prove this by induction.
For the base case \(t=z-1\), if \(j\in B\), then by definition \(w_j\notin O_{z-1}\).
Now, consider any \(t\geq z\).
By the induction hypothesis, \(w_j \notin O_{t-1}\).
Thus, when \(m \ge w_j\), \textsc{WP} sets \[w_{zj}^{(t,m)} = \min \bc{x\le m : x\in\supp{h_z}\setminus\br{\supp{h_j}\cup O_{t-1}}} = w_j.\]
Since the output-selection rule forbids outputting any element of \(W^{(t,m)}\), step \(t\) does not output \(w_j\), i.e., \(w_j \notin O_{t}\).
Otherwise, if \(m<w_j\), then every output considered by the algorithm lies in \(\{1,\dots,m\}\), and hence cannot equal \(w_j\).
Thus, \(w_j\notin O_{t}\) for any step \(t\) where \(j \in V_t\).

Since \(w_j\in\supp{h_z}\), any enumeration with replay for \(h_z\) and \textsc{WP} must eventually present \(w_j\) at a finite time \(t_j\) as some example \(x_{t_j}\).
There are two possibilities: either \(j\) was already permanently evicted from \(V_t\) at some time prior to \(t_j\) (due to another distinguishing element), or \(j \in V_{t_j}\).
If \(j \in V_{t_j}\), then \(w_j\notin O_{t_j-1}\) and thus \(w_j\) enters the sure set \(S_{t_j}\).
Consequently, \(h_j\) is permanently ruled out from \(V_t\) for all \(t\ge t_j\).
In either case, every \(j\in B\) is evicted from \(V_t\) by some finite time.

Let \(t^\star\coloneqq \max\bc{t_j : j\in B}\), with the convention \(t^\star=z\) if \(B=\emptyset\).
Then, for all \(t\ge t^\star\) and all \(j<z\), if \(S_t\subseteq\supp{h_j}\), necessarily \(j\notin B\) %
and hence \(\supp{h_z}\subseteq \supp{h_j}\cup O_{z-1}\subseteq \supp{h_j}\cup O_{t-1}\).
Intersecting with \(\bc{1,\ldots,m}\) yields \(\supp{h_z}[m]\subseteq \supp{h_j}[m]\cup O_{t-1}\) for all \(m\in\bN\).
Since we also have that \(S_t\subseteq\supp{h_z}\) for all \(t\), the hypothesis \(h_z\) satisfies Definition~\ref{def:tm-critical-replay} for all \(t\ge t^\star\) and \(m\in\bN\).
\end{proof}

\begin{lemma}[Per-round termination]\label{lem:wp-terminates}
For every \(t\in\bN\), Algorithm~\ref{alg:computable-wp} outputs \(o_t\) after finitely many iterations of the repeat-until loop.
\end{lemma}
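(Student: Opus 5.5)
The plan is to reduce per-round termination to a counting argument: every set the algorithm must avoid at round \(t\) is finite, with a bound that does not depend on \(m\), while the support of any candidate hypothesis restricted to \(\{1,\ldots,m\}\) grows without bound in \(m\).

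Fix a round \(t\). First I would dispose of the branch \(V_t=\emptyset\). There the algorithm outputs an arbitrary element of \(S_t\), so I only need \(S_t\neq\emptyset\). Since \(O_0=\emptyset\), the first example \(x_1\) is never a replay, so \(x_1\in S_1\subseteq S_t\). Hence this branch terminates immediately.

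Now assume \(V_t\neq\emptyset\). For each \(m\) reached in the repeat-until loop I would check two things.

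\textbf{Well-definedness of \(n^{(t,m)}\).} Let \(n_0=\min V_t\). Then \(h_{n_0}\) is \((t,m)\)-critical with replay for every \(m\). Condition 1 holds because \(n_0\in V_t\). Condition 2 holds vacuously: any \(i<n_0\) with \(S_t\subseteq\supp{h_i}\) would satisfy \(i\le t\), hence \(i\in V_t\), contradicting minimality. So the set whose maximum defines \(n^{(t,m)}\) is a nonempty subset of \(\{1,\ldots,t\}\), and the maximum exists. It also satisfies condition 1, so \(n^{(t,m)}\in V_t\).

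\textbf{Size of the forbidden set.} The algorithm avoids \(S_t\cup O_{t-1}\cup W^{(t,m)}\). Here \(|S_t|\le t\) and \(|O_{t-1}|\le t-1\). The witness set has at most one witness per ordered pair \(j<i\) in \(V_t\subseteq\{1,\ldots,t\}\), so \(|W^{(t,m)}|\le t^2\). The total is therefore at most \(C_t\coloneqq 2t-1+t^2\), which is independent of \(m\).

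\textbf{Concluding step.} By the UUS property, each \(\supp{h_i}\) with \(i\in V_t\) is infinite, so \(|\supp{h_i}[m]|\to\infty\) as \(m\to\infty\). Since \(V_t\) is finite, there is an \(M_t\) such that \(|\supp{h_i}[m]|>C_t\) for all \(i\in V_t\) and all \(m\ge M_t\). Once the loop counter reaches such an \(m\), the hypothesis \(h_{n^{(t,m)}}\) has more elements in \(\{1,\ldots,m\}\) than the forbidden set can cover. The inner for-loop then finds an admissible \(x\), sets \(o_t\neq\bot\), and the repeat-until loop exits. Because \(m\) increases by one per iteration starting from a finite value, this happens after finitely many iterations.

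I would also note that each iteration is itself finite. It ranges over finitely many pairs, finitely many hypotheses \(i\le t\), and prefix elements \(\le m\), using finitely many membership queries.

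The main subtlety is that \(n^{(t,m)}\) may change with \(m\). This is why I bound the forbidden set uniformly over all of \(V_t\), rather than tracking a single hypothesis as \(m\) grows.
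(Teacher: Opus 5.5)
Your proof is correct, but it reaches termination by a different route than the paper. The paper first shows that the predicate ``\(h_i\) is \((t,m)\)-critical with replay'' is monotone in \(m\): once condition 2 fails at some prefix, the same distinguishing element keeps it failing. Hence \(n^{(t,m)}\) is nonincreasing in \(m\), and since it takes values in \(\{1,\ldots,t\}\), it equals some \(\bar n\) for all \(m\ge m_0\). The paper then needs only the growth of the single set \(\supp{h_{\bar n}}[m]\) to beat the \(m\)-independent bound on the excluded set. You skip the stabilization step entirely. Condition 1 gives \(n^{(t,m)}\in V_t\) for every \(m\), and \(V_t\) is finite, so one threshold \(M_t\) works for every hypothesis the loop could select, however \(n^{(t,m)}\) moves with \(m\). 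Your argument is shorter and uses condition 2 only to show that \(\min V_t\) is always critical, so that the maximum is well defined. The paper's argument gives the extra fact that the selected hypothesis settles down within each round, though nothing later in the analysis uses it: the eventual-validity lemma only needs \(n^{(t,m)}\ge z\) at the final \(m\). You also justify why the \(V_t=\emptyset\) branch is well defined, namely \(x_1\in S_t\) because \(O_0=\emptyset\); the paper leaves this implicit.
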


\begin{proof}
Fix \(t\in\bN\).
If \(V_t=\emptyset\), the algorithm outputs some \(o_t\in S_t\) and terminates.
Assume \(V_t\neq\emptyset\).
Then, for each \(m\), there exists at least one \((t,m)\)-critical hypothesis: the minimal index in \(V_t\) is \((t,m)\)-critical with replay, since condition (ii) of Definition~\ref{def:tm-critical-replay} is vacuous in this case. \looseness=-1

As \(m\) increases, the predicate ``\(h_i\) is \((t,m)\)-critical with replay'' is monotone: once false for a certain \(m'\), it remains false for all \(m\geq m'\).\footnote{
        Indeed, condition 1 of~Definition~\ref{def:tm-critical-replay} is independent of \(m\). 
        Moreover, if condition 2 fails at some prefix \(m'\), then there exist \(j<i\) with \(S_t\subseteq\supp{h_j}\) and some \(x\le m'\) such that \(x\in\supp{h_i}\) and \(x\notin\supp{h_j}\cup O_{t-1}\).
        The same \(x\) belongs to every larger prefix, so condition 2 also fails for every \(m\ge m'\).
        Therefore, the set of \((t,m)\)-critical hypotheses can only shrink as \(m\) increases, and \(n^{(t,m)}\) is nonincreasing in \(m\).
    } 
Thus, \(n^{(t,m)}\) is nonincreasing in \(m\).
Additionally, it takes values in the finite set \(\bc{1,\ldots,t}\).
Hence, there exists \(m_0<\infty\) such that \(n^{(t,m)}=\bar n\) for all \(m\ge m_0\).

Fix any \(m\ge m_0\).
The excluded set in the output-selection loop,
\[
E^{(t,m)} \coloneqq S_t \cup O_{t-1} \cup W^{(t,m)},
\]
is finite and has size at most \(|S_t|+|O_{t-1}|+|V_t|\br{|V_t|-1}/2 \le 2t+t^2\).
Since \(\supp{h_{\bar n}}\) is infinite, the cardinality of \(\supp{h_{\bar n}}[m]\) diverges with \(m\).
Therefore, for all sufficiently large \(m\ge m_0\) we have
\[
\supp{h_{\bar n}}[m]\setminus E^{(t,m)} \neq \emptyset.
\]
For such an \(m\), the for-loop finds an admissible \(x\) and sets \(o_t\neq\bot\), causing the repeat-until loop to terminate.\looseness=-1
\end{proof}

\begin{lemma}[Eventual validity]\label{lem:wp-eventual-valid}
There exists \(t^\star<\infty\) such that for all \(t\ge t^\star\) the output satisfies
\[
o_t \in \supp{h_z}\setminus \bc{x_1,\ldots,x_t}.
\]
\end{lemma}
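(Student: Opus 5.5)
The plan is to combine Lemma~\ref{lem:wp-eventual-critical} with the familiar ``largest critical index'' argument of \citet{kleinberg2024language}, adapted to the replay-aware definition. Let \(t_1\) be the time given by Lemma~\ref{lem:wp-eventual-critical}, and set \(t^\star\coloneqq\max\{t_1,z\}\). Fix any \(t\ge t^\star\). Since \(S_t\subseteq\supp{h^\star}=\supp{h_z}\) and \(z\le t\), we have \(z\in V_t\). Hence \(V_t\neq\emptyset\), and the algorithm enters the repeat-until loop. By Lemma~\ref{lem:wp-terminates}, the loop terminates at some finite prefix \(m\), producing \(o_t\in\supp{h_{n}}[m]\setminus\br{S_t\cup O_{t-1}\cup W^{(t,m)}}\) with \(n\coloneqq n^{(t,m)}\).

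\emph{Validity.} By Lemma~\ref{lem:wp-eventual-critical}, \(h_z\) is \((t,m)\)-critical with replay for this particular \(m\). Because \(z\le t\) and \(n\) is the largest critical index at most \(t\), we get \(n\ge z\).
\begin{itemize}
  \item If \(n=z\), then \(o_t\in\supp{h_z}\) immediately.
  \item If \(n>z\), we use condition 2 of Definition~\ref{def:tm-critical-replay} for \(h_n\) with \(i=z\). This is legitimate because \(z<n\) and \(S_t\subseteq\supp{h_z}\). It gives \(\supp{h_n}[m]\subseteq\supp{h_z}[m]\cup O_{t-1}\). Since the output rule guarantees \(o_t\notin O_{t-1}\), we conclude \(o_t\in\supp{h_z}\).
\end{itemize}
Note the role of the replay-relaxed criticality here. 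Elements of \(\supp{h_n}\) lying outside \(\supp{h_z}\) are tolerated only when they are already in \(O_{t-1}\), and exactly those elements are excluded from output.

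\emph{Novelty.} Take any \(s\le t\). Either \(x_s\notin O_{s-1}\), in which case \(x_s\in S_s\subseteq S_t\), or \(x_s\in O_{s-1}\subseteq O_{t-1}\). Thus \(\bc{x_1,\ldots,x_t}\subseteq S_t\cup O_{t-1}\). Since \(o_t\) avoids this set by the selection rule, \(o_t\notin\bc{x_1,\ldots,x_t}\).

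The only delicate point is ensuring that criticality of \(h_z\) holds at the specific, data-dependent prefix \(m\) where the loop stops. This is exactly why Lemma~\ref{lem:wp-eventual-critical} is stated uniformly over all \(m\in\bN\). Once that uniformity is in hand, the two cases above close the argument with no further obstacle.
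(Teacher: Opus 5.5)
Your proof is correct and follows the paper's argument essentially step for step: \(z\in V_t\) gives \(V_t\neq\emptyset\); criticality of \(h_z\) at the terminal prefix \(m\) gives \(n^{(t,m)}\ge z\); condition 2 with \(i=z\), together with \(o_t\notin O_{t-1}\), gives validity; and \(\{x_1,\ldots,x_t\}\subseteq S_t\cup O_{t-1}\) gives novelty. Taking \(t^\star=\max\{t_1,z\}\) is a harmless extra safeguard that makes \(z\le t\) explicit (needed both for \(z\in V_t\) and for \(z\) to enter the maximum defining \(n^{(t,m)}\)), whereas the paper leaves this implicit.
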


\begin{proof}
Let \(t^\star\) be as in Lemma~\ref{lem:wp-eventual-critical} and fix \(t\ge t^\star\).
The branch \(V_t=\emptyset\) cannot occur since \(h_z\) is always consistent with \(S_t\) and hence \(z\in V_t\).
Thus, \(V_t\neq\emptyset\) and the algorithm outputs some \(o_t\in \supp{h_{n^{(t,m)}}}[m]\setminus \br{S_t\cup O_{t-1} \cup W^{(t,m)}}\) for the final value of \(m\) in the repeat-until loop.

Since \(h_z\) is \((t,m)\)-critical, we have \(n^{(t,m)} \geq z\).
If \(n^{(t,m)}=z\), then \(o_t\in\supp{h_z}\) immediately.
If \(n^{(t,m)}>z\), applying condition (ii) of Definition~\ref{def:tm-critical-replay} to the pair \((i,n)=(z,n^{(t,m)})\) yields
\[
\supp{h_{n^{(t,m)}}}[m] \subseteq \supp{h_z}[m] \cup O_{t-1}.
\]
Because \(o_t\notin O_{t-1}\) by construction, it follows that \(o_t\in \supp{h_z}\).
Finally, \(o_t\notin S_t\) and \(o_t\notin O_{t-1}\) implies \(o_t\notin \bc{x_1,\ldots,x_t}\), since every observed example is either sure (hence in \(S_t\)) or a replay (hence in \(O_{t-1}\)).
Thus, \(o_t\in \supp{h_z}\setminus\bc{x_1,\ldots,x_t}\), as claimed.
\end{proof}

We can finally provide a proof of Theorem~\ref{thm:limit_replay_mq_only}.

\begin{proof}[Proof of Theorem~\ref{thm:limit_replay_mq_only}]
    \Cref{alg:computable-wp} only requires membership queries to evaluate \(h_i(x)\) for \(i\le t\) and \(x\leq m\), where \(t,m\) are finite.
    Additionally, Lemma~\ref{lem:wp-terminates} shows that, at every time \(t\), \Cref{alg:computable-wp} outputs some \(o_t\) after finitely many operations.
    Hence, \Cref{alg:computable-wp} is a computable procedure that can be implemented using membership queries alone.
    Now, fix any target \(h^\star \in \cH\).
    For any enumeration with replay for \(h^\star\) and \textsc{WP}, Lemma~\ref{lem:wp-eventual-valid} gives a time \(t^\star\) after which every output is fresh and valid for \(h^\star\).
    Therefore, \Cref{alg:computable-wp} generates \(h^\star\) in the limit with replay.
    Since \(h^\star\in\cH\) was arbitrary, the theorem follows.
\end{proof}

\subsection{Separation Between Generation in the Limit With and Without Replay}
\label{subsec:separation-with-without-replay}

While the previous result shows that replay does not impose additional hardness on the generatability in the limit of countable hypothesis classes, it leaves open whether replay can ever make generation strictly harder. 
The following theorem answers this in the affirmative: there are (uncountable) classes that are generatable in the limit in the standard sense but not generatable in the limit when replay is allowed.

\begin{restatable}{theorem}{thmseparationlimitreplay}
\label{thm:separation-lim-with-replay}
There exists a hypothesis class \(\cH\) that is generatable in the limit but is not generatable in the limit with replay.
\end{restatable}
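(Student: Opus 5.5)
The plan is to build an uncountable class that is easy to generate without replay because it has a \emph{safe region}: a region of the domain where fresh outputs are always valid once a certain kind of example has been seen. Under replay, the generator can only tell which region it is in from examples it can trust, and the class is built so that the target is determined only by examples the replay adversary can forge. The separation would then come from a diagonalization against an arbitrary generator, in the spirit of the proof of \Cref{thm:hardness-non-unif}, but adapted to enumerations with replay. By \Cref{thm:limit_replay_mq_only}, the class must be uncountable. So the hard targets will be indexed by infinite subsets \(B\subseteq\bN\), and \(B\) will be chosen \emph{after} the generator, as a function of its outputs.

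Concretely, I would take \(\cX=\bZ\). The class contains \(h_\infty\) with \(\supp{h_\infty}=\bN\) and, for every infinite \(B\subseteq\bN\), a hypothesis \(h_B\) whose support is \(B\) together with an infinite ``tagged'' copy of \(B\) (for instance \(\bc{-b: b\in B}\)) or another coding of \(B\) on the negatives. The tagged part plays the role of a safe region when the data are honest.

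The first step is to show generatability in the limit \emph{without} replay. As long as no negative example has appeared, the generator outputs fresh naturals; this is correct for \(h_\infty\). Once negatives appear, the generator reads off information about \(B\) from them and outputs fresh elements certified by that information, for example outputting \(b\) whenever \(-b\) has been seen but \(b\) has not. Since an honest enumeration of \(h_B\) eventually reveals all of its support, such certified elements exist infinitely often. I would also check that the UUS property holds.

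The second step, which is the main obstacle, is the diagonalization under replay. Fix any generator \(\cG\). The adversary runs in stages. In each stage it presents the next natural not yet shown (so that, if no \(h_B\) is ever committed to, the stream is an enumeration of \(h_\infty\)) and replays every output of \(\cG\). If \(\cG\) outputs fresh naturals, those outputs are exactly the elements that the adversary will later claim lie outside \(B\). The adversary then defines \(B\) to be the honestly presented points that \(\cG\) never output, builds \(B\) infinite stage by stage, and interleaves the tagged elements \(-b\) so that the whole stream is an enumeration with replay of \(h_B\). The crux is to arrange that at infinitely many stages \(\cG\) must output an element that is invalid for \(h_B\); this uses the standard ``if you never commit you fail on \(h_\infty\), if you commit you get trapped'' dichotomy, with replays making \(\cG\)'s own outputs indistinguishable from genuine examples. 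Compared with \Cref{thm:hardness-non-unif}, the difficulty is that in the limit the adversary must eventually enumerate all of \(\supp{h_B}\), so the coding of \(B\) must be arranged so that the genuine tagged examples never disambiguate fast enough.

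Getting this to work may require adjusting the coding. The simple coding \(\supp{h_B}=B\cup(-\bN)\) fails, because all negatives form a safe region even under replay. The first adjustment I would try is to let the negative part also depend on \(B\), so that no region of the domain is simultaneously safe for all hypotheses consistent with the trustworthy examples. I would then verify that the standard generator of the first step survives this change.
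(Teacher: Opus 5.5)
There is a genuine gap. You never exhibit a class that satisfies both halves, and the concrete class you propose already fails the \emph{non}-replay half. Take $\supp{h_B}=B\cup\bc{-b : b\in B}$, and let the adversary present $b_1,-b_1,b_2,-b_2,\ldots$. It chooses each $b_{k+1}$ after seeing $o_{2k}$, larger than $b_k$ and than $\abs{o_1},\ldots,\abs{o_{2k}}$. At every even time $2k$, each seen element already has its mirror seen, so nothing fresh is certified. Any fresh output has $\abs{o_{2k}}\notin\bc{b_1,\ldots,b_k}$, and all later $b$'s exceed $\abs{o_{2k}}$, so $\abs{o_{2k}}\notin B$ and $o_{2k}\notin\supp{h_B}$. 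This is an ordinary enumeration of $h_B$ on which every generator errs infinitely often. Your claim that ``certified elements exist infinitely often'' does not suffice, because success in the limit requires correctness at \emph{all} large times. The other coding, $B\cup(-\bN)$, fails under replay, as you note. Your proposed adjustment only restates the tension that has to be resolved: honest data must eventually certify a safe region, while the data the adversary can forge must keep that region uncertain.

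The diagonalization is also missing its engine. Since $\cG$ knows its own past outputs, every example it did not output is certainly in the target's support, so replay only hides the membership of elements $\cG$ itself chose to output. In your template the only source of pressure is $h_\infty$. Against a generator that outputs no negatives before seeing one, $h_\infty$ is eliminated by the first negative, which is necessarily non-replayed. After that, nothing in your class compels $\cG$ to commit, so the ``never commit / get trapped'' dichotomy yields only finitely many errors. The missing idea is to force $\cG$ to output an element whose membership \emph{decides which subclass the target lies in}. Once that element is replayed, two readings with incompatible safe regions stay consistent with the stream forever, and one of them must be rich enough to re-force a commitment in every phase.

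The paper does this with markers. The all-marker hypothesis forces a fresh output $*^z$, which the adversary then replays. In each phase $n$ the adversary presents $z-n$ and then $J_{n-1}+1,J_{n-1}+2,\ldots$. If the phase never ended, the stream (with $*^z$ as a replay) would enumerate a hypothesis in $\widetilde{\cH}^{z-1}_1$ whose only unseen elements are integers above $J_{n-1}$, so $\cG$ must output such an integer $J_n$. Yet the whole stream, with $*^z$ genuine, enumerates a hypothesis in $\widetilde{\cH}^{z}_2$ whose free set $A$ omits every $J_n$ (\Cref{lem:H-not-gen-replay}). Without replay the markers are genuine and reveal the index $b^\star$, after which the generator of \citet{bai2025language} for $\cH^{b^\star}$ takes over (\Cref{lem:H-gen-limit}).
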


We prove~\Cref{thm:separation-lim-with-replay} by an explicit construction loosely based on \citet{bai2025language}. 
First, we need to introduce some additional notation.
Let the domain be \(\cX \coloneqq \bZ \cup \bc{*^n \mid n \in \bN}\). 
Strings of the form \(*^n\) act as ``special tokens'' that will index the relevant subclass.
For \(b \in \bN_0\), define
\begin{align*}
\cH^b_1 &\coloneqq \bc{ h \in \bc{0,1}^{\cX} \,\Big|\, \supp{h}=\bc{b}\cup A \cup \bc{x\in\bZ : x>j} \text{ for some } A\subseteq \bZ,\ j>b},\\
\cH^b_2 &\coloneqq \bc{ h \in \bc{0,1}^{\cX} \,\Big|\, \supp{h}=\bc{x\in\bZ : x<b}\cup A \text{ for some } A\subseteq \bZ\setminus\bc{b}},
\end{align*}
and let \(\cH^b \coloneqq \cH^b_1 \cup \cH^b_2\).
Informally, \(\cH^b_1\) contains hypotheses whose support includes \(b\) and contains all integers larger than some cutoff \(j>b\), whereas \(\cH^b_2\) contains hypotheses that omit \(b\) but include every integer less than \(b\).
Both classes also include an arbitrary subset \(A\) of the remaining integers.\footnote{
    We note that $\mathcal{H}^b$ is generatable in the limit but not generatable in the limit with a single omission (i.e., by omitting $b$), as shown by \citet{bai2025language} (where they set $b=0$).
    However, $\mathcal{H}^b$ is generatable in the limit with replay: the generator of \citet{bai2025language} that works in the standard setting can be adapted to the replay setting by restricting it from outputting the crucial string $b$.
    Therefore, in order to show a separation between generation in the limit with and without replay, we will need a more involved construction.
    }

Next, for \(i\in\bc{1,2}\) define
\[
\widetilde{\cH}^b_i \coloneqq \bc{ \tilde h \in \bc{0,1}^{\cX} \,\Big|\, \supp{\tilde h}=\supp{h}\cup \bc{*^k : 1\le k\le b} \text{ for some } h\in \cH^b_i},
\]
and let \(\widetilde{\cH}^b \coloneqq \widetilde{\cH}^b_1 \cup \widetilde{\cH}^b_2\).
Thus, \(\widetilde{\cH}^b_i\) is obtained from \(\cH^b_i\) by adding the marker strings \(*^1,\ldots,*^b\) to the support of each hypothesis.
Finally, define the class
\[
\cH \coloneqq \bc{h^{\mathrm{mk}}} \cup \bigcup_{b\in\bN_0}\widetilde{\cH}^b
\quad \text{with} \quad h^{\mathrm{mk}}\coloneqq \mathbf{1}\bc{*^n \mid n \in \bN}.
\]
That is, \(\cH\) contains the all-marker hypothesis \(h^{\mathrm{mk}}\) together with all the padded classes \(\widetilde{\cH}^b\).

To prove~\Cref{thm:separation-lim-with-replay}, Lemma~\ref{lem:H-gen-limit} shows that \(\cH\) is generatable in the limit, while Lemma~\ref{lem:H-not-gen-replay} shows that \(\cH\) is not generatable in the limit with replay.
Interestingly, the separation relies on only a single replayed example.

\begin{lemma}\label{lem:H-gen-limit}
The class \(\cH\) is generatable in the limit.
\end{lemma}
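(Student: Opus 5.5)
The plan is to write down an explicit generator $\cG$ for the standard (no-replay) setting and check that it succeeds on every enumeration of every $h^\star\in\cH$. The idea is that the marker strings $*^k$ eventually reveal the index $b$ of the subclass $\widetilde{\cH}^b$ containing the target. Once $b$ is known, whether the integer $b$ has appeared tells us which of $\cH^b_1,\cH^b_2$ we are in. Each of these two subclasses is generatable by a simple ``move to the extreme'' rule.

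\textbf{The generator.} At time $t$, let $M_t\subseteq\bN$ be the set of $k$ with $*^k\in\bc{x_1,\dots,x_t}$, and let $I_t\coloneqq\bc{x_1,\dots,x_t}\cap\bZ$. Set $\hat b_t\coloneqq\max M_t$, with $\hat b_t\coloneqq0$ if $M_t=\emptyset$. The generator acts in three cases.
\begin{enumerate}[nosep]
  \item If $I_t=\emptyset$, output the marker $*^{n}$ with $n$ larger than every marker index seen or previously output.
  \item If $I_t\neq\emptyset$ and $\hat b_t\in I_t$, output $1+\max\br{I_t\cup\bc{o_s\in\bZ: s<t}}$.
  \item If $I_t\neq\emptyset$ and $\hat b_t\notin I_t$, output $-1+\min\br{I_t\cup\bc{\hat b_t}\cup\bc{o_s\in\bZ : s<t}}$.
\end{enumerate}

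\textbf{Verification.} Let $(x_t)_{t\ge1}$ enumerate $\supp{h^\star}$.
\begin{enumerate}[nosep]
  \item If $h^\star=h^{\mathrm{mk}}$, no integer ever appears, so the first case always applies. Each output is then a marker, which is valid, and it is novel because its index exceeds all seen indices.
  \item Otherwise $h^\star\in\widetilde{\cH}^b$ for a unique $b$. Its support contains infinitely many integers (all $x>j$ in $\cH^b_1$, all $x<b$ in $\cH^b_2$) and exactly the markers $*^1,\dots,*^b$. Hence there is a finite time $t_0$ after which $I_t\neq\emptyset$ and $M_t=\bc{1,\dots,b}$, so $\hat b_t=b$ for all $t\ge t_0$. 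This also covers $b=0$, where $M_t=\emptyset$ and $\hat b_t=0$.
  \item If $h^\star\in\widetilde{\cH}^b_1$, then $b\in\supp{h^\star}$ appears at a finite time, so after some $t_1\ge t_0$ the generator is always in the second case. Its outputs strictly exceed every previously seen integer and every earlier output, so they are novel. They also increase by at least one per round, so eventually they exceed the cutoff $j$, which is fixed by $h^\star$, and lie in $\supp{h^\star}$.
  \item If $h^\star\in\widetilde{\cH}^b_2$, then $b\notin\supp{h^\star}$ never appears, so for $t\ge t_0$ the generator is in the third case. Its outputs are integers strictly below $b$, hence valid, and they are novel because they are strictly below every seen integer and every earlier output.
\end{enumerate}

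The main point to get right is that outputs before stabilization can be ignored, since success tolerates arbitrary outputs before $t^\star$. The one real check is that $\hat b_t$ stabilizes at exactly $b$. This holds because $\hat b_t$ is nondecreasing, is bounded by $b$ for targets in $\widetilde{\cH}^b$, and reaches $b$ once $*^b$ is enumerated. The ``extreme'' outputs need the $\max$ and $\min$ over previous outputs as well as seen integers to guarantee novelty and monotone drift. The resulting $t^\star$ depends on the enumeration, which is allowed for generation in the limit. Note that this generator does output the crucial integer-side information freely; in the replay setting that is exactly what breaks, which Lemma~\ref{lem:H-not-gen-replay} exploits.
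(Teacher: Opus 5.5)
Your proof is correct and follows the paper's argument. Estimate $b$ by the largest marker index seen, output fresh markers while no integer has appeared, and otherwise run the Bai et al.\ ``move to the extreme'' rule for $\cH^b$; the paper invokes this as the subroutine $\cG^{m(t)}$ on the integer subsequence $Z_t$, and you write it out inline. One small correction to your closing aside: Lemma~\ref{lem:H-not-gen-replay} exploits the fresh marker $*^z$ that the generator must output to handle $h^{\mathrm{mk}}$, which makes the marker count untrustworthy once replayed, not the integer outputs; as the paper's footnote notes, $\cH^b$ alone stays generatable with replay if the generator avoids outputting $b$.
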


\begin{proof}
Fix \(b\in\bN_0\). Let \(\cG^b\) be the generator from \citet{bai2025language} that generates \(\cH^b\) in the limit.
We briefly recall its definition:
\[\cG^b(x_1,\ldots,x_t)\coloneqq
\begin{cases}
    \max\bc{t,o_1,\ldots,o_{t-1},x_1,\ldots,x_t} + 1 & \text{if } b\in\bc{x_1,\ldots,x_t},\\
    \min\bc{b,o_1,\ldots,o_{t-1},x_1,\ldots,x_t} - 1 & \text{otherwise.}
\end{cases}\]
Essentially, if $h^\star\in\cH^b_1$ then \(\cG^b\) will observe \(b\) and eventually output unseen integers larger than the cutoff $j$; otherwise, if $h^\star\in\cH^b_2$ then \(\cG^b\) will always take the second branch and output unseen integers smaller than $b$.
Since \(\widetilde{\cH}^b\) only augments each \(h\in\cH^b\) by the same finite set of \(*\)-strings, the same generator \(\cG^b\) (ignoring the \(*\)-strings in its input) generates \(\widetilde{\cH}^b\) in the limit. 

We now use this to define a single generator \(\cG\) for \(\cH\).
Given a history \((x_1,\ldots,x_t)\), define
\[m(t)\coloneqq \max \bc{k\in\bN : *^k \in \bc{x_1,\ldots,x_t}},\]
with \(m(t)=0\) if no \(*\)-string has appeared.
Let \(Z_t \coloneqq \br{x_s \colon x_s \in \bZ,\ 1 \le s \le t}\) denote the subsequence of integer-valued examples in $x_{1:t}$, and set
\[\cG(x_1,\ldots,x_t)\coloneqq
\begin{cases}
    *^{m(t)+1} & \text{if } \bc{x_1,\ldots,x_t}\subseteq \bc{*^n\mid n\in\bN},\\
    \cG^{m(t)}\br{Z_t} & \text{otherwise.}
\end{cases}\]
Thus, if \(h^\star=\mathbf{1}\bc{*^n\mid n\in\bN}\) then the output of \(\cG\) is always an unseen \(*\)-string, so \(\cG\) generates \(h^\star\) in the limit.
Otherwise, for any \(h^\star\in\bigcup_{b}\widetilde{\cH}^b\), note that \[b^\star \coloneqq \max\bc{k : *^k \in\supp{h^\star}}\] equals the unique index such that \(h^\star\in \widetilde{\cH}^{b^\star}\).
Hence, on any enumeration \(\br{x_t}_{t\ge 1}\) of \(\supp{h^\star}\), the value of \(m(t)\) is nondecreasing and stabilizes to \(b^\star\) after the finite time \(t^\prime\) when \(*^{b^\star}\) appears in the enumeration.
Moreover, since \(\supp{h^\star} \cap \bZ\) is infinite, any enumeration of \(\supp{h^\star}\) must present an integer at some finite time \(t^{\prime\prime}\); after that time, \(\cG\) always takes the second branch.
Therefore, \(\cG\) copies \(\cG^{b^\star}\) for all \(t \geq \tilde{t}\coloneqq\max\bc{t^\prime, t^{\prime\prime}}\).
Since \(\cG^{b^\star}\) generates \(\widetilde{\cH}^{b^\star}\) in the limit, there exists $t^\star\in\bN$ such that \(\cG^{b^\star}\br{Z_t}\in\supp{h^\star}\setminus\bc{x_1,\ldots,x_t}\) for all \(t\ge t^\star\).
Thus, \(\cG\br{x_1,\ldots,x_t}\) is guaranteed to be a valid output for all \(t\geq\max\bc{\tilde{t},t^\star}\).
\end{proof}

\begin{lemma}\label{lem:H-not-gen-replay}
The class \(\cH\) is not generatable in the limit with replay.
\end{lemma}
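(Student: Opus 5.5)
The plan is to play an adaptive adversary against an arbitrary generator $\cG$ and build an enumeration with replay on which $\cG$ errs infinitely often. The single replayed example will be a marker string that $\cG$ itself was forced to output. The role of the markers is to hide which subclass $\widetilde{\cH}^b$ the target belongs to, so that $\cG$ is in the same position as a generator for $\cH^b$ facing an omission of the crucial integer $b$. That situation is known to be hopeless by \citet{bai2025language}, as recalled in the footnote before Lemma~\ref{lem:H-gen-limit}.

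\textbf{Step 1 (forcing a marker output).} Feed $\cG$ a stream consisting only of genuine marker strings $*^1,*^2,\ldots$, enumerating them in order. This is an enumeration for $h^{\mathrm{mk}}\in\cH$. Since $\cG$ must succeed on it, at some finite time $\cG$ outputs a fresh marker $*^B$ whose index exceeds every marker revealed so far. Stop at that time $t_0$; the history so far is $*^1,\ldots,*^{k}$ with $k<B$. From now on the adversary is allowed to replay $*^B$ once. After that replay, the revealed marker set is $\{*^1,\ldots,*^k,*^B\}$. This history is consistent with a genuine target in $\widetilde{\cH}^B$, provided the adversary later also reveals the missing markers $*^{k+1},\ldots,*^{B-1}$ genuinely. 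It is equally consistent with a target in $\widetilde{\cH}^{k}$ for which $*^B$ is merely a replay. The aim is that $\cG$'s future behaviour is a single function of the history, while the adversary keeps both interpretations alive.

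\textbf{Step 2 (reduction to the omission game).} After the replay, the adversary reveals only integers, together with the genuine markers needed by whichever interpretation it finally commits to. I would first check that, viewed as targets for the integer part, the two interpretations place $\cG$ in the situation of the single-omission lower bound for $\cH^b$. Choose the crucial integer to be $b=k$ in one interpretation and $b=B$ in the other. Because the sets $A$ are arbitrary, both $\cH^k$ and $\cH^B$ contain hypotheses agreeing with any prescribed finite pattern away from $k$, $B$ and the relevant tails.

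\textbf{Step 3 (diagonalization).} Run the adversary in phases, following the Bai et al.\ omission argument. The adversary withholds the crucial integer and reveals integers in a dovetailed order, so that every integer of the eventual support appears. In each phase it looks at where $\cG$'s integer outputs go.
\begin{itemize}
\item If $\cG$ keeps producing large integers, as if the target were of type $\cH_1$, the adversary commits, for the current phase, to a type-$\cH_2$ hypothesis for the relevant $b$. Using the freedom in $A$, it leaves the fresh outputs out of the support.
\item If instead $\cG$ outputs small integers below the crucial one, the adversary switches towards a type-$\cH_1$ target whose tail cutoff $j$ exceeds those outputs. Again it uses $A$ to exclude them, and reveals the crucial integer late.
\end{itemize}
Each phase forces at least one invalid output. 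The limit target must nevertheless lie in $\cH$: every revealed example must be either in its support or a replay (only $*^B$ is a replay), and the stream must enumerate its support.

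I expect Step 3 to be the main obstacle. The difficulty is ensuring that the infinitely many phase switches converge to a single, legitimate $h^\star$, with a fixed $b$, a fixed type, and a cutoff $j$ that does not diverge. The outputs must also be excluded consistently with earlier genuine reveals. The role of the marker ambiguity is exactly to let the adversary postpone the choice between $b=k$ and $b=B$. I would organise the argument so that the adversary changes its mind only finitely often at the level of $(b,\text{type})$, and absorbs the infinitely many mistakes into the arbitrary set $A$. Failing that, I would fall back on a Baire-category style argument over the uncountable family of $A$'s.
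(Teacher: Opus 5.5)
\textbf{Gap 1: the pairing $b=k$ versus $b=B$ cannot be kept alive.} Every hypothesis in $\widetilde{\cH}^B$ contains $*^{k+1},\ldots,*^{B-1}$, so an enumeration for such a target must present these markers at finite times. But these markers lie outside every support in $\widetilde{\cH}^k$. Unless $\cG$ happens to have emitted them, which you cannot force, they are also not replays. So the $\widetilde{\cH}^k$ reading dies at a finite time (unless $B=k+1$).

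Your Step~3 needs the ambiguity in each of infinitely many phases, so it must survive forever. The two readings therefore have to be adjacent. Reveal $*^{k+1},\ldots,*^{B}$ immediately, so that $*^B$ is the only ambiguous example. Then play $b=B-1$ (with $*^B$ a replay) against $b=B$ (with $*^B$ genuine). This is what the paper does.

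\textbf{Gap 2: Step~3, the heart of the proof, is not carried out, and the switching strategy you sketch points the wrong way.} A type-2 limit target contains every integer below its $b$. Small outputs are therefore never invalid there; they count as mistakes only if repeated. Switching between types infinitely often leaves no single limit hypothesis. No switching, and no Baire-category argument, is needed. The missing idea is to commit from the outset to a final target in $\widetilde{\cH}^B_2$ and to use $\widetilde{\cH}^{B-1}_1$ only counterfactually.

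Concretely, set $J_0=B$. In phase $n$:
\begin{enumerate}
\item present $B-n$, so that every integer below $B$ eventually appears while $B$ itself never does;
\item then present $J_{n-1}+1, J_{n-1}+2, \ldots$ until $\cG$ outputs a fresh integer $J_n>J_{n-1}$;
\item never present $J_n$.
\end{enumerate}

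Each phase must end. Otherwise the stream is an enumeration with replay, with only $*^B$ replayed, of the hypothesis in $\widetilde{\cH}^{B-1}_1$ whose support is exactly the integers already revealed, plus the tail above $J_{n-1}$, plus $*^1,\ldots,*^{B-1}$. This hypothesis is legitimate: $B-1$ was revealed in phase~1, and the cutoff satisfies $J_{n-1}\ge B>B-1$. The key point is that the counterfactual hypothesis is \emph{tight}: its set $A$ consists only of integers already shown. Every non-tail element of its support has therefore appeared, so the fresh valid output that $\cG$ owes it must be an integer above $J_{n-1}$.

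The realized stream then enumerates a member of $\widetilde{\cH}^B_2$: all integers below $B$, a set $A$ of integers above $B$ that omits every $J_n$, and the markers $*^1,\ldots,*^B$. On this target, each $J_n$ is a mistake, giving infinitely many mistakes.
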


\begin{proof}
Assume for the sake of contradiction that there exists a generator \(\cG\) that generates \(\cH\) in the limit with replay.
We construct an adversarial enumeration with replay \((x_t)_{t\ge 1}\) adaptively.
Since \(\cG\) is deterministic, the adversary can choose the next input based on \(\cG\)'s outputs while maintaining a nonempty set of candidate target hypotheses consistent with the observed prefix stream.
The construction identifies a hypothesis \(h^\star \in \cH\) that remains consistent with the entire realized stream \((x_t)_{t\ge 1}\) and for which \(\cG\) outputs infinitely many invalid elements.
Write \(o_t\coloneqq \cG(x_{1:t})\).
The enumeration is defined in two steps.

In the first step, the adversary forces the generator to output a ``long'' \(*-\)string marker.
Let \(h^{\mathrm{mk}}\coloneqq \mathbf{1}\bc{*^n \mid n \in \bN}\).
The adversary begins enumerating the support of \(h^{\mathrm{mk}}\) by presenting the sequence \(x_t\coloneqq *^t\).
Since \(\cG\) generates \(\cH\) in the limit with replay, there exists a time step \(\tau\) such that \(o_\tau \in \supp{h^{\mathrm{mk}}}\setminus\{x_1,\ldots,x_\tau\}\).
Let the output be \(o_\tau=*^z\) for some \(z>\tau\).
The adversary then extends the input sequence by displaying all \(*-\)strings until \(*^z\) by setting \(x_{\tau+1}\coloneqq
*^{\tau+1},\ldots,x_z\coloneqq *^z\).
Let \(J_0\coloneqq z\).\footnote{
    At a high level, note that since we are in the replay setting, $\mathcal{G}$ cannot know whether $*^z$ belongs to the support of the target hypothesis $h^\star$ or not.
    Hence, even upon observing $z-1$, $\cG$ does not know whether $h^\star \in \widetilde{\cH}^{z-1}$---in which case $h^\star \in \widetilde{\cH}^{z-1}_1$ and $\supp{h^\star}$ necessarily contains all integers larger than some cutoff $j$---or $h^\star \in \widetilde{\cH}^z$, in which case observing $z-1$ is uninformative, as $h^\star$ could belong to either $\widetilde{\cH}^z_1$ or $\widetilde{\cH}^z_2$. The second step of our construction relies on this observation to force infinitely many mistakes.
}

In the second step, the adversary forces infinitely many mistakes in multiple phases.
For each \(n\in\bN\), at the beginning of phase \(n\), the adversary presents the integer \(z-n\), and then presents the increasing tail \(J_{n-1}+1,J_{n-1}+2,\ldots\) until the first time \(t_n\) at which \(\cG\) outputs an integer \(o_{t_n}\) satisfying \(o_{t_n} > J_{n-1}\) and \(o_{t_n}\notin \bc{x_1,\ldots,x_{t_n}}\).
The adversary then sets \(J_n\coloneqq o_{t_n}\) and proceeds to phase \(n+1\), never presenting \(J_n\) as an element of the sequence.
\begin{claim}\label{claim:infinite-mistakes}
Each phase terminates, i.e., \(t_n<\infty\) for all \(n\in\bN\).
\end{claim}

We prove this claim later. 
First, we show how the adversary can use this fact to force infinitely many mistakes. 
Let \(S \subseteq \bZ\) be the set of all integers ever presented by the above construction during the second step, and define \[\supp{h^\star} \coloneqq S \cup \bc{*^k:1\le k\le z} = \bigcup_{t\ge 1}\bc{x_t}.\] 
Since \emph{every} integer smaller than \(z\) appears in the enumeration \((x_t)_{t\ge 1}\) as \(z-n\) for some \(n\in\bN\), we can write \[S = \bc{x\in\bZ:x<z} \cup A \quad \text{where} \quad A \coloneqq \bc{x\in S : x>z} \subseteq \bZ\setminus\bc{z}.\]
This shows that the example sequence \((x_t)_{t\ge 1}\) enumerates the support of a hypothesis \(h^\star\) that belongs to \(\widetilde{\cH}^{z}_2\).
However, by construction, \(o_{t_n}=J_n\notin \supp{h^\star}\) for each \(n\in\bN\).
Hence, $\br{t_n}_{n\ge 1}$ is an infinite sequence of time steps at which $\cG$ makes a mistake, contradicting that \(\cG\) generates \(\cH\) in the limit with replay.
\end{proof}

\begin{proof}[Proof of~\Cref{claim:infinite-mistakes}.]
Fix \(n\in\bN\). 
Suppose, for the sake of contradiction, that phase \(n\) does not terminate, i.e., the adversary keeps presenting \(J_{n-1}+1,J_{n-1}+2,\ldots\), but the generator never outputs a fresh integer larger than \(J_{n-1}\).
Let \(\hat{h}\) be the hypothesis whose support is enumerated by such an example sequence, excluding the (potentially replayed) string \(*^z\). 
Denote by \(X_{<n}\subseteq \bZ\) the finite set of integers that appear \emph{before} the beginning of phase \(n\), with \(X_{<1}=\emptyset\) for \(n=1\). Then, we can write
\[
\supp{\hat{h}} \coloneqq X_{<n} \cup \bc{z-n} \cup \bc{x\in\bZ : x>J_{n-1}} \cup \bc{*^k : 1\le k\le z-1}.
\]
Notice that \(z-1\) belongs to the support of \(\hat{h}\) for any \(n\in\bN\): for \(n=1\), it appears directly as \(z-n\); for all other \(n\), it already appeared during a previous phase and is contained in \(X_{<n}\).
Consequently, \(\hat{h}\) is a valid hypothesis from \(\widetilde{\cH}_1^{z-1}\).
By construction, the only string in the adversary's sequence that does not belong to the support of \(\hat{h}\) is \(*^z\), which nevertheless appears in the sequence as a replay of the earlier output \(o_\tau=*^z\).
Thus, the adversary's sequence is a valid enumeration with replay for \(\hat{h}\) and \(\cG\).
Therefore, since \(\cG\) generates \(\cH\) in the limit with replay, \(\cG\) must eventually output an unseen element from the support of \(\hat{h}\).
Since all elements of \(X_{<n} \cup \bc{z-n} \cup \bc{*^k : 1\le k\le z-1}\) have already appeared, any such fresh element must belong to the tail \(\bc{x\in\bZ : x>J_{n-1}}\).
Consequently, \(\cG\) outputs an unseen integer larger than \(J_{n-1}\) at a finite time, contradicting the assumption that phase \(n\) does not terminate. \looseness=-1
\end{proof}

\begin{proof}[Proof of Theorem~\ref{thm:separation-lim-with-replay}]
    The hypothesis class \(\cH\) is generatable in the limit (Lemma~\ref{lem:H-gen-limit}), but not generatable in the limit with replay (Lemma~\ref{lem:H-not-gen-replay}).
\end{proof}

%% file: sections/07-proper.tex
\section{Proper Generation in the Limit with and without Replay}
\label{sec:proper}

We now shift our focus from \emph{improper} to \emph{proper} generation, where the generator outputs a hypothesis \(\hat{h}_t\) at each round.
We focus exclusively on the \emph{in-the-limit} notion of proper generatability for two reasons.
First, prior work on proper generatability has primarily addressed the in-the-limit setting: \citet{kleinberg2024language} established that all countable classes are properly generatable in the limit using a generator relying on membership and subset queries (see \Cref{thm:proper-gen-in-the-lim}).
In~\Cref{subsec:lb-proper}, we strengthen this line of work by providing a computational lower bound showing that membership queries alone are insufficient for proper generation in the limit in the standard setting.
Second, as we show in~\Cref{subsec:proper-with-replay}, the notion of proper generatability with replay is so strong that a separation from the standard setting arises even in the (easy) setting of generatability in the limit of finite classes.

\subsection{An Impossibility Result for Proper Generation in the Limit Using Only Membership Queries}
\label{subsec:lb-proper}
\citet{kleinberg2024language} give a universal membership-query-only algorithm that improperly generates in the limit any countable hypothesis class.
A similar algorithm also achieves \emph{proper} generation in the limit for any countable class, but requires additional access to subset queries.
The following result shows that access to additional queries besides membership queries is indeed \emph{necessary} for proper generation.

\begin{restatable}{theorem}{thmpropernofreelunch}
    \label{thm:lower-bound-proper}
    There cannot exist a (deterministic) generator $\cG$ that only makes membership queries and properly generates in the limit all countable hypothesis classes.
\end{restatable}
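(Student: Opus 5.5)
The plan is a diagonalization against an arbitrary deterministic membership-query-only generator \(\cG\). Fix such a \(\cG\) and take \(\cX=\bN\). An adversary builds \emph{simultaneously} a countable class \(\cH=\bc{h_0,h_1,h_2,\ldots}\) and an enumeration \(\br{x_t}_{t\ge1}\) of some target \(h^\star\in\cH\). Hypotheses are specified \emph{lazily}: whenever \(\cG\) asks ``\(x\in\supp{h_i}\)?'', the adversary answers and commits to that answer forever. Since \(\cG\) is deterministic and each round uses only finitely many queries, at every time only finitely many pairs \((i,x)\) are committed. The class \(\cH\) is defined as the limit of these commitments, with every uncommitted value filled in so that each support is infinite (UUS). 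The run of \(\cG\) on \(\br{x_t}\) with oracle \(\cH\) then coincides with the run the adversary simulated. The key intuition is that inclusion \(\supp{\hat h}\subseteq\supp{h^\star}\) is a \(\Pi_1\)-type property which finitely many membership answers can never certify, while proper success requires committing to it.

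Concretely, the adversary maintains a current candidate target \(h_{k}\) and a finite set \(F\) of committed ``in-target'' points (the examples shown so far), together with a set \(E\) of points committed to be outside the target. At each round it shows the next point of \(F\)'s growing enumeration (say the next natural number not in \(E\)), and declares it in \(\supp{h_k}\) and in the support of every hypothesis meant to stay consistent. When \(\cG\) outputs index \(\hat\imath_t\), the adversary acts as follows. If \(\hat\imath_t\) is the current candidate \(k\), it picks a fresh element \(e\) never queried for any hypothesis, commits \(e\in\supp{h_k}\) and \(e\in E\), and switches the candidate to a fresh index \(k'\) never queried so far, copying into \(h_{k'}\) the current committed positive set \(F\). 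This is consistent because \(h_{k'}\) has no prior commitments. If \(\hat\imath_t\) is any other non-candidate index, the adversary likewise plants a fresh unqueried element into \(\supp{h_{\hat\imath_t}}\) outside the target, which makes that output a mistake. The candidates form a chain on the committed positive part. The reserved index \(h_0\) is meant to be the ``limit'' hypothesis, whose support is \(F_\infty=\bigcup_t\bc{x_t}\), answered lazily as ``yes'' on shown points and ``no'' on \(E\).

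The case analysis is then as follows. If the candidate switches infinitely often, then \(\cG\) output a candidate, and hence erred, infinitely often, while the stream enumerates \(\supp{h_0}=F_\infty\), so \(h^\star=h_0\in\cH\) and \(\cG\) fails. If switching stops at a final candidate \(h_k\), the target is \(h_k\) with support \(F_\infty\). The remaining issue is \(\cG\) eventually outputting some \(\hat\imath\ne k\) forever with \(\supp{h_{\hat\imath}}\subseteq F_\infty\). Non-candidate outputs were handled by planting a fresh bad element, but that planting is only one mistake. The fix I would try first is to replant at every round, which is always possible since fresh unqueried elements always exist. The outputs \(h_0\) and \(h_k\) are then forced to be \emph{different} objects by planting into \(h_0\) a fresh point outside \(h_k\) whenever \(\cG\) dwells on \(0\).

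The main obstacle I expect is exactly this bookkeeping around \(h_0\) and the ``target must lie in \(\cH\)'' requirement. Planting bad points into \(h_0\) destroys its role as the limit of the chain, and in the finite-switch case one must ensure that the hypotheses \(\cG\) settles on are genuinely not subsets of the target, while in the infinite-switch case the limit must still be a member of \(\cH\). I would resolve this by reserving infinitely many ``limit'' indices, opening a new one (never queried) each time the old one is spoiled. The target in the infinite-switch case is then the limit index opened last, or, if infinitely many are opened, a final dedicated index whose answers are never exposed because every query to it triggers a spoil-and-reopen. Finally, I would verify the UUS property by appending infinitely many unqueried points to each support at the end.
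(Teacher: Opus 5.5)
Your setup (lazy commitments, candidate targets abandoned whenever $\cG$ outputs them, a reserved limit hypothesis for the infinite-switch case) is sensible. However, the obstacle you flag at the end is a genuine gap, and the repair you propose does not close it. The limit hypothesis has two incompatible jobs. It must be wrong in the finite-switch case, since otherwise $\cG$ wins by outputting it forever, as its support equals that of the final candidate. Yet in the infinite-switch case it must be an unspoiled member of $\cH$ with support exactly $F_\infty$. If you enforce the first requirement by planting into whichever limit index $\cG$ outputs, then every index that $\cG$ ever outputs ends up containing a point outside $F_\infty$.

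A concrete counterexample: let $\cG$ ignore its input and cycle through $\bN$ so that every index is output infinitely often. Then every candidate is eventually output, so there are infinitely many switches. Every limit index is eventually output, so there are infinitely many reopenings. Any fixed ``final dedicated index'' is output too. If you spoil it, no hypothesis of $\cH$ has support $F_\infty$, so the stream enumerates no target and the planted mistakes prove nothing. If you exempt it from spoiling, a generator that outputs it from round one on (zero switches) succeeds.

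Two smaller points. A single planted point already makes a hypothesis wrong at every later round it is output, so replanting is unnecessary. You also need an explicit answering policy, e.g.\ answer ``yes'' for all hypotheses on every newly queried point and push it into the stream, with fresh candidates chosen beyond every index queried \emph{or output}.

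The missing idea, which the paper uses, is to make the reference hypothesis wrong \emph{conditionally}: through what is withheld from the stream, not through planting. Fix one reference hypothesis $h_1$ that is never planted into. Maintain a trap pair $(i',j')$, created beyond everything touched so far, with $j'\notin\supp{h_{i'}}$ but $j'\in\supp{h_i}$ for all $i\neq i'$, and withhold $j'$ from the enumeration. When $\cG$ outputs $h_1$, do nothing. When it outputs some $h_{i_t}\neq h_1$, plant a fresh $d_t$ into $h_{i_t}$ only, release $j'$ into the stream, and install a new trap. The two cases then resolve as follows.
\begin{itemize}
\item If non-reference outputs occur infinitely often, every trap point is released. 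The stream then enumerates $\supp{h_1}$, and each such output is a mistake because of $d_t$.
\item Otherwise, the final trap hypothesis $h_{\bar\imath}$ is the target. The eventually constant output $h_1$ is then wrong forever, because $\bar\jmath\in\supp{h_1}\setminus\supp{h_{\bar\imath}}$ is never released.
\end{itemize}
In your bookkeeping, this means: instead of spoiling $h_0$, keep a withheld point lying in every hypothesis except the current candidate (in particular in $h_0$), and release it exactly when the candidate switches.
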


\begin{algorithm}[!htbp]
\caption{Hard Hypothesis Class for the Proper Generator $\cG$} \label{alg:lower-bound}
\begingroup
\setlength{\baselineskip}{1.05\baselineskip}
    \begin{algorithmic}[1]

        \Require Proper generator $\cG$
        
        \State Set $F(i,1) = 1$ for all $i \in \bN$
        \State Initialize enumeration queue: $Q \gets \{1\}$
        \State Set up the trap: \(\displaystyle F(i, 2) = \begin{cases}
            0 &\text{if } i = 2, \\
            1 &\text{if } i \neq 2 \\
        \end{cases}\)
        \State Initialize trap pair $\br{i',j'} \gets \br{2,2}$ 
        \State Initialize counters: $I \gets 2$ and $J \gets 2$ 
        
        \For{t=1,2,\ldots}
        \State Show $\cG$ the example $x_t \gets \min Q$; \;Remove $x_t$ from $Q$
         
        \State $k \gets 1$
        \While{$\cG$ issues a new membership query $(i,j)$}
            \State $m \gets \max\{j,k\}$
            \If{$m>J$}
                \For{$n=J+1,J+2,\ldots,m$}
                    \State Set $F(\ell,n)=1$ for all $\ell\in\bN$; \;Add $n$ to $Q$ 
                \EndFor
                \State $J\gets m$
            \EndIf
            \State $I\gets \max\{I,i\}$
            \State $k\gets k+1$
        \EndWhile

        \State Receive \(\mathcal{G}\)'s output $i_t$ \Comment{ Interpreted as \(\hat{h}_t = h_{i_t}\)}
        \State $I \gets \max \bc{ I,i_t}$

        \If{$i_t \neq 1$}
        \State Add $j'$ to $Q$
        \State Diagonalization step: $d_t \gets J+1$; \;Set \(\displaystyle F(i, d_t) = \begin{cases}
            1 &\text{if } i = i_t, \\
            0 &\text{if } i \neq i_t \\
        \end{cases}\)
        \State Set up a new trap: $e_t \gets J+2$; \;Set \(\displaystyle F(i, e_t) = \begin{cases}
            0 &\text{if } i = I+1, \\
            1 &\text{if } i \neq I+1 \\
        \end{cases}\)
        \State Update trap pair: $\br{i',j'} \gets \br{I + 1, e_t}$
        \State Update counters: $I \gets i'$ and $J \gets e_t$
        \EndIf
        
        \State Let $c_t \gets J+1$; \;Set $F(i, c_t)=1\, \forall i \in \bN$; \;Add $c_t$ to $Q$; \;Update $J \gets c_t$
        \EndFor
    \end{algorithmic}
\endgroup
\end{algorithm}

As described in \Cref{alg:lower-bound}, for any given computable proper generator $\cG$ that only makes membership queries, we construct a hard class $\cH$ on which $\cG$ makes infinitely many mistakes by simulating $\cG$'s interaction with an adversarial enumeration.
At a high level, this follows the same ``simulation template'' as the computational lower bound of \citet{charikar2024exploring}; our construction, however, maintains a \emph{countably infinite} class rather than only two hypotheses.
\Cref{alg:lower-bound} defines $\cH=\bc{h_1,h_2,\ldots}$ via a function $F: \bN \times \bN \to \{0,1\}$ defined as \looseness=-1
\[F(i,j) = \begin{cases}
    1 \quad \text{if } j \in \supp{h_i}, \\
    0 \quad \text{if } j \notin \supp{h_i}, \\
\end{cases}\]
which constitutes the (limited) interface available to $\cG$ to interact with the hypothesis class $\cH$.
To compute \(F(i,j)\), one would run \Cref{alg:lower-bound}---which in turn simulates \(\cG\)---until the value of \(F(i,j)\) is assigned.

Because $\cH$ is countable, we can assume that $\cG$ outputs an index $i_t\in\bN$, interpreted as the index of the output hypothesis; that is, $\hat{h}_t = h_{i_t}$.
Since $\cG$ is restricted to membership queries, at every step $t$, it will have gathered information about finitely many hypotheses and finitely many instances (i.e., elements of the domain $\cX$).
\Cref{alg:lower-bound} maintains two counters $I$ and $J$ that delimit the finite ``rectangle'' of hypothesis-instance pairs $(i,j)\in\bN\times\bN$ queried so far by $\cG$; outside this rectangle, it sets memberships adversarially.
To ensure that the revealed sequence enumerates the target support, the construction maintains a queue $Q$ and at each round reveals $x_t=\min Q$, which guarantees that each element entering $Q$ will be revealed after a finite number of rounds.
Additionally, it maintains a \emph{trap} pair $(i',j')$ of hypothesis \(h_{i'}\) and instance \(j'\) such that $j'\notin\supp{h_{i'}}$ but $j'\in\supp{h_i}$ for all $i\neq i'$.
The hypothesis \(h_1\) serves as a reference hypothesis.

The algorithm has two modes---\emph{diagonalization} and \emph{overgeneralization}---and it switches mode automatically by adapting the enumeration $\br{x_t}_{t\ge 1}$ to $\cG$'s outputs, specifically to whether \(\hat{h}_t = h_1\).
The current trap instance enters the enumeration queue $Q$ only at the first subsequent round $t$ (if ever) for which $\hat{h}_t\neq h_1$.
When this occurs, \Cref{alg:lower-bound} also instantiates a new trap pair \(\br{i',j'}\) with \(i'>I\) and \(j'>J\).
There are two cases:
\begin{itemize}
\item  $\cG$ outputs a hypothesis different from \(h_1\) \emph{infinitely} often. In this case, the adversary enumerates $\supp{h_1}$ and forces $\cG$ to make infinitely many mistakes via \emph{diagonalization}: at each round $t$ with $\hat{h}_t\neq h_1$, it inserts a fresh instance $d_t$ beyond the counter $J$ and assigns it to the support of $\hat{h}_t$ but not to that of $h_1$.
\item  \(\cG\) outputs a hypothesis different from \(h_1\) only \emph{finitely} often. Then, after some finite time, it outputs \(h_1\) indefinitely.
In this case, the adversary enumerates the support of the current \emph{trap} hypothesis $h_{i'}$, whose support is strictly smaller than $\supp{h_1}$, so that $\cG$ \emph{overgeneralizes}.
\end{itemize}
\Cref{fig:no-free-lunch} illustrates a few steps of this procedure.

\begin{figure}[!htbp]
\centering
\resizebox{\linewidth}{!}{%
\DrawGrid{3}{2}{2}{2}{2/2}{}{1}{}{Initialization}%
\DrawGrid{4}{5}{3}{5}{2/2,1/3,3/3,3/4,4/3}{}{1,2,5}{1}{Step~1}%
\DrawGrid{5}{7}{4}{7}{2/2,1/3,3/3,3/4,4/3,5/3,6/3}{}{1,2,5,6,7}{2}{Step~2}%
}
\caption{
  Online construction of a hard hypothesis class for a given proper generator.
  \\[1ex]
  The horizontal axis represents the hypotheses in $\cH$, and the vertical axis represents the instances from the domain $\cX$.
  For every coordinate pair $(i,j)$, a filled circle ($\bullet$) indicates $j\in\supp{h_i}$, while an empty circle ($\circ$) indicates $j\notin\supp{h_i}$.
  A box around a label on the vertical axis means that the instance has been added to the enumeration queue $Q$, while a shaded box means that the instance has been shown as an example $x_t$.
  Finally, the L-shaped dashed line marks the current boundaries of $\cG$'s knowledge, as tracked by $I$ and $J$.
  \\[1ex]
  We illustrate the first steps of the interaction. 
  At initialization, the adversary inserts instance $1$ into the enumeration queue $Q$ and installs the \emph{trap} hypothesis-instance pair $\br{i',j'} = \br{2,2}$. 
  The counters $I$ and $J$ are both set to $2$.
  At step 1, the adversary reveals $x_1=1$. 
  For illustrative purposes, we assume that at step 1 the generator $\cG$ outputs $\hat{h}_1 = h_2$.
  This triggers the \emph{diagonalization} mode of \Cref{alg:lower-bound}: instance $d_1 = 3$ is assigned exclusively to the output hypothesis $h_2$; the current trap instance $j'=2$ is added to $Q$; a new trap hypothesis-instance pair $\br{i',j'} = \br{3,4}$ is created beyond $I$ and $J$ by assigning instance $e_1=4$ to all hypotheses except for $h_3$; finally, instance $c_1=5$ is assigned to all hypotheses and is therefore added to the enumeration queue.
  When the round ends, the counters $I$ and $J$ are set to $3$ and $5$, respectively.
  Then step 2 begins with $x_2=\min Q = 2$ being revealed to $\cG$.
  We assume that $\cG$ queries $F(4,6)$: instance $6$ is therefore assigned to all hypotheses and added to $Q$.
  Furthermore, the counters $I$ and $J$ move to $4$ and $6$, respectively.
  Suppose $\cG$ outputs $\hat{h}_2 = h_1$.
  This time the \emph{overgeneralization} mode of \Cref{alg:lower-bound} is triggered.
  In this case, the trap hypothesis-instance pair remains the same.
  At the end of the round, $c_2=7$ is added to $Q$ and the counter $J$ is updated to $7$.
}
  \label{fig:no-free-lunch}
\end{figure}

To prove \Cref{thm:lower-bound-proper}, we first argue about the soundness of our construction by analyzing the function $F$, showing that the corresponding hypothesis class $\cH$ is an \emph{indexed family of recursive languages} (in the sense of \citet{angluin1980inductive}) satisfying the UUS assumption.

\begin{lemma} \label{lemma:soundness}
    For any computable $\cG$, the associated function $F: \bN \times \bN \to \bc{0,1}$ defined in \Cref{alg:lower-bound} is total recursive. Moreover, for every $i \in \bN$, the set $\bc{j\in\bN \mid F(i,j)=1}$ is infinite. \looseness=-1
\end{lemma}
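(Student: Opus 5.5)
The plan is to compute \(F(i,j)\) by running \Cref{alg:lower-bound} (which simulates \(\cG\)) until the entry \((i,j)\) receives a value. Totality then reduces to two claims: every entry is assigned \emph{exactly once}, and it is assigned at some finite point of the simulation. Since \(\cG\) is computable and every step of the algorithm is effective, this gives a total recursive \(F\).

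First, I will establish an invariant: at every point of the run, the set of assigned entries is exactly \(\bN\times\{1,\ldots,J\}\).
\begin{itemize}
  \item Initialization assigns full columns \(1\) and \(2\) with \(J=2\).
  \item In the query loop, whenever \(J\) is raised to \(m\), every column \(n\in\{J+1,\ldots,m\}\) is assigned in full, for all \(\ell\).
  \item In the diagonalization branch, columns \(J+1\) and \(J+2\) are assigned in full before \(J\) is set to \(e_t=J+2\).
  \item The final line assigns column \(J+1=c_t\) in full and sets \(J\gets c_t\).
\end{itemize}
Each of these updates touches only columns strictly above the current \(J\). So no entry is ever reassigned, and \(F\) is well defined. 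The one subtlety is that \(\cG\) may query \(F(i,j)\) with \(j\le J\). Such a query is answered from the already fixed table, so it is consistent. Queries with \(j>J\) trigger assignment before they are answered, so \(\cG\) always sees the final value.

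The main obstacle is ensuring the simulation never stalls. Each round \(t\) must terminate, because otherwise columns beyond some point would never be assigned. Here I use the standing assumption that \(\cG\) is a generator: on every finite input it halts with an output \(i_t\) after finitely many membership queries. Every query is answered in finite time, since each query adds only finitely many columns. Hence the inner while loop runs finitely many times and round \(t\) ends. Each round also strictly increases \(J\), at least via \(c_t\). By induction, \(J\to\infty\) along the run, so for any \((i,j)\) there is a finite round after which \(j\le J\). At that point \(F(i,j)\) has been fixed, and the procedure "simulate until assigned" halts, which proves that \(F\) is total recursive.

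For the UUS part, fix \(i\). In every round \(t\), the column \(c_t\) is set to \(1\) for all hypotheses, and the values \(c_t\) are strictly increasing in \(t\). So \(\{c_t: t\in\bN\}\) is an infinite set contained in \(\{j: F(i,j)=1\}\), and this set is infinite. Trap and diagonal columns can only remove or add finitely many elements per round, so they do not affect this conclusion.
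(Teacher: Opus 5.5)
Your invariant (the assigned entries are always exactly $\bN\times\{1,\ldots,J\}$, and each column is written once, in full, above the current $J$) matches the paper and is fine. The gap is in the termination step. There you assume that in every round $\cG$ halts after finitely many membership queries. The lemma is stated for any computable $\cG$, and you cannot get this assumption from $\cG$ being a generator. $\cG$'s guarantees apply only when it is run on a legitimate class, and this lemma exists to certify that the class built by \Cref{alg:lower-bound} is legitimate \emph{before} the proof of \Cref{thm:lower-bound-proper} invokes those guarantees. Appealing to them here is circular.

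Concretely, a computable $\cG$ may in some round issue an infinite stream of queries and never output. For example, it may keep querying $F(i,j)$ for a fixed $j$ and growing $i$ until it sees a $0$. For the contradiction to go through, $\cH$ must still be a total recursive UUS class in that scenario, so that never outputting counts as failing on a genuine class. In this case several steps of your proof break:
\begin{itemize}
\item The inference ``each query adds finitely many columns, hence the while loop runs finitely often'' fails.
\item No further $c_t$ is ever produced.
\item Your UUS argument via the $c_t$'s yields only finitely many guaranteed ones.
\end{itemize}

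The missing case is exactly what the counter $k$ in the query loop is for, and your proposal never uses it. The $k$-th query sets $m=\max\{j,k\}\ge k$, so $J\ge k$ afterwards. Along an infinite query sequence, $J\to\infty$ within that single round. So every column is still assigned at a finite stage of the simulation, and $F$ is total recursive. Moreover, every column introduced in the query phase is set to $1$ for all hypotheses, which gives infinitely many ones in every row. Adding this second case, with its UUS counterpart, to your finite-rounds case gives the paper's two-case argument.
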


\begin{proof} 
    We show that, for every pair $(i,j)\in \bN\times \bN$, the value $F(i,j)$ is decided at a finite step of \Cref{alg:lower-bound} and is never changed afterward. 
    Observe that whenever \Cref{alg:lower-bound} encounters an instance $j$, it assigns the entire row $F(\cdot,j)$ in a single step, i.e., it fixes $F(i,j)$ for all $i\in \bN$ at once. 
    Therefore, it suffices to show that every instance $j\in \bN$ is encountered exactly once and at a finite step.
    To this end, we claim that, throughout the execution of \Cref{alg:lower-bound}, the set of encountered instances is always exactly the initial segment $\{1,\dots,J\}$, meaning that no instance $j<J$ is skipped during the execution.
    This is true at initialization: the algorithm encounters instances $1$ and $2$, and then sets $J\leftarrow 2$.
    Now consider any later stage of the construction.
    During the processing of $\cG$'s membership queries, suppose that $\cG$ issues its $k$-th query $(i,j)$ in the current round.
    Then, \Cref{alg:lower-bound} assigns values to all still-unseen instances $n$ satisfying \[ J < n \le \max\{j,k\}, \] and updates $J$ accordingly.
    Hence, all newly encountered instances form a consecutive block immediately after the current value of $J$.
    In particular, no instance is skipped, and no previously encountered instance is revisited or modified.
    In all other steps where the algorithm introduces new instances, it uses fresh indices immediately following the current counter (i.e., $J+1,J+2$) and updates $J$ immediately afterward.
    Thus, the set of encountered instances remains an initial segment of $\bN$ at every stage.

    It remains to show that every instance is encountered at a finite step, or equivalently, that the counter $J$ is unbounded.
    There are two cases.
    First, suppose that in every round $t$, the generator $\cG$ asks only finitely many membership queries and eventually outputs some $\hat{h}_t$. 
    Then, each iteration of the outer loop completes, and the last line of \Cref{alg:lower-bound} introduces a fresh instance $c_t$ at the end of every round (even if $\cG$ asks no query in that round). 
    Therefore, $J$ increases by at least one in every round, so $J\to\infty$ as $t\to\infty$.
    Second, suppose that in some round, $\cG$ asks infinitely many membership queries and never produces an output.\footnote{
            For instance, suppose $\cG$ keeps querying whether $j \in \supp{h_i}$ for a fixed $j$ and different $i$ until it gets a negative answer.
            Clearly, in this case $\cG$ would fail at its generation task, granted of course that the hypothesis class resulting from the construction was still valid.
    }
    Let $k$ denote the query counter within that round and note that, by construction, $J\ge k$.
    Since $k$ is unbounded along that infinite query sequence, $J$ is also unbounded within that single round.
    Hence, in either case, $F$ is recursive over its whole domain.

    Finally, the same two-case analysis shows that the resulting hypothesis class satisfies the UUS property.
    If every round is finite, then the fresh instance $c_t$ introduced at the end of each round is assigned to all hypotheses, so each support $\supp{h_i}$ contains infinitely many such instances.
    If instead some round contains infinitely many queries, then as $k\to\infty$, the construction introduces infinitely many new instances during that round, and each of them is again assigned to all hypotheses.
    Thus, $\supp{h_i}$ is infinite for all $i\in\bN$ in either case.\looseness=-1
\end{proof}

Having shown that \Cref{alg:lower-bound} defines a valid hypothesis class, we can now provide the proof of \Cref{thm:lower-bound-proper}.

\begin{proof}[Proof of \Cref{thm:lower-bound-proper}] 
Suppose, for the sake of contradiction, that there exists a deterministic proper generator $\cG$ that uses only membership queries and properly generates in the limit every countable hypothesis class.
Consider the hypothesis class $\cH=\{h_1,h_2,\ldots\}$ induced by \Cref{alg:lower-bound} when run against $\cG$.
Since $\cG$ properly generates $\cH$, we can assume that at each step $t$ the generator $\cG$ halts to produce an output hypothesis $\hat{h}_t = h_{i_t} \in \cH$.
For every round $t$ with $i_t\neq 1$, let $d_t$ denote the fresh diagonalization instance created at round $t$ by \Cref{alg:lower-bound}. By construction,
\[
    d_t \in \supp{h_{i_t}} \quad\text{and}\quad d_t \notin \supp{h_i}\ \ \forall i\neq i_t,
\]
so in particular $d_t\notin \supp{h_1}$.
Also, if $(i',j')$ denotes the current trap pair maintained by the algorithm, then it holds that
\[j' \notin \supp{h_{i'}} \quad\text{and}\quad j' \in \supp{h_i}\ \ \forall i\neq i'.
\]
Additionally, define
\[
    Q_\infty := \{n\in\bN : n \text{ is ever added to } Q\}.
\]
Since at each round $t$ the algorithm reveals $x_t=\min Q$, every $n\in Q_\infty$ is shown after finitely many rounds: only finitely many smaller integers can ever be inserted before it, and each of them is removed after one round.
Therefore, $\br{x_t}_{t\ge 1}$ is an enumeration of $Q_\infty$.
We now distinguish two cases: $\cG$ outputs $\hat{h}_t \neq h_1$ either finitely or infinitely many times.

First, suppose $\cG$ outputs $\hat{h}_t \neq h_1$ \emph{infinitely} many times.\footnote{The most natural choice would be $\hat{h}_t = h_{i'}$ since at each step, the trap hypothesis $h_{i'}$ is, in some sense, the minimal consistent hypothesis.
}
We argue that this would lead to a contradiction by showing that in this case \Cref{alg:lower-bound} enumerates $\supp{h_1}$ and that $\cG$ makes infinitely many mistakes for $h^\star = h_1$. 
To begin, note that if $h^\star = h_1$ then $\cG$ makes infinitely many mistakes, since each $d_t$ only belongs to the support of $\hat{h}_t$ and thus \[\supp{\hat{h}_t} \nsubseteq \supp{h_1}.\]
It remains to show that, in this case, \(Q_\infty = \supp{h_1}\).
Observe that all instances that are not trap instances are immediately added to $Q$ after being encountered.
Additionally, any trap instance $e_t$ created at round $t$ is added to $Q$ at the next round $s>t$ such that $\cG$ outputs $\hat{h}_s \neq h_1$, which we have assumed to be happening infinitely often in this case.
Conversely, the only instances never added to $Q$ are the diagonalization instances $d_t$, and none of them belongs to $\supp{h_1}$.

Therefore, it must be that $\cG$ outputs $\hat{h}_t \neq h_1$ only \emph{finitely} many times.
Let \[t_0 := \max\{t\in\bN : i_t\neq 1\},\] with the convention $t_0=0$ if $i_t=1$ for all $t\in\bN$.
Let $(\bar{\imath},\bar{\jmath})$ denote the values of the trap pair $(i',j')$ after round $t_0$; if $t_0=0$, then $(\bar{\imath},\bar{\jmath})=(2,2)$.
By definition of $t_0$, \(\hat{h}_t = h_1\) for all \(t>t_0\).
We claim that \(Q_\infty = \supp{h_{\bar{\imath}}}\).
We first show that $Q_\infty\subseteq \supp{h_{\bar{\imath}}}$.
Any instance introduced during the query phase or as some \(c_t\) belongs to all hypotheses, hence in particular to $h_{\bar{\imath}}$.
Any trap instance that is ever released and added to $Q$ must have been created before round \(t_0\).
If it was created when the trap index was \(\tilde{\imath}\), then it is excluded only from \(h_{\tilde{\imath}}\); since trap indices are strictly increasing, we have \(\tilde{\imath} \neq \bar{\imath}\), and thus this instance also lies in \(\supp{h_{\bar{\imath}}}\).
Therefore, \(Q_\infty\subseteq \supp{h_{\bar{\imath}}}\).
Conversely, the only instances never added to \(Q\) are precisely the final trap \(\bar{\jmath}\) and the diagonalization instances \(d_t\) created at rounds \(t\le t_0\) with \(i_t\neq 1\).
By definition, \(\bar{\jmath}\notin \supp{h_{\bar{\imath}}}\). 
Moreover, each such \(d_t\) belongs only to \(h_{i_t}\), whereas the trap created in round \(t\) has index strictly larger than \(i_t\); since trap indices only increase afterward, \(\bar{\imath}>i_t\), so \(d_t\notin \supp{h_{\bar{\imath}}}\).
Thus, \(\supp{h_{\bar{\imath}}}\subseteq Q_\infty\).
We conclude that \(Q_\infty = \supp{h_{\bar{\imath}}}\), so $(x_t)_{t\ge 1}$ is an enumeration of $\supp{h_{\bar{\imath}}}$ and we can set $h^\star = h_{\bar{\imath}}$.
However, this implies that $\cG$ makes infinitely many mistakes also in this case, since \(\bar{\jmath} \in \supp{h_1}\) but \(\bar{\jmath} \notin \supp{h_{\bar{\imath}}}\).

As both cases yield a contradiction, we conclude that no deterministic generator using only membership queries can properly generate in the limit all countable hypothesis classes.
\end{proof}

\subsection{Proper Generation in the Limit with Replay}
\label{subsec:proper-with-replay}

The following theorem shows that, in the proper setting, replay makes a class of just four hypotheses not generatable under even the weakest notion.
This accounts for the last row of \Cref{tab:replay-separation-classes}.\looseness=-1

\begin{theorem}[Hardness of proper generation in the limit with replay] \label{thm:hardness-proper-replay}
    There exists a \emph{finite} hypothesis class \(\cH\) that is not properly generatable in the limit with replay.
\end{theorem}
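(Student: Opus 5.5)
The plan is to build an explicit class of four hypotheses over \(\cX=\bN\) and run an adaptive adversary against an arbitrary proper generator \(\cG\). The engine of the argument is the extra power of replay in the proper setting: once \(\cG\) has output \(\hat h_s\), the adversary may reveal \emph{any} element of \(\supp{\hat h_s}\), at any later time and as often as it likes.

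\textbf{Construction.} Fix an infinite ``core'' set \(E\subseteq\bN\) and a few special points \(a,b,\dots\notin E\). Take hypotheses of the form \(E\cup\{\text{specials}\}\), chosen so that:
\begin{itemize}
\item two of them, \(h_1\) and \(h_2\), are incomparable, with \(a\in\supp{h_1}\setminus\supp{h_2}\) and \(b\in\supp{h_2}\setminus\supp{h_1}\);
\item no hypothesis in \(\cH\) is contained in \(\supp{h_1}\cap\supp{h_2}\);
\item \(E\) alone is \emph{not} a member of \(\cH\); otherwise outputting it forever would be a trivially safe strategy.
\end{itemize}
The other two hypotheses are there to make that trivial strategy unavailable. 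They should force \(\cG\), on some stream consisting mostly of \(E\), to output a hypothesis that contains both specials.

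\textbf{Adversary.} The adversary enumerates \(E\) and, after each round, replays every special point lying in the support of any hypothesis \(\cG\) has output so far. If \(\cG\) ever outputs a hypothesis whose support contains both \(a\) and \(b\), the adversary replays both. The resulting stream enumerates \(E\cup\{a,b\}\), and I claim it is a valid enumeration with replay for \emph{both} \(h_1\) and \(h_2\). For \(h_1\), the point \(b\) is justified as a replay; for \(h_2\), the point \(a\) is. Since \(\cG\)'s outputs are determined by the stream alone, \(\cG\) must from some time on output hypotheses contained in both \(\supp{h_1}\) and \(\supp{h_2}\). By the second design condition no such hypothesis exists, so \(\cG\) fails for one of the two targets.

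\textbf{Main obstacle.} The hard step is forcing \(\cG\) to output such a two-special hypothesis \emph{before} the corresponding specials have been shown as genuine examples. If a special is revealed genuinely before any output covers it, it is not a replay, and the stream rules out the target that excludes it. I would handle this with a case analysis on \(\cG\)'s behaviour on streams that contain \(E\) and only those specials already covered by past outputs.
\begin{itemize}
\item If \(\cG\) eventually settles on a hypothesis containing a single special, say \(a\), the adversary should choose a different target that also contains \(a\). The remaining hypotheses are designed so that this target requires a further special, which \(\cG\) is then forced to output (and hence cover) in order to succeed.
\item Otherwise \(\cG\) switches infinitely often, and each switch makes more specials replayable. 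The aim is to reach the doubly ambiguous stream above.
\end{itemize}
I expect some trial and error in choosing the extra two hypotheses so that every branch of this case analysis closes. The earlier results in the paper are not needed here; only the definition of proper generatability in the limit with replay (\Cref{def:proper-limit-replay}) is used.
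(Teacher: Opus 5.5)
\textbf{Where the proposal falls short.} Your endgame is the same as the paper's. Once the stream enumerates \(\supp{h_1}\cup\supp{h_2}\) and every point outside either target is justified as a replay, \(\cG\) must eventually output a member of \(\cH\) inside \(\supp{h_1}\cap\supp{h_2}\), and the class is built so that none exists. The gap is everything that gets you to that stream. You never fix the class, and the step you call the main obstacle is left to a case analysis you have not carried out.

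That step is not routine. For the stream to be valid for \(h_1\), some output made \emph{before} \(b\) first appears must already contain \(b\), and symmetrically for \(a\). Nothing in your sketch forces this. Any special not yet covered becomes a sure example when the adversary shows it, so it prunes the class rather than creating ambiguity. Likewise, a target that ``requires a further special'' only obliges \(\cG\) to output, in the limit, some hypothesis contained in that target. Even if every such hypothesis contains the new special, \(\cG\) can delay covering it until after it has been revealed genuinely, which is too late.

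\textbf{The missing idea.} A proper generator's very first output is itself a member of \(\cH\). So it suffices to design \(\cH\) so that \emph{every} member already contains the two points separating some pair whose common support contains no member of \(\cH\). Then no analysis of \(\cG\)'s long-run behaviour is needed.

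The paper uses \(\supp{h_i^-}=\bZ_{\le 0}\cup\bc{i}\) and \(\supp{h_i^+}=\bZ_{\ge 0}\cup\bc{-i}\) for \(i=1,2\). After \(x_1=0\), whichever hypothesis \(\hat h_1\) is output, say \(h_1^-\), its half-line contains \(-1\) and \(-2\). Hence \(0,-1,-2,1,2,3,\ldots\) is an enumeration with replay for both \(h_1^+\) and \(h_2^+\), and no member of \(\cH\) lies inside \(\bZ_{\ge 0}\).

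Your own template can be completed the same way with the three hypotheses \(E\cup\bc{a,b}\), \(E\cup\bc{b,c}\), \(E\cup\bc{a,c}\). After some point of \(E\) is shown, suppose \(\cG\) first outputs \(E\cup\bc{a,b}\). The adversary replays \(a\) and \(b\), then reveals \(c\) and enumerates \(E\). This stream is valid for both \(E\cup\bc{a,c}\) and \(E\cup\bc{b,c}\), yet no member of the class lies inside \(E\cup\bc{c}\). The other two first outputs are handled symmetrically.
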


\begin{proof}
    For \(i=1,2\), define
    \[\supp{h_i^-} = \mathbb{Z}_{\leq0}\cup\{i\}, \quad \supp{h_i^+} = \mathbb{Z}_{\geq0}\cup\{-i\},\]
    and let \(\cH = \{h_1^-, h_2^-, h_1^+, h_2^+\}\). 
    Suppose, for the sake of contradiction, that there exists a proper generator \(\cG\) that properly generates \(\cH\) in the limit with replay. 
    Let \(x_1=0\) be the first example shown by the adversary. 
    Note that \(x_1\) belongs to the support of all hypotheses in \(\cH\). 
    Therefore, \(\cG\) makes a completely arbitrary choice when choosing its first output \(\hat{h}_1\). 
    We give the argument for \(\hat{h}_1 = h_1^-\); the other cases are handled analogously.

    Consider the following extension of the adversarial sequence of examples: \(x_2 = -1, x_3 = -2\), followed by all the positive integers.
    The resulting sequence \((x_t)_{t\ge 1}\) is a valid sequence with replay for \(\cG\) and both \(h_1^+, h_2^+\): 
    \[
    x_t \in \supp{h_1^+} \cap \supp{h_2^+} \text{ for } t \neq 2,3 \quad\text{and}\quad x_2, x_3 \in \supp{\hat{h}_1}.
    \]
    Additionally, \((x_t)_{t\ge 1}\) contains an enumeration of the support of both \(h_1^+\) and \(h_2^+\) and, thus, is an enumeration with replay in the proper setting for \(h_1^+\) and \(h_2^+\) simultaneously. 
    As \(\cG\) properly generates \(\cH\) in the limit with replay by assumption, there exist \(t^\star_1, t^\star_2 \in \mathbb{N}\) and a sequence of \(\hat{h}_t \in \cH\) such that: 
    \[
    \supp{\hat{h}_t} \subseteq \supp{h_1^+} \text{ for all } t\geq t^\star_1 \quad\text{and}\quad
    \supp{\hat{h}_t} \subseteq \supp{h_2^+} \text{ for all } t\geq t^\star_2.
    \] 
    Therefore, if we let \(t^\star = \max \left\{t^\star_1, t^\star_2 \right\}\), it must be that, for all \(t \geq t^\star\),
    \[\supp{\hat{h}_t} \subseteq \supp{h_1^+} \cap \supp{h_2^+} = \mathbb{Z}_{\geq0}.\]
    However, there is no hypothesis \(h \in \cH\) such that \(\supp{h} \subseteq \mathbb{Z}_{\geq0}\), and thus we have reached a contradiction. \looseness=-1
\end{proof}

%% file: sections/08-discussion.tex
\section{Discussion and Open Questions}
\label{sec:discussion}

This work asks when replay makes generation harder.
The answer turns out to depend on the specific notion of generation and on the complexity of the hypothesis class, with qualitatively different outcomes across settings.
Nonetheless, our positive results are driven by a common set of intuitions, centered around two key ideas:
\begin{itemize}
    \item \textit{Data cleaning and watermarking.} \Cref{alg:computable-wp} treats potentially replayed instances as misleading and discards them.
By considering a deterministic generator---as is standard in much of the language generation literature---we implicitly assume access to the information needed to identify such instances.
From a practical standpoint, this motivates data provenance measures, watermarking, as well as the curation of clean training datasets~\citep{kirchenbauer2023watermark,kirchenbauer2023reliability,sadasivan2023can,mitchell2023detectgpt,dathathri2024scalable,tang2024science,wu2025survey}.
However, reliable and scalable filtering is nontrivial in practice.
This raises the question of whether structural properties of the hypothesis class could be leveraged to avoid explicit filtering altogether.
    \item \textit{Output filtering.} Our algorithms impose strict constraints on the generator's output: \Cref{alg:uniform_to_uniform_replay} has a preliminary burn-in phase during which it only outputs a dummy element; \Cref{alg:computable-wp} avoids outputting a set of crucial elements dubbed ``witnesses'' to ensure that, once such instances are shown as examples, their trustworthiness is guaranteed and they need not be discarded.
However, these constraints may be at odds with the requirement that LLM outputs remain diverse, a property often referred to as \emph{breadth} in the language generation literature \citep{kleinberg2024language}.
Therefore, a natural next step is to examine how replay affects not only the feasibility of generation, as studied in this work, but also the ability to generate with breadth.\looseness=-1
\end{itemize}
Taken together, our positive results provide a theoretical lens on why such strategies, involving both input and output filtering, can be effective at mitigating model collapse.
The separations, on the other hand, identify worst-case regimes in which these strategies may be insufficient.

This interpretation should, however, be read within the scope of the modeling assumptions of the \emph{language generation in the limit} framework, which abstracts away many aspects of real LLM training pipelines, including the next-token prediction loss function, gradient-based optimization, and transformer architectures.
The framework's agnosticism to such implementation details is both a limitation and a strength.
While it is harder to directly prescribe implementation-level feedback for practitioners, it allows us to identify fundamental possibility and impossibility boundaries that are not tied to any particular architecture or optimization procedure, and may therefore remain relevant if we move away from current generation paradigms.

Some important aspects of real data contamination are also absent from our replay model.
We assume exact replay of past outputs, whereas synthetic content in practice may be paraphrased, edited, mixed with human-written data, or produced by different models.
Moreover, we adopt a worst-case adversarial model, whereas real data collection is not typically adversarial in this strong sense.
While this worst-case formulation enables a direct comparison with the results in the standard setting from the language generation literature, more realistic variants are worth studying, including  replay combined with external contamination (e.g., arbitrary insertions or omissions) and stochastic replay.
For instance, in the proper setting, one could consider a stochastic model where the adversary can replay only a randomly selected element from a previously output hypothesis, which may help circumvent the strong impossibility result established in \Cref{thm:hardness-proper-replay}.

In addition to the directions outlined above, several technical questions remain open.
A natural next step is to characterize non-uniform generatability under replay, since such a characterization exists for non-uniform generation in the standard setting \citep{li2024generation}.
Another direction is to study randomized generators, which are not covered by our deterministic framework (except after fixing the random seed).
Finally, our results motivate a more systematic study of proper generation from both information-theoretic and computational perspectives, since proper generation more directly models the sequential deployment and updating of generative models.

%% file: appendix/A-overview-generation.tex
\section{Relevant Background on Language Generation}
\label{app:overview-generation}
Motivated by the recent success of large language models, the study of \emph{language generation in the limit} builds on the classical theory of \emph{language identification from positive data}, initiated by \citet{gold1967language} and characterized by \citet{angluin1980inductive}.
While that line of work showed that identifying an unknown language is possible only under restrictive conditions, \citet{kleinberg2024language} showed that the weaker task of eventually generating new valid strings is possible for every countable language class.
This striking contrast has since sparked a growing line of follow-up work on the learning-theoretic foundations of generation \citep{li2024generation,charikar2024exploring,kalavasis2025limits,hanneke2025union}.
Here, we summarize the prior work most relevant to our paper by first introducing the main definitions and then recalling the key results for each notion of generatability. 
\Cref{tab:gen-with-without-replay} provides a side-by-side comparison with our results. \looseness=-1

\begin{table}[H]
    \centering
    \caption{Generatability with and without replay.}
    \label{tab:gen-with-without-replay}
    {\begin{tabular}{@{}lll@{}}
        \toprule
        \textbf{Generation notion} & \textbf{Without replay} & \textbf{With replay} \\
        \midrule
        \addlinespace[0.65em]
        Uniform
        &
        \Cref{thm:characterization-uniform-gen}$^\dagger$
        &
        \Cref{thm:uniform_with_replay}$^*$
        \\
        \addlinespace[0.65em]
        Non-uniform
        &
        \Cref{thm:characterization-non-uniform-gen}$^\dagger$, \Cref{thm:lb-non-unif-mq-only}$^\dagger$
        &
        \Cref{thm:hardness-non-unif}$^*$
        \\
        \addlinespace[0.65em]
        In the limit
        &
        \Cref{thm:mq-only-gen-in-the-lim}$^\dagger$
        &
        \Cref{thm:limit_replay_mq_only}$^*$, \Cref{thm:separation-lim-with-replay}$^*$
        \\
        \addlinespace[0.65em]
        Proper in the limit
        &
        \Cref{thm:proper-gen-in-the-lim}$^\dagger$, \Cref{thm:lower-bound-proper}$^*$
        &
        \Cref{thm:hardness-proper-replay}$^*$
        \\
        \bottomrule
    \end{tabular}\par}
    \vspace{0.35em}
    {\footnotesize 
        $^\dagger$ Prior work. \quad $^*$ New in this paper.
    \par}
    \vskip -0.1in
\end{table}

\subsection{Definitions}

Let \(\cX\) be any infinite countable domain representing the space of possible outputs, such as text, images, or molecules, and let \(\cH\subseteq\{0,1\}^\cX\) be a binary hypothesis class.
Each hypothesis \(h\in\cH\) can be identified with its support \(\supp{h} \coloneqq \bc{x\in\cX\colon h(x)=1}\), which represents the set of valid outputs according to \(h\).
For instance, in the context of text generation, \(\cX\) can be the set of all possible sentences, and each \(h\in\cH\) can be thought of as a language, with \(\supp{h}\) being the set of admissible sentences. \looseness=-1
The language generation game takes place over infinite rounds between an adversary and a generator.
The generator is defined as follows.\looseness=-1

\begin{definition}[Generator]
    A \emph{generator} is a map that takes as input any finite sequence\footnote{In the replay setting, the specific \emph{order} of the inputs matters since the adversary can present previous (potentially erroneous) outputs of the generator. For instance, let \(x_2 = \cG(x_1)\); the example sequences \(\br{x_1, x_2}\) and \(\br{x_2, x_1}\) may in principle contain different information in the replay setting.} $x_{1:t} \coloneqq (x_1, \ldots, x_t) \in \cX^t$ of any length $t \in \bN$ and outputs an element $o_t \in \cX$.
\end{definition}

We first give an informal description of the game, which will be made formal in the definitions that follow.
At the beginning of the game, the adversary chooses a target hypothesis \(h^\star\in\cH\) and a sequence of examples \(x_1,x_2,\ldots\) such that \(\bc{x_1,x_2,\ldots}\subseteq\supp{h^\star}\).
At each round \(t\), the generator receives the prefix \(x_{1:t}\) and produces an output \(o_t\).
The goal of the generator is to eventually produce an output that is \emph{valid}, meaning that it belongs to \(\supp{h^\star}\), and \emph{novel}, meaning that it is not among the examples \(x_1,\ldots,x_t\) seen so far.
For this to be possible, the target hypothesis \(h^\star\) must have infinite support, and hence we will only consider classes \(\cH\) that satisfy the following property.

\begin{definition}[Uniformly unbounded support, UUS] \label{def:uus}
A binary hypothesis class \(\cH \subseteq \{0,1\}^\cX\) satisfies the \emph{uniformly unbounded support (UUS)} property if \(\left|\supp{h} \right| = \infty\) for all \(h \in \cH\).
\end{definition}

We can now formally introduce the three main notions of generatability: \emph{uniform}, \emph{non-uniform}, and \emph{in the limit}.
As illustrated by~\Cref{tab:notions}, these notions differ on the restrictions placed on the success time~\(t^\star\) after which the generator must produce valid and novel outputs.
We begin with the strongest notion, uniform generatability, where \(t^\star\) is allowed to depend only on the hypothesis class \(\cH\), and therefore must hold uniformly over all hypotheses and example sequences. \looseness=-1

\begin{definition}[Uniform generatability, \citealt{kleinberg2024language,li2024generation}] \label{def:uniform-gen}
    A binary hypothesis class \(\cH \subseteq \{0,1\}^\cX\) satisfying the UUS property is \emph{uniformly generatable} if there exist a generator \(\cG\) and \(d^\star\in\bN\) such that, for every \(h\in\cH\) and any sequence \(x_1,x_2,\ldots\) with \(\bc{x_1,x_2,\ldots} \subseteq \supp{h}\), if there exists \(t^\star\in\bN\) with \(\abs{\bc{x_1,\dots,x_{t^\star}}}=d^\star\), then \(\cG\br{x_{1:s}}\in\supp{h}\setminus\bc{x_1,\dots,x_s}\) for all \(s\ge t^\star\).
\end{definition}
\noindent For a given generator \(\cG\), its \emph{uniform generation sample complexity} \(d^\star_\cG\) is defined as the smallest such \(d^\star\), or \(\infty\) if no such value exists.

Non-uniform generatability relaxes the uniformity requirement by allowing \(t^\star\) to depend on the target hypothesis, but not on the example sequence. \looseness=-1

\begin{definition}[Non-uniform generatability, \citealt{li2024generation}] \label{def:non-uniform-gen}
    A binary hypothesis class \(\cH \subseteq \{0,1\}^\cX\) satisfying the UUS property is \emph{non-uniformly generatable} if there exists a generator \(\cG\) such that, for every \(h\in\cH\) there exists \(d^\star_h\in\bN\) such that, for any sequence \(x_1,x_2,\ldots\) with \(\bc{x_1,x_2,\ldots} \subseteq \supp{h}\), if there exists \(t^\star\in\bN\) with \(\abs{\bc{x_1,\dots,x_{t^\star}}}=d^\star_h\), then \(\cG\br{x_{1:s}}\in\supp{h}\setminus\bc{x_1,\dots,x_s}\) for all \(s\ge t^\star\).
\end{definition}
\noindent For a given generator \(\cG\) and for any hypothesis \(h\in\cH\), the \emph{non-uniform generation sample complexity} \(d^\star_{\cG,h}\) is defined as the smallest such \(d^\star_h\), or \(\infty\) if no such value exists. 

Finally, generatability in the limit is the weakest notion, where \(t^\star\) is allowed to depend on the target hypothesis and on the example sequence. \looseness=-1

\begin{definition}[Generatability in the limit, \citealt{kleinberg2024language}] \label{def:gen-in-the-lim}
    A binary hypothesis class \(\cH \subseteq \{0,1\}^\cX\) satisfying the UUS property is \emph{generatable in the limit} if there exists a generator \(\cG\) such that, for every \(h\in\cH\) and any \emph{enumeration}\footnote{An infinite sequence \(\br{x_t}_{t\ge 1}\) is an enumeration of \(\supp{h}\) if for every \(x\in\supp{h}\) there exists \(t\in\bN\) such that \(x_t=x\).} \(\br{x_t}_{t\ge 1}\) of \(\supp{h}\), there exists \(t^\star\in\bN\) such that \(\cG\br{x_{1:s}}\in\supp{h}\setminus\bc{x_1,\dots,x_s}\) for all \(s\ge t^\star\).
\end{definition}

\noindent For any class $\cH$, uniformly generatable $\implies$ non-uniformly generatable $\implies$ generatable in the limit.\looseness=-1

\begin{table}[H]
    \centering
    \caption{What is the success time $t^\star$ allowed to depend on?}
    \label{tab:notions}
    {\begin{tabular}{@{}lccc@{}}
        \toprule
        \textbf{Generation notion} & \textbf{Class} \(\cH\) & \textbf{Hypothesis} \(h^\star\) & \textbf{Sequence} \(\br{x_t}_{t\geq 1}\)\\
        \midrule
        Uniform
        & Yes & No & No\\
        \addlinespace[0.4em]
        Non-uniform
        & Yes & Yes & No \\
        \addlinespace[0.4em]
        In the limit
        & Yes & Yes & Yes\\
        \bottomrule
        \end{tabular}
    \par}
\end{table}

Next, we describe the \emph{proper} setting of generation, where, borrowing from the PAC learning vocabulary~\citep{shalev2014understanding}, the algorithm is required to output a hypothesis \(\hat{h}_t\in\cH\) at each round \(t\).
In the terminology of the language generation literature~\citep{kleinberg2025density,mehrotra2025language}, \emph{proper} generation corresponds to \emph{index-based} generation (at least for countable classes), while \emph{improper} generation corresponds to \emph{element-based} generation, though in this work we will often simply refer to the latter as \emph{generation}. \looseness=-1 

\begin{definition}[Proper generator, \citealt{kleinberg2025density}] \label{def:proper-generator}
    A \emph{proper generator} is a map that takes as input any finite sequence \(x_{1:t} \coloneqq (x_1, \ldots, x_t) \in \mathcal{X}^t\) of any length \(t \in \mathbb{N}\) and outputs a hypothesis \(\hat{h}_t \in \cH\).
\end{definition}

\begin{definition}[Proper generatability in the limit, \citealt{kleinberg2025density}] \label{def:proper-gen-in-the-lim}
    A binary hypothesis class \(\cH \subseteq \{0,1\}^\cX\) satisfying the UUS property is \emph{properly generatable in the limit} if there exists a proper generator \(\cG\) such that, for all \(h \in \cH\) and for any enumeration \((x_t)_{t \ge 1}\) of \(\supp{h}\), there exists \(t^\star \in \mathbb{N}\) such that \(\supp{\hat{h}_s} \subseteq \supp{h}\) for all \(s \geq t^\star\).
\end{definition}

\noindent Clearly, for any class $\cH$, proper generatability $\implies$ (improper) generatability.

\subsection{Results}
We now recall the standard guarantees against which our replay results will be compared, as illustrated by~\Cref{tab:gen-with-without-replay}.
We begin by recalling a combinatorial dimension that, for generatability, plays a role analogous to the VC dimension in PAC learning~\citep{vapnik2015uniform} and the Littlestone dimension in online learning~\citep{littlestone1988learning}.

\begin{definition}[Closure dimension, \citealt{li2024generation}] \label{def:closure-dim}
    The \emph{Closure dimension} of a binary hypothesis class \(\cH \subseteq \{0,1\}^\cX\), denoted by \(\cC(\cH)\), is the largest \(d\in\bN\) for which there exist distinct \(x_1, \ldots, x_d \in \cX\) such that there exists \(h\in\cH\) with \(\bc{x_1,\ldots,x_d}\subseteq \supp{h}\) and \[\abs{\bigcap_{h\in\cH\br{x_{1:d}}} \supp{h}} < \infty,\] where \(\cH\br{x_{1:d}} \coloneqq \bc{h\in\cH\colon \bc{x_1,\ldots,x_d}\subseteq \supp{h}}\) denotes the version space. If this holds for any \(d\in\bN\), we say that \(\cC(\cH)=\infty\); if it fails for \(d=1\), we say that \(\cC(\cH)=0\).
\end{definition}

\begin{theorem}[Characterization of uniform generatability, \citealt{li2024generation}] \label{thm:characterization-uniform-gen}
     A binary hypothesis class \(\cH \subseteq \{0,1\}^\cX\) satisfying the UUS property is uniformly generatable if and only if \(\cC\br{\cH}<\infty\).
\end{theorem}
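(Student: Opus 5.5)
We must prove that $\cH$ is uniformly generatable if and only if $\cC(\cH)<\infty$. We follow the structure of \citet{li2024generation} and treat the two directions separately.

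\emph{Sufficiency} ($\cC(\cH)=d<\infty$ implies uniform generatability with $d^\star=d+1$). Given any sequence from some $h\in\cH$ with at least $d+1$ distinct elements $x_1,\ldots,x_{d+1}$ seen so far, the set $\{x_1,\ldots,x_{d+1}\}$ is contained in some $h$, so the version space $\cH(x_{1:t})$ is nonempty. By the definition of the Closure dimension, $\bigcap_{h\in\cH(x_{1:t})}\supp{h}$ must be infinite, since otherwise we would obtain a witness of size at least $d+1$. Note that any superset of distinct points also qualifies, and the version space only shrinks, so the intersection only grows. The generator therefore outputs, at every time $s$ after $d+1$ distinct points have been seen, some element of this infinite intersection outside $\{x_1,\ldots,x_s\}$ (for instance, the least such element under a fixed enumeration of $\cX$). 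Because the true $h$ lies in the version space, this output lies in $\supp{h}$ and is novel. At times with fewer than $d+1$ distinct points, the output is arbitrary. The case $\cC(\cH)=0$ is handled identically with $d^\star=1$.

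\emph{Necessity} ($\cC(\cH)=\infty$ implies no uniform generator). Suppose $\cG$ is uniform with sample complexity $d^\star$. By the definition of the Closure dimension, there exist distinct $x_1,\ldots,x_{d}$ with $d\ge d^\star$, consistent with some hypothesis, whose version space has a finite intersection $I$. We feed $x_1,\ldots,x_d$ and then keep repeating $x_d$, which is valid for every $h$ in the version space. From time $d$ onward, the output $o_s$ must be valid and novel for every $h\in\cH(x_{1:d})$, hence must lie in $I\setminus\{x_1,\ldots,x_d\}$.

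The main obstacle is turning this into a contradiction, since repeating $x_d$ does not exhaust $I$. We fix this by adaptively presenting the outputs themselves. Each $o_s\in I$ belongs to every hypothesis in the version space, so appending $o_s$ as the next example is still valid for each such $h$, and the version space remains unchanged because its intersection is $I$. The guarantee keeps applying, since at least $d^\star$ distinct elements have been seen, so each new output is a fresh element of the finite set $I$. After $|I|$ steps no fresh element remains, which is a contradiction.
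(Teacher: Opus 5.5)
Your proof is correct. The paper does not prove this theorem itself; it quotes it as background from \citet{li2024generation}, so the natural comparison is with the standard argument there.

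Your sufficiency direction is that standard argument. Once at least $\cC(\cH)+1$ distinct points have been seen, the closure $\bigcap_{h\in\cH(x_{1:t})}\supp{h}$ is infinite and contained in the target's support, so outputting a fresh element of it works with $d^\star=\cC(\cH)+1$.

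Your necessity direction takes a slightly different route. The usual argument is non-adaptive: after $x_1,\dots,x_d$ with $d\ge d^\star$, present the remaining elements of the finite closure $I$ and then repeat. Every $h\in\cH(x_{1:d})$ contains $I$, so this is a valid stream for all of them. At the moment $I$ is exhausted, the output would have to lie in $I$ and also be novel, which is impossible.

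You instead feed the generator's forced outputs back in as examples. This also works, but it relies on a small induction that you state only informally. Each prefix $x_{1:s}$ stays inside $I$, so it extends (e.g.\ by repeating $x_d$) to a valid infinite stream for every $h\in\cH(x_{1:d})$. That extension is what licenses applying the uniform guarantee at time $s$, which forces $o_s\in I\setminus\bc{x_1,\dots,x_s}$.

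The trade-off is modest. The non-adaptive version avoids this bookkeeping and reaches the contradiction at a single time step, while yours never needs to enumerate $I$ explicitly. Your adversary is also exactly the feedback adversary of \Cref{thm:hardness-non-unif}. Here it is legal in the standard setting only because every output is forced into $I\subseteq\supp{h}$.
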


\noindent This leads to the following corollary for finite classes, which was first proved by \citet{kleinberg2024language}.
\begin{corollary} \label{cor:unif-finite}
    All finite hypothesis classes are uniformly generatable.
\end{corollary}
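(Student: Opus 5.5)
The plan is to derive \Cref{cor:unif-finite} directly from \Cref{thm:characterization-uniform-gen}. It suffices to show that every finite class \(\cH=\bc{h_1,\ldots,h_k}\) satisfying the UUS property has \(\cC(\cH)<\infty\). The key observation is that a version space can take only finitely many values. For any finite set of distinct points \(x_{1:d}\), the version space \(\cH(x_{1:d})\) is a subset of \(\cH\), so there are at most \(2^k\) possibilities. We only need to consider the nonempty ones, since the definition of the Closure dimension requires some \(h\) consistent with \(x_{1:d}\).

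I would split these nonempty subsets \(V\subseteq\cH\) into two types. A subset is \emph{good} if \(\bigcap_{h\in V}\supp{h}\) is infinite and \emph{bad} if the intersection is finite. If \(V\) is bad, then any \(x_{1:d}\) with \(\cH(x_{1:d})=V\) satisfies \(\bc{x_1,\ldots,x_d}\subseteq\bigcap_{h\in V}\supp{h}\), because every hypothesis in \(V\) contains all the \(x_i\). Hence \(d\le\abs{\bigcap_{h\in V}\supp{h}}\). Now set \(D\coloneqq\max_{V\text{ bad}}\abs{\bigcap_{h\in V}\supp{h}}\), which is a finite maximum over finitely many finite numbers, with \(D=0\) if there is no bad \(V\).

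Any witness \(x_{1:d}\) for the Closure dimension must have a bad version space. Therefore \(d\le D\), which gives \(\cC(\cH)\le D<\infty\), and \Cref{thm:characterization-uniform-gen} then yields uniform generatability.

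The only delicate step is checking that the \(x_i\) themselves lie in the finite intersection, which is what bounds \(d\). Once that is verified, the remainder is bookkeeping. As an alternative, one could give an explicit generator with sample complexity \(D+1\): after seeing more than \(D\) distinct examples, the version space is good, so the generator can output an unseen element of the infinite intersection. This route is unnecessary given \Cref{thm:characterization-uniform-gen}.
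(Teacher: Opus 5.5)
Your proposal is correct and follows the paper's route: the paper obtains \Cref{cor:unif-finite} directly from \Cref{thm:characterization-uniform-gen}. Your finite-version-space argument supplies the step the paper leaves implicit, namely that $\cC(\cH)<\infty$ for finite $\cH$: every witness $x_{1:d}$ lies inside $\bigcap_{h\in V}\supp{h}$, which is a finite set, for one of at most $2^k$ nonempty version spaces $V$, so $d$ is bounded.
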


The Closure dimension can also be used to characterize non-uniformly generatable classes, in a way that is reminiscent of how the VC dimension relates to non-uniform PAC learnability.

\begin{theorem}[Characterization of non-uniform generatability, \citealt{li2024generation}] \label{thm:characterization-non-uniform-gen}
    A binary hypothesis class \(\cH \subseteq \{0,1\}^\cX\) satisfying the UUS property is non-uniformly generatable if and only if there exists a non-decreasing sequence of classes \(\cH_1\subseteq\cH_2,\ldots\) such that \(\cH = \bigcup_{i=1}^\infty\cH_i\) and \(\cC\br{\cH_i}<\infty\) for every \(i\in\bN\).
\end{theorem}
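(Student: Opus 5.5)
We prove the two directions separately, using \Cref{thm:characterization-uniform-gen} as a black box. The forward direction stratifies \(\cH\) by the non-uniform sample complexity. The backward direction builds a single generator that, on the current set of distinct examples, acts like a uniform generator for the largest stratum it can ``afford''.

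\emph{(\(\Rightarrow\)).} Let \(\cG\) non-uniformly generate \(\cH\), with sample complexities \(d^\star_h\), and set \(\cH_i\coloneqq\bc{h\in\cH: d^\star_h\le i}\). These classes are nested, and \(\bigcup_i\cH_i=\cH\) because every \(d^\star_h\) is finite.

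I claim \(\cG\) uniformly generates \(\cH_i\) with sample complexity \(i\). Fix \(h\in\cH_i\) and a sequence in \(\supp{h}\) with \(\abs{\bc{x_1,\ldots,x_t}}=i\). The number of distinct elements grows by at most one per round, so there is an earlier time \(t'\le t\) at which exactly \(d^\star_h\le i\) distinct elements have appeared. The non-uniform guarantee then holds for all \(s\ge t'\), hence for all \(s\ge t\).

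Therefore each \(\cH_i\) is uniformly generatable, and \Cref{thm:characterization-uniform-gen} gives \(\cC(\cH_i)<\infty\).

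\emph{(\(\Leftarrow\)).} Let \(d_i\coloneqq\cC(\cH_i)+1\). Because the classes are nested, \(\cC(\cH_i)\) is nondecreasing in \(i\): a witness set of size \(d\) for \(\cH_i\) remains a witness for \(\cH_{i+1}\), since the version space only grows and the intersection of supports only shrinks. So \(d_i\) is nondecreasing.

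Given a history with distinct-example set \(S\), define
\[
i(S)\coloneqq\max\bc{j\le\abs{S}: d_j\le\abs{S}}.
\]
If \(\cH_{i(S)}(S)\) is nonempty, the generator outputs an element of \(\bigcap_{h\in\cH_{i(S)}(S)}\supp{h}\setminus S\); otherwise it outputs arbitrarily.

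\emph{Key step: this intersection is infinite whenever the version space is nonempty.} Pick any \(d_{i(S)}\) distinct points \(S'\subseteq S\). The version space of \(S\) is contained in that of \(S'\), so
\[
\bigcap_{h\in\cH_{i(S)}(S)}\supp{h}\;\supseteq\;\bigcap_{h\in\cH_{i(S)}(S')}\supp{h}.
\]
The right-hand side is infinite because \(\abs{S'}>\cC(\cH_{i(S)})\). Hence an unseen element always exists.

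\emph{Correctness.} Let the target \(h^\star\) lie in \(\cH_{i^\star}\), and set \(d^\star_{h^\star}\coloneqq\max\bc{i^\star,d_{i^\star}}\).
\begin{enumerate}
\item Once \(\abs{S}\ge d^\star_{h^\star}\), the index \(i^\star\) is admissible in the maximum, so \(i(S)\ge i^\star\).
\item By nestedness \(h^\star\in\cH_{i(S)}\), and since \(S\subseteq\supp{h^\star}\) we get \(h^\star\in\cH_{i(S)}(S)\).
\item So the output lies in \(\supp{h^\star}\setminus S\), which means it is both valid and novel.
\end{enumerate}
The number of distinct examples never decreases, so this holds at every later step. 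This is exactly the non-uniform guarantee with sample complexity \(d^\star_{h^\star}\).

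The main obstacle is the key step above: making sure that observing \emph{more} than \(\cC\) distinct points still forces an infinite intersection. The inclusion of version spaces handles it. A secondary point is choosing \(i(S)\) so that it eventually reaches the unknown \(i^\star\) while never exceeding what the current sample size certifies; capping by both \(\abs{S}\) and \(d_j\) achieves this.
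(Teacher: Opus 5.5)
Your proof is correct. The paper gives no proof of this statement (it quotes it as background from \citet{li2024generation}), and your argument is the standard route to the characterization: for necessity you stratify \(\cH\) by the non-uniform sample complexity and invoke \Cref{thm:characterization-uniform-gen}; for sufficiency you output from the closure of the observed set under the largest stratum \(\cH_i\) with \(i\le\abs{S}\) and \(\cC(\cH_i)<\abs{S}\).

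The only loose end is that \(i(S)\) is undefined when no \(j\le\abs{S}\) satisfies \(d_j\le\abs{S}\) (e.g.\ \(\abs{S}<d_1\)); that case should simply fall under ``output arbitrarily''. Incidentally, you need neither the monotonicity of \(d_j\) nor the detour through \(S'\). Admissibility of \(i(S)\) already gives \(\abs{S}>\cC(\cH_{i(S)})\), so \(S\) itself has infinite closure whenever \(\cH_{i(S)}(S)\neq\emptyset\).
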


\noindent This immediately yields the following corollary for countable classes. 

\begin{corollary} \label{cor:non-unif-countable}
    All countable hypothesis classes are non-uniformly generatable.
\end{corollary}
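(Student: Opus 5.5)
Corollary~\ref{cor:non-unif-countable} follows from \Cref{thm:characterization-non-uniform-gen} once we exhibit a suitable nested decomposition of any countable class. Enumerate \(\cH=\bc{h_1,h_2,\ldots}\), repeating elements if \(\cH\) is finite, and set \(\cH_i\coloneqq\bc{h_1,\ldots,h_i}\). These classes are non-decreasing and \(\bigcup_{i}\cH_i=\cH\). Each \(\cH_i\) inherits the UUS property from \(\cH\). By \Cref{thm:characterization-non-uniform-gen}, it therefore suffices to show that \(\cC(\cH_i)<\infty\) for every \(i\).

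For the finiteness of the Closure dimension there are two routes.

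\emph{Via an earlier result.} Corollary~\ref{cor:unif-finite} says each finite class \(\cH_i\) is uniformly generatable. \Cref{thm:characterization-uniform-gen} then gives \(\cC(\cH_i)<\infty\).

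\emph{Directly.} For any distinct \(x_1,\ldots,x_d\) with nonempty version space \(\cH_i(x_{1:d})\), this version space is one of at most \(2^i-1\) nonempty subsets of \(\cH_i\). Each such subset \(V\) has either infinite or finite \(\bigcap_{h\in V}\supp{h}\). For a subset \(V\) with finite intersection, any witness \(x_{1:d}\) realizing \(V\) lies inside that intersection, so \(d\le\abs{\bigcap_{h\in V}\supp{h}}\). Taking the maximum of these finitely many finite bounds bounds \(\cC(\cH_i)\).

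Either route completes the proof. The only point needing care is the degenerate finite-class case, handled by repetition in the enumeration, or simply by taking \(\cH_i=\cH\) for all \(i\). There is no real obstacle here: the statement is a direct consequence of the characterization theorem.
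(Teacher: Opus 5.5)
Your proof is correct and follows the route the paper intends: the paper presents the corollary as an immediate consequence of \Cref{thm:characterization-non-uniform-gen}, i.e., writing a countable class as a nested union of finite classes, each of finite Closure dimension. Your direct bound on $\cC(\cH_i)$ argues that any witness set lies inside the finite intersection of one of finitely many version spaces. This makes the argument self-contained, since it does not pass through \Cref{cor:unif-finite}, which the paper itself obtains from \Cref{thm:characterization-uniform-gen}.
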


\noindent However, although all countable classes are non-uniformly generatable, there is an insurmountable computational barrier.

\begin{theorem} [Non-uniform generation requires more than just membership queries, \citealt{charikar2024exploring}]
\label{thm:lb-non-unif-mq-only}
    A (deterministic) algorithm that non-uniformly generates all countable classes cannot be implemented using membership queries alone.
\end{theorem}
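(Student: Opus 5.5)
The statement concerns a deterministic algorithm that only makes membership queries and purports to non-uniformly generate every countable class. I will derive a contradiction by diagonalization: run the algorithm against a class that the adversary builds online. This follows the simulation template already used for \Cref{thm:lower-bound-proper} via \Cref{alg:lower-bound}. Fix such a generator \(\cG\) and define \(\cH=\bc{h_1,h_2,\ldots}\) lazily through a function \(F(i,j)\). Whenever \(\cG\) queries a pair \((i,j)\) that has not yet been assigned, the adversary fixes the entire row \(F(\cdot,j)\) at once. New instances are assigned to all hypotheses by default, and \(I,J\) track the rectangle queried so far. As in \Cref{lemma:soundness}, I will argue that \(F\) is total recursive, that the encountered instances always form an initial segment of \(\bN\), and that every support is infinite. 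This makes \(\cH\) a legitimate countable UUS class, so \(\cG\) must succeed on it with some sample complexity \(d^\star_h\) for every \(h\).

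The key difference from generation in the limit is that non-uniformity fixes a number \(d^\star_h\) independent of the sequence. I will exploit this with a reference hypothesis \(h_1\), in the spirit of the proof of \Cref{thm:hardness-non-unif}. The adversary feeds \(\cG\) distinct elements that are all in \(\supp{h_1}\), keeping them in every support. Since \(\cG\) non-uniformly generates \(h_1\), after \(d^\star_{h_1}\) distinct examples every output must be a fresh element of \(\supp{h_1}\). Here \(d^\star_{h_1}\) is unknown to the adversary, so instead of waiting for it the adversary reacts to each output.

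Whenever \(\cG\) outputs some \(o_t\) beyond the current rectangle (\(o_t>J\)), the value of \(o_t\) has not been assigned. The adversary then creates a fresh hypothesis index \(i'>I\) and commits \(o_t\notin\supp{h_{i'}}\). It also sets \(\supp{h_{i'}}\) to contain every example shown so far and every instance assigned afterwards by default. This continues until \(\cG\) has made another mistake for \(h_{i'}\) after \(\abs{\bc{x_1,\ldots,x_t}}\) exceeds \(d^\star_{h_{i'}}\). The adversary cannot compute \(d^\star_{h_{i'}}\) either, so instead I will interleave infinitely many such trap hypotheses. Each trap \(h_{i'}\) agrees with \(h_1\) except on the outputs it is designed to exclude. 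The adversary keeps excluding from \(h_{i'}\) every later output that lands on an unqueried instance.

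If \(\cG\) outputs an already-queried instance \(o_t\le J\) infinitely often, it must be in \(\supp{h_1}\). It must also be fresh, and it falls inside a region \(\cG\) has explored; the adversary assigns those rows generously, so no mistake is immediately forced there. This is the main obstacle. Under membership queries alone, \(\cG\) can in principle certify an output by querying it against finitely many hypotheses. The trap must therefore be placed at an index larger than any queried so far, so that \(\cG\) has no information distinguishing \(h_{i'}\) from \(h_1\) on its outputs. Concretely, I would fix a single trap \(h_{i'}\) with \(i'>I\) at a given stage. I would then show that the example sequence lies in both \(\supp{h_1}\) and \(\supp{h_{i'}}\) and eventually has more than \(\max(d^\star_{h_1},d^\star_{h_{i'}})\) distinct elements. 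Every subsequent output must then lie in \(\supp{h_1}\cap\supp{h_{i'}}\). For the contradiction, I will arrange that \(h_{i'}\) excludes all future outputs. Since \(\cG\) never queries index \(i'\) before committing its outputs, the adversary can lazily set \(F(i',o_t)=0\) for every later output \(o_t\) while keeping \(\supp{h_{i'}}\) infinite via the default instances \(c_t\). To make the argument work, \(\cG\) must not query \(i'\) at all. If it does, the adversary restarts with a new trap. I would need to show that only finitely many restarts can happen before the \(d^\star\) thresholds are crossed, or else that infinitely many restarts already force infinitely many mistakes for \(h_1\), by diagonalizing as in the first case of \Cref{thm:lower-bound-proper}.
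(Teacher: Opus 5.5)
The paper does not prove this statement; it is quoted from \citet{charikar2024exploring}, where it already holds for classes of just two languages. Taken on its own terms, your plan has a genuine gap, and it sits at the step you call the main obstacle. The default you borrow from \Cref{alg:lower-bound}, that every newly encountered instance enters every support, is harmless for proper generation but fatal for element-based generation. Consider the generator that in each round queries $(1,M)$ for some $M$ exceeding every instance it has seen, queried or output, and then outputs $M$. It always emits an unshown element of every support, so it never errs against your adversary.

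The trap repair does not close this gap. Setting $F(i',o_t)=0$ for an already-queried $o_t$ contradicts your own rule that the row $F(\cdot,o_t)$ was fixed in full when first touched. Even with entrywise laziness, take a generator that at round $t$ checks its candidate against $h_1,\ldots,h_t$. It eventually touches every trap index, so restarts occur infinitely often. Under your restart rule each trap is then wrong only finitely often, which a non-uniform generator is allowed, since $d^\star_{h_i}$ may grow with $i$. Meanwhile $h_1$ is never wrong, so no contradiction arises. The fallback to the diagonalization of \Cref{thm:lower-bound-proper} has no counterpart here. There, a wrong proper output $h_{i_t}\neq h_1$ can be handed a private instance $d_t$; here, the output is an element that $\cG$ may already have certified to lie in $\supp{h_1}$.

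The missing idea is to keep the candidate targets fixed and finite, and to answer queries so that no unshown instance ever lies in all of them. Two hypotheses suffice. Maintain the counter $J$ as in \Cref{alg:lower-bound}. Whenever the $k$-th query $(i,j)$ of a round pushes past $J$, assign each new instance up to $\max\bc{j,k}$ to exactly one of $\supp{h_1},\supp{h_2}$, alternating. Do the same, up to $o_t$, when an output $o_t$ pushes past $J$. In each round, show a brand-new instance $x_t\coloneqq J+1$ placed in both supports.

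Then $\supp{h_1}\cap\supp{h_2}=\bc{x_1,x_2,\ldots}$. Every output is therefore either a repeat or outside the intersection, i.e., a mistake for $h_1$ or for $h_2$, and by pigeonhole some fixed $h_i$ is wrong infinitely often. Non-uniform generation must succeed on every stream from $\supp{h_i}$, not only on enumerations. So $(x_t)_{t\ge1}$ is a legitimate stream for both targets, with $t$ distinct elements at time $t$. This contradicts the guarantee after $d^\star_{h_i}$ examples, exactly as in the proof of \Cref{thm:hardness-non-unif}, without ever needing to know $d^\star_{h_i}$.

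Alternation keeps both supports infinite even if $\cG$ never halts in some round, and the counter makes $F$ total recursive as in \Cref{lemma:soundness}. Finally, the stream enumerates neither support, which is why this does not clash with \Cref{thm:mq-only-gen-in-the-lim}.
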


The situation changes under generatability in the limit, where the following positive result holds for countable classes.

\begin{theorem}[All countable classes are generatable in the limit using only membership queries, \citealt{kleinberg2024language}] \label{thm:mq-only-gen-in-the-lim}
    There exists a generator that generates all countable classes in the limit using only membership queries.
\end{theorem}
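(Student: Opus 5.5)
The statement to prove is \Cref{thm:mq-only-gen-in-the-lim}, the result of \citet{kleinberg2024language} that a single membership-query-only generator generates every countable class in the limit. I would obtain it by specializing \Cref{thm:limit_replay_mq_only} rather than reproving it from scratch. The argument rests on two facts. First, every enumeration of \(\supp{h}\) in the standard sense is also an enumeration with replay for \(h\) and any generator: each \(x_t\) lies in \(\supp{h}\), so the disjunction in \Cref{def:replay-seq} holds trivially, and every element of \(\supp{h}\) still appears. Second, the replay success criterion, namely \(\cG(x_{1:s})\in\supp{h}\setminus\bc{x_1,\ldots,x_s}\) for all large \(s\), is literally the same as the standard criterion of \Cref{def:gen-in-the-lim}.

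Concretely, I would take the generator \textsc{WP} (\Cref{alg:computable-wp}). Given any countable class \(\cH=\bc{h_1,h_2,\ldots}\) with the UUS property over \(\cX\simeq\bN\), I would run \textsc{WP} on it. By \Cref{lem:wp-terminates}, each round finishes after finitely many membership queries, so \textsc{WP} is a well-defined membership-query-only procedure. Fix a target \(h\in\cH\) and a standard enumeration \(\br{x_t}_{t\ge1}\) of \(\supp{h}\). By the first fact, this is an enumeration with replay for \(h\) and \textsc{WP}. \Cref{lem:wp-eventual-valid}, or equivalently \Cref{thm:limit_replay_mq_only}, then supplies a finite \(t^\star\) after which every output is valid and novel, which is exactly success in the sense of \Cref{def:gen-in-the-lim}.

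One subtlety needs checking. In the standard game an example \(x_t\) might coincide with an earlier output \(o_s\) even though it was not intended as a replay, and \textsc{WP} would then leave it out of the sure set \(S_t\). This does no harm: the proofs of \Cref{lem:wp-eventual-critical,lem:wp-eventual-valid} only use that \(S_t\subseteq\supp{h}\) and that every example lies in \(S_t\cup O_{t-1}\), and both properties hold in either interpretation. The same observation also explains why replay does not hurt here.

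I expect no real obstacle, since the genuine work is already contained in the replay lemmas. The only point to state explicitly is that uniformity is preserved: the same algorithm, taking only the class enumeration as input, works for every countable class simultaneously, as \Cref{thm:mq-only-gen-in-the-lim} requires. As an alternative route, one could reproduce the original critical-hypothesis argument of \citet{kleinberg2024language} directly, but the reduction above is shorter and relies only on results already established.
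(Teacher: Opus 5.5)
Your reduction is correct, but it is not the paper's route: the paper gives no proof of \Cref{thm:mq-only-gen-in-the-lim} and imports it from \citet{kleinberg2024language}. Their argument works directly in the standard game. It uses prefix-based ``critical'' languages to stay within membership queries, and it needs neither a sure set nor witness protection.

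You instead derive the statement as a corollary of the paper's own \Cref{thm:limit_replay_mq_only}. This is the same inclusion argument the paper uses for the easy direction of \Cref{thm:uniform_with_replay}: a standard enumeration of \(\supp{h}\) is an enumeration with replay for \(h\) and \textsc{WP}, and the two success criteria coincide. The derivation is non-circular, because \Cref{lem:wp-eventual-critical,lem:wp-terminates,lem:wp-eventual-valid} are proved from scratch and never use the result of \citet{kleinberg2024language}.

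Your route gives a proof internal to the paper and a slightly stronger conclusion: a single algorithm that succeeds in both the standard and the replay game. Its cost is that it rests on the heavier witness-protection machinery. The original argument is simpler, and its generator only has to avoid the examples seen so far, with no witness set to protect.

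Your ``subtlety'' paragraph is unnecessary. A standard enumeration literally satisfies the replay definition, so \Cref{thm:limit_replay_mq_only} applies verbatim and nothing needs re-checking. Your summary of what the lemmas use is also incomplete: \Cref{lem:wp-eventual-critical} additionally needs that each witness \(w_j\) is eventually presented and is never output while \(j\) is active. You should drop that paragraph rather than rely on it.
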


Finally, in the \emph{proper} setting, augmenting the same algorithm with subset queries yields the following result. 

\begin{theorem}[All countable classes are properly generatable in the limit, \citealt{kleinberg2024language}] 
\label{thm:proper-gen-in-the-lim}
    There exists a proper generator that properly generates all countable classes in the limit using membership queries and subset queries.
\end{theorem}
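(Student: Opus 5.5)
The plan is to use the ``critical hypothesis'' strategy of \citet{kleinberg2024language}, the subset-query analogue of the prefix-based criticality in Definition~\ref{def:tm-critical-replay}, but without replay and with exact inclusions. Fix an enumeration \(\cH=\bc{h_1,h_2,\ldots}\). At round \(t\), let \(S_t\coloneqq\bc{x_1,\ldots,x_t}\) and let the active set be \(V_t\coloneqq\bc{i\le t : S_t\subseteq\supp{h_i}}\). Computing \(V_t\) takes only finitely many membership queries, since \(S_t\) is finite and \(i\le t\).

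\textbf{Criticality and the output rule.} Call \(h_n\) \emph{\(t\)-critical} if
\begin{enumerate}
\item \(n\in V_t\), and
\item \(\supp{h_n}\subseteq\supp{h_i}\) for every \(i\in V_t\) with \(i<n\).
\end{enumerate}
Each such check uses at most \(t\) subset queries. The proper generator outputs \(\hat h_t\coloneqq h_{n_t}\), where \(n_t\) is the largest index of a \(t\)-critical hypothesis. If \(V_t=\emptyset\), it outputs an arbitrary hypothesis, say \(h_1\). When \(V_t\neq\emptyset\), its minimal element is vacuously critical, so \(n_t\) is well defined. Each round therefore halts after finitely many queries.

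\textbf{Correctness.} Fix a target \(h^\star\) and an enumeration \((x_t)_{t\ge1}\) of \(\supp{h^\star}\). Let \(z\) be the first index with \(h_z=h^\star\). Then \(z\in V_t\) for all \(t\ge z\), since every example lies in \(\supp{h_z}\).

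The key step is to show that \(h_z\) is \(t\)-critical for all large \(t\). Let
\[
B\coloneqq\bc{j<z : \supp{h_z}\not\subseteq\supp{h_j}}.
\]
For each \(j\in B\), pick \(w_j\in\supp{h_z}\setminus\supp{h_j}\). The enumeration reveals \(w_j\) at some finite time \(t_j\), and from then on \(j\notin V_t\). Set \(t^\star\coloneqq\max\bc{z,\max_{j\in B}t_j}\), which is finite because \(B\) is finite. For \(t\ge t^\star\), every \(j\in V_t\) with \(j<z\) satisfies \(\supp{h_z}\subseteq\supp{h_j}\), so \(h_z\) is \(t\)-critical.

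\textbf{Conclusion.} For \(t\ge t^\star\) we have \(n_t\ge z\). If \(n_t=z\), the claim is immediate. If \(n_t>z\), condition 2 for \(h_{n_t}\), applied with \(i=z\in V_t\), gives \(\supp{h_{n_t}}\subseteq\supp{h_z}=\supp{h^\star}\). Hence \(\supp{\hat h_t}\subseteq\supp{h^\star}\) for all \(t\ge t^\star\), which is proper generation in the limit.

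The only delicate point is the eviction argument for \(j\in B\). It relies on the enumeration covering all of \(\supp{h^\star}\); a mere subset of the support would not suffice. With enumeration given, the argument is straightforward, so I do not expect a real obstacle beyond making that step precise.
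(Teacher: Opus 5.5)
Your proof is correct and is essentially the critical-hypothesis argument of \citet{kleinberg2024language} that the paper cites for this theorem; the paper does not reprove it, but Lemmas~\ref{lem:wp-eventual-critical} and~\ref{lem:wp-eventual-valid} run the same template, with prefix inclusions and replay in place of exact subset queries. You evict the finitely many earlier indices \(j<z\) with \(\supp{h_z}\not\subseteq\supp{h_j}\) via witnesses from the enumeration, and then apply criticality of the largest critical index against \(z\in V_t\), which is exactly the right argument.
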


%% file: ref.bib
@string{NEURIPS  = {{Advances in Neural Information Processing Systems~(NeurIPS)}}}

@string{ICML     = {{International Conference on Machine Learning~(ICML)}}}

@string{ICLR     = {{International Conference on Learning Representations~(ICLR)}}}

@string{COLT     = {{Conference on Learning Theory~(COLT)}}}

@string{SODA     = {{Symposium on Discrete Algorithms~(SODA)}}}

@string{FOCS     = {{IEEE Symposium on Foundations of Computer Science~(FOCS)}}}

@string{STOC     = {{ACM Symposium on Theory of Computing~(STOC)}}}

@string{ALT      = {{International Conference on Algorithmic Learning Theory~(ALT)}}}

@string{COLM     = {{Conference on Language Modeling~(COLM)}}}

@string{NATURE   = {{Nature}}}

@string{CACM     = {{Communications of the ACM}}}

@string{CL       = {{Computational Linguistics}}}

@string{INFOCTRL = {{Information and Control}}}

@string{ML       = {{Machine Learning}}}

@inproceedings{kleinberg2024language,
  author    = {Kleinberg, Jon and Mullainathan, Sendhil},
  booktitle = NEURIPS,
  title     = {Language generation in the limit},
  year      = {2024}
}

@inproceedings{li2024generation,
  author    = {Vinod Raman and Jiaxun Li and Ambuj Tewari},
  booktitle = COLT,
  title     = {Generation through the lens of learning theory},
  year      = {2025}
}

@inproceedings{raman2025generation,
  author    = {Ananth Raman and Vinod Raman},
  booktitle = ICML,
  title     = {Generation from Noisy Examples},
  year      = {2025}
}

@inproceedings{bai2025language,
  author    = {Bai, Yannan and Panigrahi, Debmalya and Zhang, Ian},
  booktitle = SODA,
  title     = {Language generation in the limit: Noise, loss, and feedback},
  year      = {2026}
}

@inproceedings{mehrotra2025language,
  author    = {Mehrotra, Anay and Velegkas, Grigoris and Yu, Xifan and Zhou, Felix},
  booktitle = COLT,
  title     = {Language Generation with Infinite Contamination},
  year      = {2026}
}

@inproceedings{kleinberg2025density,
  author    = {Kleinberg, Jon and Wei, Fan},
  booktitle = FOCS,
  title     = {Density Measures for Language Generation},
  year      = {2025}
}

@inproceedings{charikar2024exploring,
  author    = {Moses Charikar and Chirag Pabbaraju},
  booktitle = COLT,
  title     = {Exploring Facets of Language Generation in the Limit},
  year      = {2025}
}

@inproceedings{peale2025representative,
  author    = {Charlotte Peale and Vinod Raman and Omer Reingold},
  booktitle = ICML,
  title     = {Representative Language Generation},
  year      = {2025}
}

@inproceedings{kalavasis2025limits,
  author    = {Kalavasis, Alkis and Mehrotra, Anay and Velegkas, Grigoris},
  booktitle = STOC,
  title     = {On the Limits of Language Generation: Trade-Offs between Hallucination and Mode-Collapse},
  year      = {2025}
}

@inproceedings{kalavasis2412characterizations,
  author    = {Alkis Kalavasis and Anay Mehrotra and Grigoris Velegkas},
  booktitle = ALT,
  title     = {On Characterizations for Language Generation: Interplay of Hallucinations, Breadth, and Stability},
  year      = {2026}
}

@inproceedings{hanneke2025union,
  author    = {Steve Hanneke and Amin Karbasi and Anay Mehrotra and Grigoris Velegkas},
  booktitle = NEURIPS,
  title     = {On Union-Closedness of Language Generation},
  year      = {2025}
}

@article{gold1967language,
  author    = {Gold, E. Mark},
  journal   = INFOCTRL,
  title     = {Language identification in the limit},
  year      = {1967}
}

@article{angluin1980inductive,
  author    = {Angluin, Dana},
  journal   = INFOCTRL,
  title     = {Inductive inference of formal languages from positive data},
  year      = {1980}
}

@inproceedings{dmitriev2026learning,
  author    = {Dmitriev, Daniil and Franck, Harald Eskelund and Heinzler, Carolin and Sanyal, Amartya},
  booktitle = SODA,
  title     = {Learning in an Echo Chamber: Online Learning with Replay Adversary},
  year      = {2026}
}

@book{shalev2014understanding,
  author    = {Shalev-Shwartz, Shai and Ben-David, Shai},
  title     = {Understanding machine learning: From theory to algorithms},
  year      = {2014},
  publisher = {Cambridge University Press},
}

@article{vapnik2015uniform,
  author  = {Vapnik, V. N. and Chervonenkis, A. Ya.},
  title   = {On the Uniform Convergence of Relative Frequencies of Events to Their Probabilities},
  journal = {Theory of Probability \& Its Applications},
  year    = {1971},
}

@article{littlestone1988learning,
  author    = {Littlestone, Nick},
  journal   = ML,
  title     = {Learning quickly when irrelevant attributes abound: A new linear-threshold algorithm},
  year      = {1988}
}

@article{shumailov2024ai,
  author    = {Shumailov, Ilia and Shumaylov, Zakhar and Zhao, Yiren and Papernot, Nicolas and Anderson, Ross and Gal, Yarin},
  journal   = NATURE,
  title     = {{AI} models collapse when trained on recursively generated data},
  year      = {2024}
}

@article{shumailov2023curse,
  author    = {Shumailov, Ilia and Shumaylov, Zakhar and Zhao, Yiren and Gal, Yarin and Papernot, Nicolas and Anderson, Ross},
  journal   = {arXiv:2305.17493},
  title     = {The Curse of Recursion: Training on Generated Data Makes Models Forget},
  year      = {2023}
}

@inproceedings{alemohammad2023self,
  author    = {Alemohammad, Sina and Casco-Rodriguez, Josue and Luzi, Lorenzo and Humayun, Ahmed Imtiaz and Babaei, Hossein and LeJeune, Daniel and Siahkoohi, Ali and Baraniuk, Richard},
  booktitle = ICLR,
  title     = {Self-consuming generative models go {MAD}},
  year      = {2024}
}

@inproceedings{bertrand2023stability,
  author    = {Quentin Bertrand and Avishek Joey Bose and Alexandre Duplessis and Marco Jiralerspong and Gauthier Gidel},
  booktitle = ICLR,
  title     = {On the Stability of Iterative Retraining of Generative Models on their own Data},
  year      = {2024}
}

@inproceedings{dohmatob2024strong,
  author    = {Elvis Dohmatob and Yunzhen Feng and Arjun Subramonian and Julia Kempe},
  booktitle = ICLR,
  title     = {Strong Model Collapse},
  year      = {2025}
}

@inproceedings{gerstgrasser2024model,
  author    = {Matthias Gerstgrasser and Rylan Schaeffer and Apratim Dey and Rafael Rafailov and Henry Sleight and John Hughes and Tomasz Korbak and Rajashree Agrawal and Dhruv Pai and Andrey Gromov and Daniel A. Roberts and Diyi Yang and David Donoho and Sanmi Koyejo},
  booktitle = COLM,
  title     = {Is Model Collapse Inevitable? {Breaking} the Curse of Recursion by Accumulating Real and Synthetic Data},
  year      = {2024}
}

@inproceedings{dohmatob2024tale,
  author    = {Elvis Dohmatob and Yunzhen Feng and Pu Yang and François Charton and Julia Kempe},
  booktitle = ICML,
  title     = {A Tale of Tails: Model Collapse as a Change of Scaling Laws},
  year      = {2024}
}

@inproceedings{seddik2024bad,
  author    = {Seddik, Mohamed El Amine and Chen, Suei-Wen and Hayou, Soufiane and Youssef, Pierre and Debbah, Merouane},
  booktitle = COLM,
  title     = {How bad is training on synthetic data? {A} statistical analysis of language model collapse},
  year      = {2024}
}

@inproceedings{martinez2023towards,
  title        = {Towards understanding the interplay of generative artificial intelligence and the internet},
  author       = {Mart{\'\i}nez, Gonzalo and Watson, Lauren and Reviriego, Pedro and Hern{\'a}ndez, Jos{\'e} Alberto and Juarez, Marc and Sarkar, Rik},
  booktitle    = {International Workshop on Epistemic Uncertainty in Artificial Intelligence},
  year         = {2023},
  organization = {Springer},
}

@article{briesch2023large,
  title   = {Large language models suffer from their own output: An analysis of the self-consuming training loop},
  author  = {Briesch, Martin and Sobania, Dominik and Rothlauf, Franz},
  journal = {arXiv:2311.16822},
  year    = {2023},
}

@article{zhang2024regurgitative,
  title   = {Regurgitative training: The value of real data in training large language models},
  author  = {Zhang, Jinghui and Qiao, Dandan and Yang, Mochen and Wei, Qiang},
  journal = {arXiv:2407.12835},
  year    = {2024}
}

@inproceedings{bohacek2023nepotistically,
  title     = {Nepotistically trained generative image models collapse},
  author    = {Bohacek, Maty and Farid, Hany},
  booktitle = {ICLR Workshop on Navigating and Addressing Data Problems for Foundation Models},
  year      = {2025},
}

@inproceedings{marchi2024heat,
  title     = {Heat death of generative models in closed-loop learning},
  author    = {Marchi, Matteo and Soatto, Stefano and Chaudhari, Pratik and Tabuada, Paulo},
  booktitle = {IEEE Conference on Decision and Control (CDC)},
  year      = {2024},
}

@article{suresh2024rate,
  title   = {Rate of model collapse in recursive training},
  author  = {Suresh, Ananda Theertha and Thangaraj, Andrew and Khandavally, Aditya Nanda Kishore},
  journal = {arXiv:2412.17646},
  year    = {2024}
}

@inproceedings{barzilai2026models,
  title      = {When models don’t collapse: On the consistency of iterative {MLE}},
  author     = {Barzilai, Daniel and Shamir, Ohad},
  booktitle  = NEURIPS,
  year       = {2025}
}

@article{kanabar2025model,
  title   = {Model non-collapse: Minimax bounds for recursive discrete distribution estimation},
  author  = {Kanabar, Millen and Gastpar, Michael},
  journal = {IEEE Transactions on Information Theory},
  year    = {2025},
}

@inproceedings{dohmatob2024model,
  title     = {Model collapse demystified: The case of regression},
  author    = {Dohmatob, Elvis and Feng, Yunzhen and Kempe, Julia},
  booktitle = NEURIPS,
  year      = {2024}
}

@inproceedings{fu2024towards,
  title     = {Towards theoretical understandings of self-consuming generative models},
  author    ={Fu, Shi and Zhang, Sen and Wang, Yingjie and Tian, Xinmei and Tao, Dacheng},
  booktitle = ICML,
  year      = {2024}
}

@article{dey2024universality,
  title    = {Universality of the $\pi^2/6$ Pathway in Avoiding Model Collapse},
  author  = {Dey, Apratim and Donoho, David},
  journal = {arXiv:2410.22812},
  year    = {2024}
}

@inproceedings{feng2025beyond,
  title     = {Beyond model collapse: Scaling up with synthesized data requires verification},
  author    = {Feng, Yunzhen and Dohmatob, Elvis and Yang, Pu and Charton, François and Kempe, Julia},
  booktitle = ICLR,
  year      = {2025}
}

@inproceedings{fu2025theoretical,
  title     = {A theoretical perspective: How to prevent model collapse in self-consuming training loops},
  author    = {Fu, Shi and Wang, Yingjie and Chen, Yuzhu and Tian, Xinmei and Tao, Dacheng},
  booktitle = ICLR,
  year      = {2025}
}

@inproceedings{kirchenbauer2023watermark,
  author    = {Kirchenbauer, John and Geiping, Jonas and Wen, Yuxin and Katz, Jonathan and Miers, Ian and Goldstein, Tom},
  booktitle = ICML,
  title     = {A watermark for large language models},
  year      = {2023}
}

@inproceedings{kirchenbauer2023reliability,
  author    = {John Kirchenbauer and Jonas Geiping and Yuxin Wen and Manli Shu and Khalid Saifullah and Kezhi Kong and Kasun Fernando and Aniruddha Saha and Micah Goldblum and Tom Goldstein},
  booktitle = ICLR,
  title     = {On the Reliability of Watermarks for Large Language Models},
  year      = {2024}
}

@article{dathathri2024scalable,
  author    = {Dathathri, Sumanth and See, Abigail and Ghaisas, Sumedh and Huang, Po-Sen and McAdam, Rob and Welbl, Johannes and Bachani, Vandana and Kaskasoli, Alex and Stanforth, Robert and Matejovicova, Tatiana and Hayes, Jamie and Vyas, Nidhi and Merey, Majd Al and Brown-Cohen, Jonah and Bunel, Rudy and Balle, Borja and Cemgil, Taylan and Ahmed, Zahra and Stacpoole, Kitty and Shumailov, Ilia and Baetu, Ciprian and Gowal, Sven and Hassabis, Demis and Kohli, Pushmeet},
  journal   = NATURE,
  title     = {Scalable watermarking for identifying large language model outputs},
  year      = {2024}
}

@article{tang2024science,
  author    = {Tang, Ruixiang and Chuang, Yu-Neng and Hu, Xia},
  journal   = CACM,
  title     = {The science of detecting {LLM}-generated text},
  year      = {2024}
}

@article{wu2025survey,
  author    = {Wu, Junchao and Yang, Shu and Zhan, Runzhe and Yuan, Yulin and Chao, Lidia Sam and Wong, Derek Fai},
  journal   = CL,
  title     = {A survey on {LLM-generated} text detection: Necessity, methods, and future directions},
  year      = {2025}
}

@article{sadasivan2023can,
  author    = {Vinu Sankar Sadasivan and Aounon Kumar and Sriram Balasubramanian and Wenxiao Wang and Soheil Feizi},
  journal   = {arXiv:2303.11156},
  title     = {Can {AI}-Generated Text be Reliably Detected?},
  year      = {2023}
}

@inproceedings{mitchell2023detectgpt,
  author    = {Mitchell, Eric and Lee, Yoonho and Khazatsky, Alexander and Manning, Christopher D. and Finn, Chelsea},
  booktitle = ICML,
  title     = {{DetectGPT}: Zero-shot machine-generated text detection using probability curvature},
  year      = {2023}
}
